\documentclass{article}
\usepackage[final]{neurips_2020}

\bibliographystyle{plainnat}
\setcitestyle{authoryear}

\usepackage[utf8]{inputenc} 
\usepackage[T1]{fontenc}    
\usepackage[colorlinks=true, allcolors=blue]{hyperref}       
\usepackage{url}            
\usepackage{booktabs}       
\usepackage{amsfonts}       
\usepackage{nicefrac}       
\usepackage{microtype}      
\usepackage[dvipsnames]{xcolor}
\usepackage{amsmath,amsthm,amssymb}
\usepackage{graphicx}
\usepackage{cleveref}
\usepackage{subcaption}
\usepackage{enumitem}
\usepackage{algorithm}
\usepackage{algorithmic}
\usepackage{bbm}
\usepackage{thmtools}

\usepackage{comment}

\theoremstyle{plain}
\newtheorem{theorem}{Theorem}[section]
\newtheorem{lemma}[theorem]{Lemma}
\newtheorem{remark}[theorem]{Remark}

\theoremstyle{definition}

\newtheorem{observation}[theorem]{Observation}
\newtheorem{conjecture}[theorem]{Conjecture}


\usepackage{amsmath,amsfonts,bm}




\def\Figref#1{Figure~\ref{#1}}













\def\1{\bm{1}}

\def\eps{{\epsilon}}







\def\vzero{{\bm{0}}}

\def\vmu{{\bm{\mu}}}
\def\vtheta{{\bm{\theta}}}

\def\vxi{{\bm{\xi}}}

\makeatletter
\newcommand{\ve}{\@ifnextchar\bgroup{\velong}{{\bm{e}}}}
\newcommand{\velong}[1]{{\bm{#1}}}
\makeatother

\def\vg{{\bm{g}}}

\def\vm{{\bm{m}}}

\def\vv{{\bm{v}}}
\def\vw{{\bm{w}}}
\def\vx{{\bm{x}}}



\def\mB{{\bm{B}}}

\def\mG{{\bm{G}}}

\def\mI{{\bm{I}}}

\def\mU{{\bm{U}}}

\def\mW{{\bm{W}}}
\def\mX{{\bm{X}}}

\def\mSigma{{\bm{\Sigma}}}

\DeclareMathAlphabet{\mathsfit}{\encodingdefault}{\sfdefault}{m}{sl}
\SetMathAlphabet{\mathsfit}{bold}{\encodingdefault}{\sfdefault}{bx}{n}











\newcommand{\E}{\mathbb{E}}
\newcommand{\R}{\mathbb{R}}


\newcommand{\normltwo}{L^2}


\DeclareMathOperator{\Tr}{Tr}

\newcommand{\adamcolor}{Thistle}
\newcommand{\ouradamcolor}{SpringGreen}
\newcommand{\adam}[1]{\colorbox{\adamcolor}{$\displaystyle #1$}}
\newcommand{\adamtext}[1]{\colorbox{\adamcolor}{#1}}
\newcommand{\ouradam}[1]{\colorbox{\ouradamcolor}{$\displaystyle #1$}}
\newcommand{\ouradamtext}[1]{\colorbox{\ouradamcolor}{#1}}


\newcommand{\BN}{\mathrm{BN}}

\newcommand{\dotp}[2]{\left<#1, #2\right>}

\newcommand{\Loss}{\mathcal{L}}

\newcommand{\Batch}{\mathcal{B}}


\newcommand{\norm}[1]{\left\| #1 \right\|}
\newcommand{\normtwo}[1]{\left\| #1 \right\|_2}
\newcommand{\normtwosm}[1]{\| #1 \|_2}

\newcommand{\dd}{\textup{\textrm{d}}}

\newcommand{\myparagraph}[1]{\paragraph{#1}}

\def\vW{{\bm{W}}}
\def\vB{{\bm{B}}}

\newcommand{\hW}{\overline{\vW}}
\newcommand{\hvw}{\bar{\vw}}

\newcommand{\green}[1]{{\color{Green} #1}}
\newcommand{\red}[1]{{\color{red} #1}}
\newcommand{\blue}[1]{{\color{NavyBlue} #1}}
\newcommand{\orange}[1]{{\color{Orange} #1}}
\newcommand{\purple}[1]{{\color{Purple} #1}}

\def\eqref#1{(\ref{#1})}

\newcommand{\dTV}{d_{\mathrm{TV}}}

\title{Reconciling Modern Deep Learning with Traditional Optimization Analyses: The Intrinsic Learning Rate}

\author{%
Zhiyuan Li\thanks{These authors contribute equally.} \\
  Princeton University\\
  \texttt{zhiyuanli@cs.princeton.edu} \\
 \And
Kaifeng Lyu \footnotemark[1]\\
  Tsinghua University\\
  \texttt{vfleaking@gmail.com} \\
\And
Sanjeev Arora\\
  Princeton University \& IAS\\
  \texttt{arora@cs.princeton.edu}
}

\begin{document}

\maketitle

\begin{abstract}

Recent works (e.g., \citep{li2020exp}) suggest that the use of popular normalization schemes  (including Batch Normalization) in today's deep learning can move it far from a traditional optimization viewpoint, e.g., use of exponentially increasing learning rates. The current paper  highlights other ways in which behavior of normalized nets departs from traditional viewpoints, and then initiates a formal framework for studying their mathematics via suitable adaptation of  the conventional framework  namely, modeling SGD-induced training trajectory via a suitable stochastic differential equation (SDE) with a noise term that captures gradient noise. This yields: (a) A new \textquotedblleft intrinsic learning rate\textquotedblright\ parameter that is the product of the normal learning rate $\eta$ and weight decay factor $\lambda$. Analysis of the SDE shows how the effective speed of learning varies and equilibrates over time under the control of intrinsic LR. (b) A challenge---via theory and experiments---to popular belief that good generalization requires large learning rates at the start of training. 
 (c) New experiments, backed by mathematical intuition, suggesting the number of steps to equilibrium (in function space) scales as the inverse of the intrinsic learning rate,  as opposed to the  exponential time convergence bound implied by SDE analysis. We name it the \emph{Fast Equilibrium Conjecture} and suggest it holds the key to why  Batch Normalization is effective. 
\end{abstract}

\section{Introduction}
The training of modern neural networks involves Stochastic Gradient Descent (SGD) with an appropriate learning rate schedule. The formula of SGD with weight decay can be written as:
\[
	\vw_{t+1} \gets (1 - \eta_t\lambda) \vw_t - \eta_t \nabla \Loss_t(\vw_t),
\]
where $\lambda$ is the weight decay factor (or $\normltwo$-regularization coefficient), $\eta_t$ and $\nabla \Loss_t(\vw_t)$ are the learning rate and batch gradient at the $t$-th iteration.

Traditional analysis shows that SGD approaches a stationary point of the training loss if the learning rate is set to be sufficiently small depending on the smoothness constant and noise scale. In this viewpoint, if we reduce the learning rate by a factor $10$, the end result is the same, and just takes $10$ times as many steps. SGD with very tiny step sizes can be thought of as Gradient Descent (GD) (i.e., gradient descent with full gradient), which in the limit of infinitesimal step size approaches Gradient Flow (GF). 

However, it is well-known that using only small learning rates or large batch sizes (while fixing other hype-parameters) may lead to worse generalization~\citep{bengio2012practical,keskar2016large}. From this one concludes that finite (not too small) learning rate ---alternatively, noise in the gradient estimate, or small batch sizes--- play an important role in generalization, and many authors have suggested that the noise helps avoid sharp minima~\citep{hochreiter1997flat,keskar2016large,li2018visualizing,izmailov2018averaging,he2019asym}. Formal understanding of the effect  involves modeling SGD via a Stochastic Differential Equation (SDE) in the continuous time limit~\citep{li2019stochastic}:
\[
	d\vW_t = - \eta(t) \lambda \vW_t dt - \eta(t) \nabla \Loss(\vW_t) dt + \eta(t) \mSigma_{\vW_t}^{1/2} d\vB_t,
\]
where $\mSigma_{\vw}$ is the covariance matrix of the noise at $\vW_t = \vw$. Several works have adopted this  SDE view and given some rigorous analysis of the effect of noise~\citep{smith2018a,chaudhari2018stochastic,shi2020learning}.

While this SDE view is well-established, we will note in this paper that the past works (both theory and experiments) often draw intuition from shallow nets and do not help understand modern architectures, which can be very deep and crucially rely upon  normalization schemes such as Batch Normalization (BN)~\citep{ioffe2015batch}, Group Normalization (GN)~\citep{wu2018group}, Weight Normalization (WN)~\citep{salimans2016weight}. We will discuss in \Cref{sec:incompat} that these normalization schemes are incompatible to the traditional view points in the following senses. First, normalization makes the loss provably non-smooth around origin, so GD could behave (and does behave, in  our experiments) significantly differently from its continuous counterpart, GF, if weight decay is turned on. For example, GD may oscillate between zero loss and high loss and thus cannot persistently achieve perfect interpolation. Second, there is experimental evidence suggesting that the above SDE may be far from mixing for normalized networks in normal training budgets. Lastly, assumptions about the noise in the gradient being a fixed Gaussian turn out to be unrealistic.

In this work, we incorporate effects of normalization in the SDE view to study the complex interaction between BN, weight decay, and learning rate schedule. We particularly focus on Step Decay, one of the most commonly-used learning rate schedules. Here the training process is divided into several phases $1, \dots, K$. In each phase $i$, the learning rate $\eta_t$ is kept as a constant $\bar{\eta}_i$, and the constants $\bar{\eta}_i$ are decreasing with the phase number $i$.
Our main experimental observation is the following one, which is formally stated as a conjecture in the context of SDE in \Cref{subsec:conjecture}.
\begin{observation}\label{obs:main}
	If trained for sufficiently long time during some phase $i$ ($1\le i\le K$), a neural network with BN and weight decay will eventually reach an equilibrium distribution in the function space. This equilibrium only depends on the product $\bar{\eta}_i\lambda$, and is independent of the history in the previous phases. Furthermore, the time that the neural net stays at this equilibrium will not affect its future performance.
\end{observation}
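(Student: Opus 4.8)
\emph{Proof proposal.}
The plan is to make the statement precise inside the SDE model of \Cref{subsec:conjecture} and reduce it to the ergodic theory of a diffusion on a compact manifold, using the scale invariance that Batch Normalization forces on the loss. Writing $f_{\vw}$ for the function computed with weights $\vw$, every BN layer is invariant to positive rescaling of its incoming weights, so $f_{c\vw}=f_{\vw}$ and hence $\Loss(c\vw)=\Loss(\vw)$ for all $c>0$. Differentiating gives $\dotp{\vw}{\nabla\Loss(\vw)}=0$ and $\nabla\Loss(c\vw)=c^{-1}\nabla\Loss(\vw)$, and the gradient-noise covariance inherits the matching homogeneity $\mSigma_{c\vw}=c^{-2}\mSigma_{\vw}$. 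Therefore the function-space trajectory is determined entirely by the direction $\vu_t:=\vW_t/\normtwo{\vW_t}$, and it suffices to track the law of $\vu_t$ on the unit sphere $\mathcal S$.

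First I would change variables to $(\rho_t,\vu_t)$ with $\rho_t:=\normtwo{\vW_t}^{2}$. Using $\dotp{\vW_t}{\nabla\Loss(\vW_t)}=0$ and the homogeneity of $\mSigma$, It\^{o}'s formula produces a radial equation of the schematic form $d\rho_t=\bigl(-2\bar\eta_i\lambda\,\rho_t+\bar\eta_i^{2}G(\vu_t)/\rho_t\bigr)dt+(\text{martingale})$ with $G(\vu):=\Tr\mSigma_{\vu}$, together with an angular equation $d\vu_t=-(\bar\eta_i/\rho_t)\,\mathrm{grad}_{\mathcal S}\Loss(\vu_t)\,dt+(\bar\eta_i/\rho_t)\,\Pi_{\vu_t}\mSigma_{\vu_t}^{1/2}\,d\vB_t+(\text{correction})$, where $\Pi_{\vu}$ projects onto $T_{\vu}\mathcal S$. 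The radial drift has a unique stable zero at $\rho_\star(\vu)^{2}=\tfrac{\bar\eta_i}{2\lambda}G(\vu)$, so $\rho_t$ is pulled toward scale $\sqrt{\bar\eta_i/\lambda}$; substituting this quasi-equilibrium value into the angular equation makes the effective step size governing $\vu_t$ equal to $\widetilde\eta:=\bar\eta_i/\rho_\star\asymp\sqrt{\bar\eta_i\lambda}$. After the time change $s=\widetilde\eta\,t$ the angular process becomes, to leading order, the autonomous diffusion $d\vv_s=-\mathrm{grad}_{\mathcal S}\Loss(\vv_s)\,ds+\sqrt{\widetilde\eta}\,\Pi_{\vv_s}\mSigma_{\vv_s}^{1/2}\,d\widetilde\vB_s$ on $\mathcal S$, whose generator depends on $\bar\eta_i$ and $\lambda$ only through the single scalar $\bar\eta_i\lambda$.

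Next I would invoke standard results for such a diffusion: $\mathcal S$ is compact, the coefficients are smooth away from the (avoided) origin, and under a H\"ormander-type nondegeneracy hypothesis on $\mSigma_{\vv}$ restricted to $T_{\vv}\mathcal S$ the process is hypoelliptic, hence admits a unique invariant probability measure $\pi_{\bar\eta_i\lambda}$ and converges to it exponentially fast in total variation from \emph{any} initial law. Pushing $\pi_{\bar\eta_i\lambda}$ forward through $\vu\mapsto f_{\vu}$ yields the claimed function-space equilibrium; its dependence on $\bar\eta_i\lambda$ alone is exactly the generator computation above, and convergence from an arbitrary start is precisely the assertion that the equilibrium is independent of the history of the previous phases. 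The last sentence of the Observation is then the Markov property: once the law of $\vu_t$ has (approximately) reached $\pi_{\bar\eta_i\lambda}$, it remains there for the rest of phase $i$, so the distribution handed to phase $i+1$, and hence all subsequent behavior, does not depend on how long one lingers at equilibrium.

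The main obstacle is the coupling between $\rho_t$ and $\vu_t$: the effective step size $\widetilde\eta$ is not constant but fluctuates with $G(\vu_t)$, so the reduction to an autonomous spherical SDE requires a genuine separation-of-timescales (averaging) argument --- showing the radial coordinate relaxes to $\rho_\star(\vu_t)$ much faster than the angular motion evolves, with controlled averaging error --- rather than the one-line substitution above. Two related difficulties are (i) proving a quantitative lower bound keeping $\rho_t$ away from $0$, so that the non-smoothness of $\Loss$ at the origin flagged in \Cref{sec:incompat} is never felt (this should follow from the $\bar\eta_i^{2}G(\vu_t)/\rho_t$ repulsion), and (ii) the hypoellipticity and mixing hypotheses on the gradient noise, which are assumptions about real networks rather than theorems --- this is why the statement is ultimately offered as the \emph{Fast Equilibrium Conjecture} rather than proved, and why even the exponential mixing provable here is quantitatively much slower than the $1/(\bar\eta_i\lambda)$ timescale seen in experiments.
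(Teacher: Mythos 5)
The statement you were asked to prove is an empirical \emph{Observation}, and the paper never proves it: its support is experiments plus exactly the partial theory you reconstruct, with the mixing claim deferred to the Fast Equilibrium Conjecture (\Cref{conj:weaker}). Your derivation of the radial/angular system is the paper's \Cref{lm:evolution} (your $\rho_t$ is their $G_t$), your stable zero of the radial drift and the resulting effective step size $\asymp\sqrt{\bar\eta_i\lambda}$ is their \Cref{lm:gamma-decay} phrased via $\gamma_t=(G_t/\eta)^2$, and the fact that the limiting angular generator depends on $\bar\eta_i$ and $\lambda$ only through their product is precisely the paper's justification for calling $\lambda_e=\eta\lambda$ the intrinsic LR. Where you go beyond the paper is the explicit reduction to a hypoelliptic diffusion on the compact sphere with a unique invariant measure; the paper does not attempt this and simply cites generic $e^{O(1/\eta)}$ mixing bounds. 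Your list of obstacles is the right one, and your nondegeneracy caveat matches the paper's own exclusion of pathological initializations (all neurons dead). One point deserves more emphasis than you give it: even granting hypoellipticity and the averaging argument, the exponential-in-$1/\eta$ convergence that standard Lyapunov/Harris-type arguments yield supports only the qualitative first two sentences of the Observation, whereas the quantitatively fast $O(1/\lambda_e)$ equilibration --- the part that makes the Observation operationally meaningful in a finite training budget and that the experiments in \Cref{subsec:fast_eq_exp} actually measure --- is exactly what no known argument delivers; your proposal correctly lands on "this is a conjecture," which is also where the paper lands.
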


\myparagraph{Our contributions.} In this work, we identify a new ``intrinsic LR'' parameter $\lambda_e=\lambda\eta$ based on \Cref{obs:main}. The main contributions are the following:
\begin{enumerate}
	\item We theoretically analyse how intrinsic LR controls the evolution of effective speed of learning and how it leads to the equilibrium. This is done through incorporating BN and weight decay into the classical framework of Langevin Dynamics (\Cref{sec:theory}).
	\item Based on our theory, we empirically observed that small learning rates can perform equally well, which challenges the popular belief that good generalization requires large initial LR (\Cref{sec:evidence}).
	\item Finally, we make a conjecture, called \emph{Fast Equilibrium Conjecture}, based on mathematical intuition (\Cref{sec:theory}) and experimental evidence (\Cref{sec:evidence}): the number of steps for reaching equilibrium in \Cref{obs:main} scales inversely to the intrinsic LR, as opposed to the mixing time upper bound $e^{O(1/\eta)}$ for Langevin dynamics~\citep{bovier2004metastability,shi2020learning}. This gives a new perspective in understanding why BN is effective in deep learning.
\end{enumerate}

\section{Related Works}
\myparagraph{Effect of Learning Rate / Batch Size.} The generalization issue of large batch size / small learning rate has been observed as early as \citep{bengio2012practical,LeCun2012efficient}. \citep{keskar2016large}~argued that the cause is that large-batch training tends to converge to sharp minima, but \citep{dinh2017sharp} noted that sharp minima can also generalize well due to invariance in ReLU networks. \citep{li19learningrate}~theoretically analysed the effect of large learning rate in a synthetic dataset to argue that the magnitude of learning rate changes the learning order of patterns in non-homogeneous dataset. To close the generalization gap between large-batch and small-batch training, several works proposed to use a large learning rate in the large-batch training to keep the scale of gradient noise~\citep{hoffer2017train,smith2018a,chaudhari2018stochastic,smith2018dont}. \citet{shallue2019measuring}~demonstrated through a systematic empirical study that there is no simple rule for finding the optimal learning rate and batch size as the generalization error could largely depend on other training metaparameters.
None of these works have found that training without a large initial learning rate can generalize equally well in presence of BN.

\myparagraph{Batch Normalization.} Batch Normalization is proposed in~\citep{ioffe2015batch}. While the original motivation is to reduce Internal Covariate Shift (ICS), \citep{santurkar2018does} challenged this view and argued that the effectiveness of BN comes from a smoothening effect on the training objective. \citep{bjorck2018understanding} empirically observed that the higher learning rate enabled by BN is responsible for the better generalization. \citep{kohler2019exp} studied the direction-length decoupling effect in BN and designed a new optimization algorithm with faster convergence for learning 1-layer or 2-layer neural nets. Another line of works focus on the effect of scale-invariance induced by BN as well as other normalization schemes. \citep{hoffer2018norm} observed that the effective learning rate of the parameter direction is $\frac{\eta}{\norm{\vw_t}^2}$. \citep{arora2018theoretical}~identified an auto-tuning behavior for the effective learning rate and \citep{cai2019aquantitative}~gave a more quantitative analysis for linear models. In presence of BN and weight decay, \citep{van2017l2}~showed that the gradient noise causes the norm to grow and the weight decay causes to shrink, and the effective learning rate eventually reaches a constant value if the noise scale stays constant. \citep{zhang2018three}~validated this phenomenon in the experiments. \citep{li2020exp} rigorously proved that weight decay is equivalent to an exponential learning rate schedule.

\section{Preliminaries} \label{sec:bg}

\myparagraph{Stochastic Gradient Descent and Weight Decay.} Let $\{ (\vx_i, y_i) \}_{i=1}^{n}$ be a dataset consisting of input-label pairs. In (mini-batch) stochastic gradient descent, the following is the update, where  $\Batch_t \subseteq \{1, \dots, n\}$ is a mini-batch of random samples, $\vw_t$ is the vector of trainable parameters of a neural network, $\Loss(\vw; \Batch) = \frac{1}{\lvert \Batch \rvert}\sum_{b \in \Batch} \ell_\Batch(\vw; \vx_b, y_b)$ is the average mini-batch loss  (we use subscript $\Batch$ because $\ell$ can depend on $\Batch$ if BN is used) and $\eta_t$ is the learning rate (LR) at step $t$:
\begin{equation} \label{eq:sgd-original}
	\vw_{t+1} \gets \vw_t - \eta_t \nabla \Loss(\vw_t; \Batch_t).
\end{equation}
{\em Weight decay (WD)} with parameter $\lambda$ (a.k.a., adding an $\ell_2$ regularizer term $\frac{\lambda}{2} \normtwo{\vw}^2$ ) is standard in networks with BN, yielding the update:
\begin{equation} \label{eq:sgd-wd}
	\vw_{t+1} \gets (1 - \eta_t\lambda)\vw_t - \eta_t \nabla \Loss(\vw_t; \Batch_t).
\end{equation}
\myparagraph{Normalization Schemes and Scale-invariance.} Batch normalization (BN)~\citep{ioffe2015batch} makes the training loss invariant to re-scaling of layer weights, as it normalizes the output for every neuron (see Appendix~\ref{app:bn} for details; scale-invariance emerges if the output layer is fixed). We name this property as \emph{scale-invariance}. More formally, we say a function $f: \R^d \to \R$ is \emph{scale-invariant} if $f(\vw) = f(\alpha\vw), \forall \vw \in \R^d,\alpha>0$. Note that scale-invariance is a general property that also holds for loss in presence of other normalization schemes~\citep{wu2018group,salimans2016weight,ba2016layer}. 

Scale-invariance 
implies the gradient and Hessian are inversely proportional to $\norm{\vw}, \norm{\vw}^2$ respectively, meaning that the smoothness is unbounded near $\vw = 0$. This can be seen by taking gradients with respect to $\vw$ on both sides of $ f(\vw) = f(\alpha\vw)$:
\begin{lemma}\label{lem:inv-grad-hess}
    For a scale-invariant function $f: \R^d \to \R$, $\nabla f(\alpha \vw) = \frac{1}{\alpha}\nabla f(\vw)$ and $\nabla^2 f(\alpha \vw) = \frac{1}{\alpha^2}\nabla^2 f(\vw)$ hold for all $\vw$ and $\alpha>0$.
\end{lemma}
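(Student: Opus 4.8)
The plan is to differentiate the defining identity $f(\vw) = f(\alpha \vw)$ with respect to $\vw$ (holding $\alpha$ fixed), then differentiate once more to get the Hessian claim. This is a direct chain-rule computation, so the ``proof'' is really just careful bookkeeping of where each gradient is evaluated.

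First, fix $\alpha > 0$ and regard both sides of $f(\vw) = f(\alpha \vw)$ as functions of $\vw$. The right-hand side is the composition of $f$ with the linear map $\vw \mapsto \alpha \vw$, whose Jacobian is $\alpha \mI$, so the chain rule gives $\nabla_{\vw}\bigl[f(\alpha \vw)\bigr] = \alpha\,(\nabla f)(\alpha \vw)$. Equating this with $\nabla_{\vw} f(\vw) = (\nabla f)(\vw)$ yields $\alpha\,(\nabla f)(\alpha \vw) = (\nabla f)(\vw)$, i.e. $\nabla f(\alpha \vw) = \tfrac{1}{\alpha}\nabla f(\vw)$. For the Hessian, apply $\nabla_{\vw}$ to both sides of this new identity: the left side gives $\alpha\,(\nabla^2 f)(\alpha \vw)$ by the same chain rule, the right side gives $\tfrac{1}{\alpha}(\nabla^2 f)(\vw)$, and hence $\nabla^2 f(\alpha \vw) = \tfrac{1}{\alpha^2}\nabla^2 f(\vw)$.

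There is no real obstacle here; the only points requiring a word of care are (i) keeping track that the gradient produced by the chain rule on the right is evaluated at $\alpha\vw$, not at $\vw$, and (ii) noting that we only need $f$ to be once (resp.\ twice) differentiable at the points in question, which is exactly the regime where the stated conclusions are asserted, and which for a BN loss holds everywhere except at $\vw = \vzero$. An alternative is to differentiate in $\alpha$ instead, which reproduces Euler's identity $\dotp{\nabla f(\vw)}{\vw} = 0$ for a degree-$0$ homogeneous function; but that is strictly weaker than what is claimed, so differentiating in $\vw$ is the route I would take.
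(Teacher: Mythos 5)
Your proof is correct and is exactly the argument the paper intends: it justifies the lemma with the one-line remark ``taking gradients with respect to $\vw$ on both sides of $f(\vw) = f(\alpha\vw)$,'' and your chain-rule computation (applied once for the gradient and once more for the Hessian) is precisely that argument carried out carefully.
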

The gradient can also be proved to be perpendicular to $\vw$, that is, $\dotp{\nabla  f(\vw)}{\vw} = 0$ holds for all $\vw$.
This property can also be seen as a corollary of Euler's Homogeneous Function Theorem. In the deep learning literature, \citep{arora2018theoretical} used this in the analysis of the auto-tuning behavior of normalization schemes.

\myparagraph{Approximating SGD by SDE.} Define the expected loss $\Loss(\vw) := \E_{\Batch}\left[ \Loss(\vw; \Batch) \right]$ and the error term $\vxi := \nabla \Loss(\vw; \Batch_t) - \nabla \Loss(\vw)$. Then we can rewrite the formula of SGD with constant LR $\eta$ as
$\vw_{t+1} \gets \vw_t - \eta \left(\nabla \Loss(\vw_t) + \vxi_t \right)$.
The mean of gradient noise is always $0$. The covariance matrix of the gradient noise at $\vw$ equals to $\mSigma_{\vw} := \E_{\Batch}[(\nabla \Loss(\vw; \Batch) - \nabla \Loss(\vw))(\nabla \Loss(\vw; \Batch) - \nabla \Loss(\vw))^{\top}]$. To approximate SGD by SDE, the classic approach is to model the gradient noise by Gaussian noise $\vxi_t \sim \mathcal{N}(\vzero, \mSigma_{\vw_t})$, and then take the continuous time limit to obtain the surrogate SDE for infinitesimal LR \citep{li2017stochastic,cheng2019stochastic}. As is done in previous works~\citep{smith2018a,smith2018dont,chaudhari2018stochastic,shi2020learning}, we also use this surrogate dynamics to approximate SGD with LR of any size:
\begin{equation} \label{eq:sgd-sde}
	\dd \vW_{t} = - \eta \left(\nabla \Loss(\vW_t) \dd t+ (\mSigma_{\vW_t})^{\frac{1}{2}} \dd \vB_t\right).
\end{equation}
Here $\vB_t \in \R^d$ is the Wiener Process (Brownian motion), which satisfies $\vB_t - \vB_s \sim N(\vzero, (t-s) \mI_d)$ conditioned on $\vB_s$. When $ \mSigma_\vw $ is $\vzero$ (the full-batch GD case), \eqref{eq:sgd-sde} is known as \emph{gradient flow}.

\myparagraph{Folklore view of landscape exploration.} There is evidence that the training loss has many global minima (or near-minima), whose test loss values can differ radically. The basins around these global minima are separated by \textquotedblleft hills\textquotedblright\ and only large noise can let SGD jump from one to another, while small noise will only make the network oscillate in the same basin around a minimum. The regularization effect of large LR/large noise happens because (1) sharp minima have worse generalization  (2) noise prevents getting into narrow basins and thus biases exploration towards flatter basins. (But this view is known to be simplistic, as noted in many papers.)

\section{Apparent Incompatibility between BN and Traditional View Points} \label{sec:incompat}
In this section, we discuss how BN leads to issues with the traditional optimization view of gradient flow and SDE. This motivates our new view in Section~\ref{sec:theory}.

\myparagraph{Full batch gradient descent $\neq$ gradient flow.} It's well known that if LR is smaller than the inverse of the smoothness, then  trajectory of gradient descent will be close to that of gradient flow. But for normalized networks, the loss function is scale-invariant and thus provably non-smooth (i.e., smoothness becomes unbounded)  around origin  \citep{li19learningrate}. (By contrast, without WD, the SGD moves away from origin~\citep{arora2018theoretical} since norm increases monotonically.) We will show that this nonsmoothness is very real and makes training unstable and even chaotic for full batch SGD with any nonzero learning rate. And yet convergence of gradient flow  is unaffected.

Consider a toy scale-invariant loss, $L(x,y) = \frac{x^2}{x^2+y^2}$. Since loss only depends on $x/y$,  WD has no effect on it.  Even with WD turned on, Gradient Flow (i.e., infinitesimal updates) will lead to monotone decrease in $|x_t/y_t|$. But \Cref{subfig:chaotic_toy} in the appendix shows that dynamics  for GD with WD are chaotic:  as similar trajectories approach the origin, tiny differences  are amplified and they diverge.

Modern deep nets with BN + WD (the standard setup) also exhibit instability close to zero loss. See \Cref{subfig:chaotic_gd_acc,subfig:chaotic_gd_norm}, where deep nets being trained on small datasets exhibit oscillation between zero loss and high loss. In any significant period with low loss (i.e., almost full accuracy), gradient is small but WD continues to reduce the norm, and resulting non-smoothness leads to large increase in loss.

\begin{figure}[!htbp]
	\centering
	\begin{subfigure}[b]{0.29\textwidth}
		\centering
		\includegraphics[width=\linewidth]{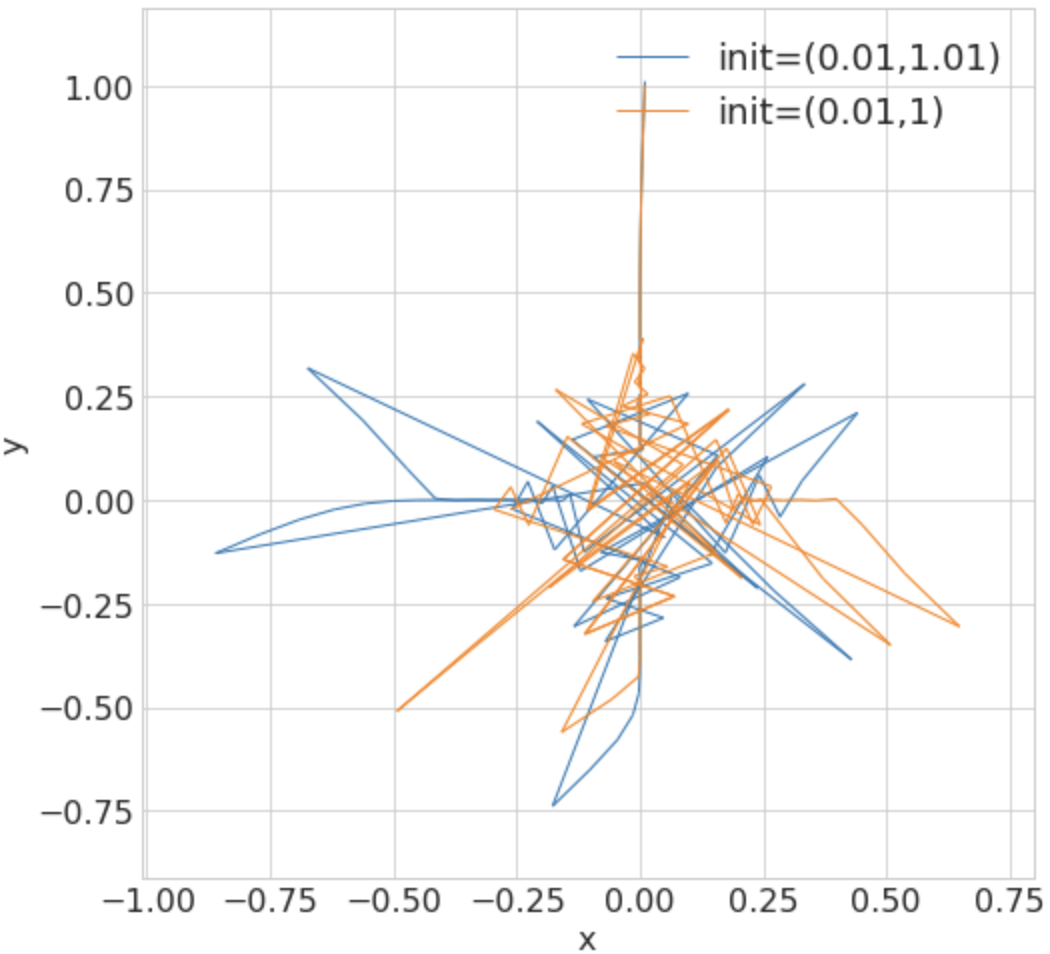}
		\caption{GD with WD on $L(x,y)$}
		\label{subfig:chaotic_toy}
	\end{subfigure}%
	\begin{subfigure}[b]{0.35\textwidth}
		\centering
		\includegraphics[width=\linewidth]{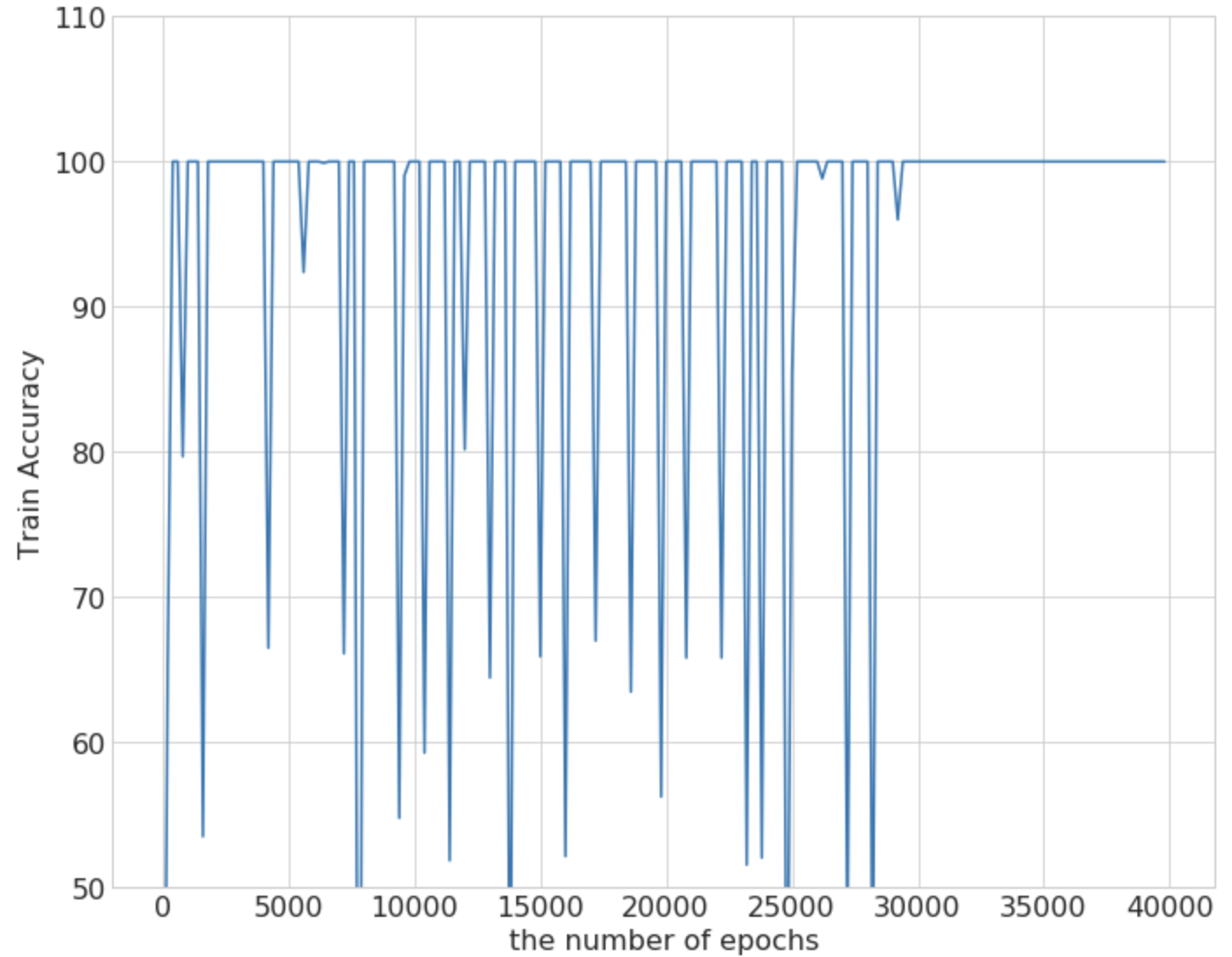}
		\caption{Train accuracy}
		\label{subfig:chaotic_gd_acc}
	\end{subfigure}%
	\begin{subfigure}[b]{0.35\textwidth}
		\centering
		\includegraphics[width=\linewidth]{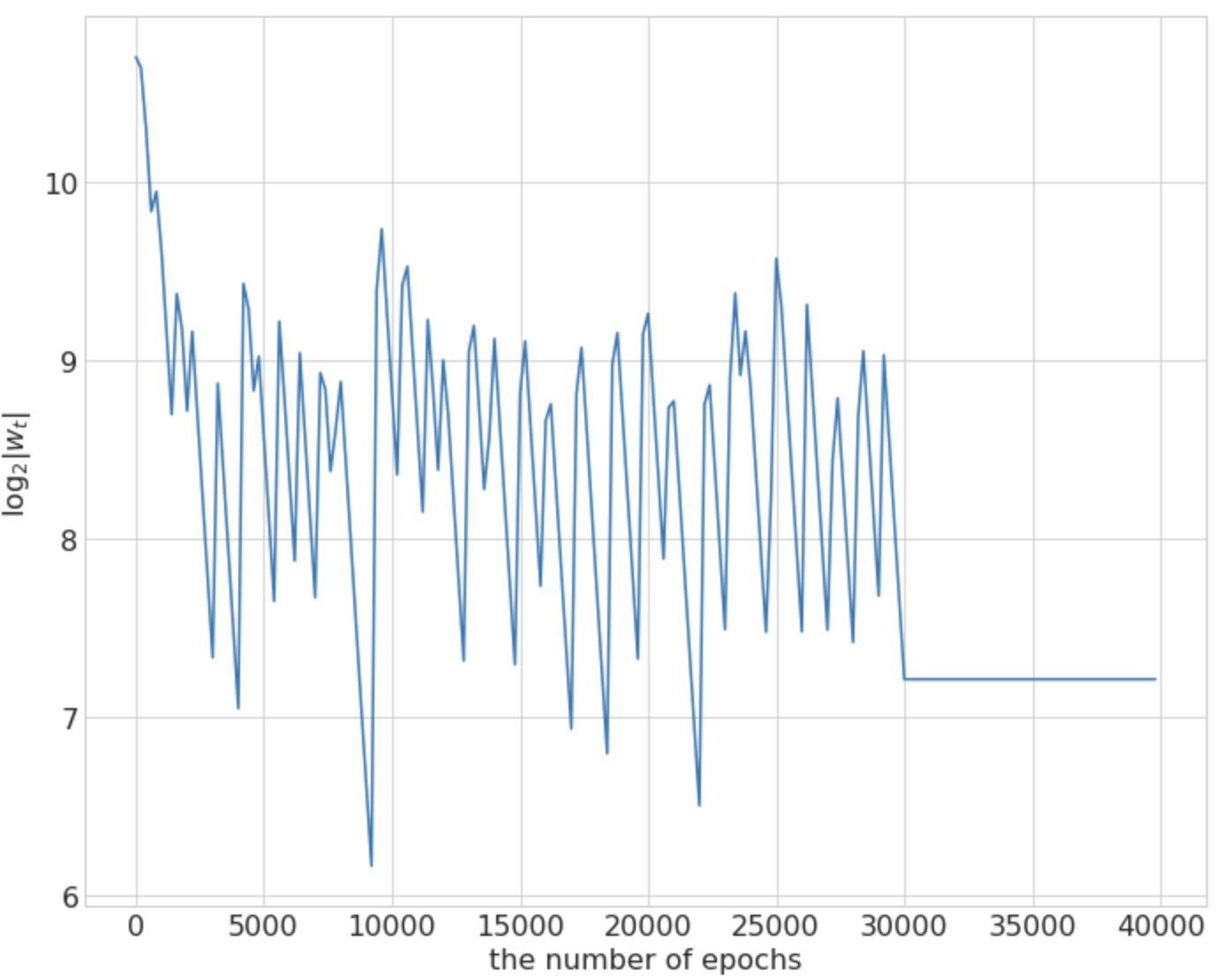}
		\caption{Norm}
		\label{subfig:chaotic_gd_norm}
	\end{subfigure}%
	\caption{\small WD makes GD on scale-invariant loss unstable and chaotic, for both the toy model and PreResNet32. \textbf{(a)} The toy model is trained GD with LR $0.1$, WD $0.5$ and two initial points near zero loss. The initial points are very close to each other. \textbf{(b)(c)} \emph{Convergence never truly happens} for PreResNet32 trained on sub-sampled CIFAR10 containing 1000 images with full-batch GD, WD $5 \times 10^{-4}$, LR $1.6$ (without momentum). PreResNet32 can easily get $100\%$ training accuracy but is unable to stay long. WD is turned off at epoch 30000.}
	\label{fig:chaotic}
\end{figure}

\begin{figure}[!htbp]
        \centering
        \begin{subfigure}[b]{0.57\textwidth}
                \centering
				\includegraphics[width=0.7\linewidth]{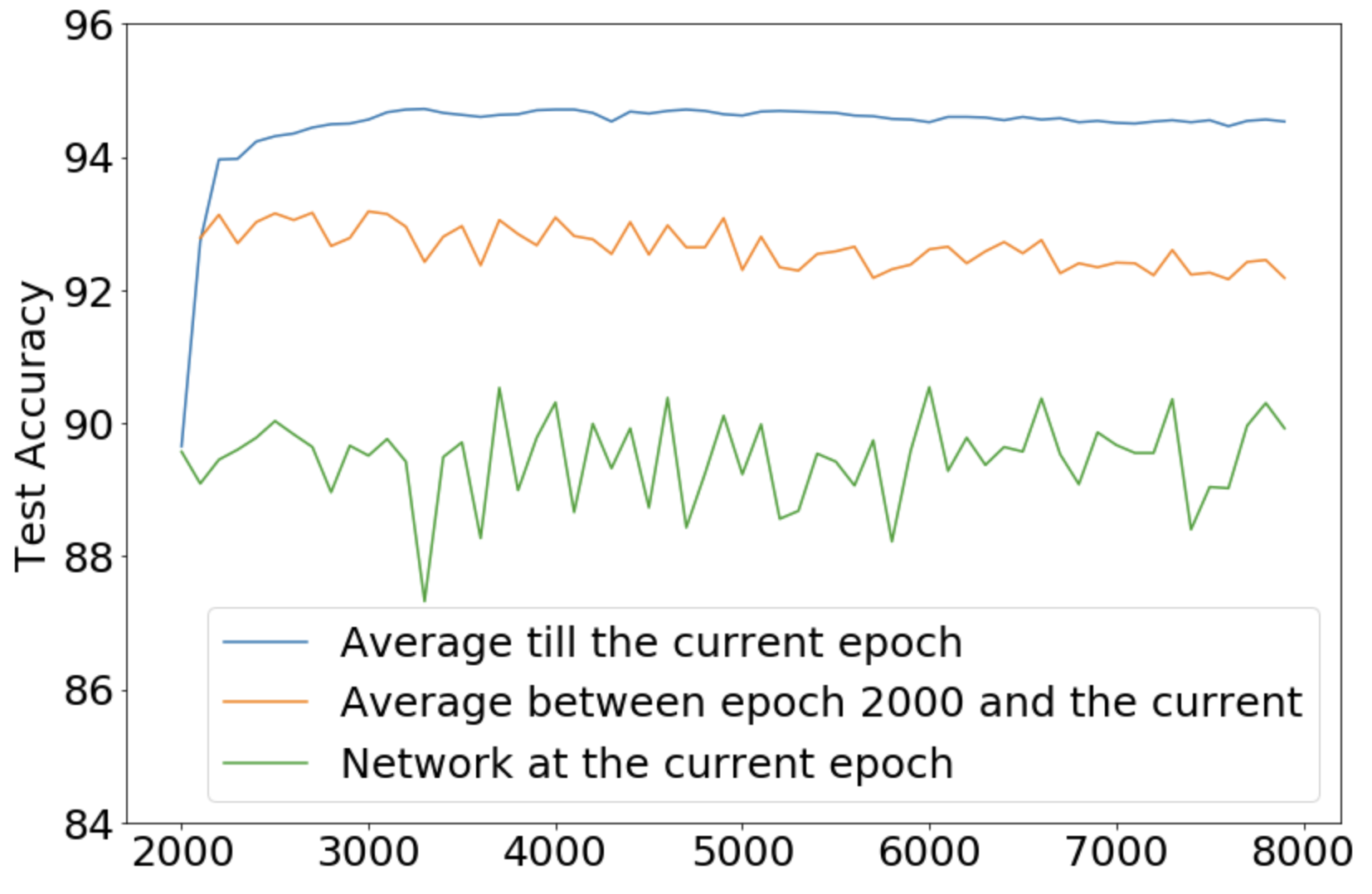}
				\caption{Test accuracy} \label{fig:swa_acc}
        \end{subfigure}%
        \begin{subfigure}[b]{0.42\textwidth}
                \centering
                \includegraphics[width=0.7\linewidth]{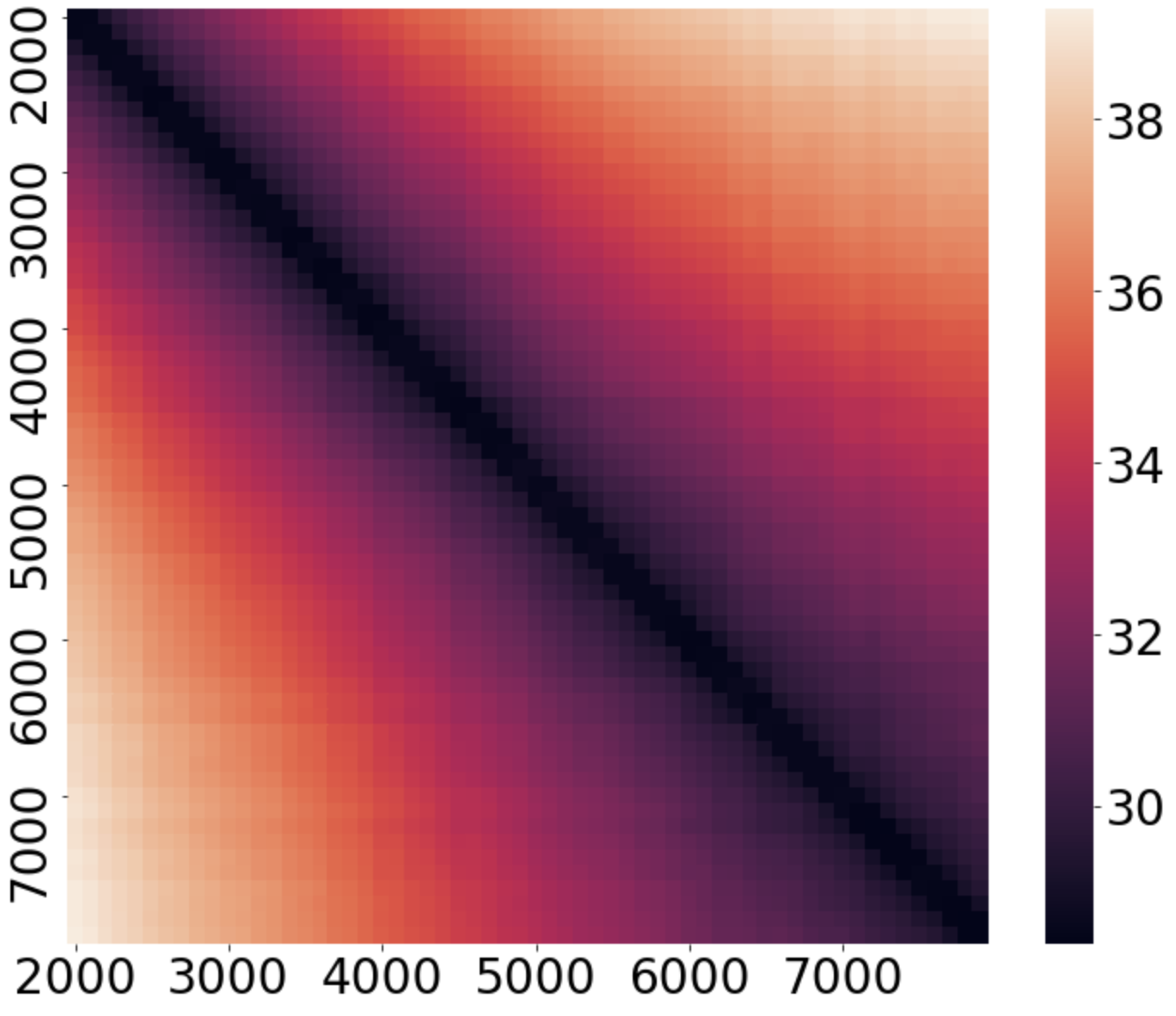}
                \caption{ Pairwise $\ell_2$ distance} \label{fig:swa_dist}
        \end{subfigure}%
        \caption{\small Stochastic Weight Averaging improves the test accuracy of PreResNet32 trained with momentum SGD 
        on CIFAR10. In particular, test accuracy is improved by $2\% $ by simply averaging a network with any other network along the same trajectory, suggesting the trajectory is still local.
        However, the distance between parameters keeps increasing. As a comparison, the average parameter norm (over epoch 2000-8000) is around 39, which has exactly the same magnitude as the pairwise distance, indicating the langevin diffusion view  around strongly convex local optimum in \citep{izmailov2018averaging} may not suffice to explain the success of SWA.}
        \label{fig:swa}
\end{figure}

\myparagraph{Problems with random walk/SDE view of SGD.} The standard story about the role of noise in deep learning is that it turns a deterministic process into a geometric random walk in the landscape, which can in principle explore the landscape more thoroughly, for instance by occasionally making loss-increasing steps. Rigorous analysis of this walk is difficult since the mathematics of real-life training losses is not understood. But assuming the noise in SDE is a fixed Gaussian,  the stationary distribution of the random walk can be shown to be the familiar Gibbs distribution over the landscape. See~\citep{shi2020learning} for a recent account, where SDE is shown to converge to equilibrium distribution in $e^{O(C/\eta)}$ time for some term $C$ depending upon loss function. This convergence is extremely slow for small LR $\eta$ and thus way beyond normal training budget.

Recent experiments have also suggested the walk does not reach this initialization-independent equilibrium within normal training time. Stochastic Weight Averaging (SWA) \citep{izmailov2018averaging} shows that the loss landscape is nearly convex along the trajectory of SGD with a fixed hyper-parameter choice, e.g., if  the two network parameters from different epochs are averaged,  the test loss is lower. This reduction can go on for 10 times more than the normal training budget as shown in~\Cref{fig:swa}.  However, the accuracy improvement is a very local phenomenon since it doesn't happen for SWA between solutions obtained from different initialization, as shown in~\citep{draxler2018essentially,garipov2018loss}. This suggests the networks found by SGD within normal training budget highly depends on the initialization, and thus SGD doesn't mix in the parameter space.

Another popular view (e.g., \citep{izmailov2018averaging}) believes that instead of mixing to the unique global equilibrium, the trajectory of SGD could be well approximated by a multivariate Ornstein-Uhlenbeck (OU) process around a local minimizer $\mW_{*}$, assuming the loss surface is locally strongly convex. As the corresponding stationary point is a Gaussian distribution $\mathcal{N}(\mW_{*}, \mSigma)$, this explains why SWA helps to reduce the training loss. However, this view is challenged by the fact that the $\ell_2$ distance between weights from epochs $T$ and $T+\Delta$ monotonically increases with $\Delta$ for every $T$ (See \Cref{fig:swa_dist}), while $\E[\normtwosm{\mW_T-\mW_{T+\Delta}}^2]$ should converge to the constant $2 \Tr(\mSigma)$ as $T, \Delta \to +\infty$ in the OU process. This suggests that all these weights are correlated and haven't mixed to Gaussian.

For the case where WD is turned off, \citep{arora2018theoretical} proves that the norm of weight is monotone increasing, thus the mixing in parameter space provably doesn't exist for SGD with BN.

\section{SDE-based framework for modeling SGD on Normalized Networks} \label{sec:theory}
For SGD with learning rate $\eta$ and weight decay $\lambda$,  we define $\lambda_e := \eta\lambda$ to be the \textit{effective weight decay}. This is actually  the original definition of weight decay~\citep{hanson1989comparing} and is also proposed (based upon experiments) in \citep{loshchilov2018decoupled} as a way to improve generalization for Adam and SGD.
In Section \ref{subsec:SDE_WD_Norm}, we will suggest calling $\lambda_e$ the \emph{intrinsic learning rate} because it controls
trajectory in a manner similar to learning rate. Now we can rewrite update rule \eqref{eq:sgd-wd} and its corresponding SDE as
\begin{align} 
\vw_{t+1} &\gets (1 - \lambda_e)\vw_t - \eta \left(\nabla \Loss(\vw_t) + \vxi_t \right). \label{eq:sgd-wd-noisy} \\
\dd \vW_{t} &= - \eta \left(\nabla \Loss(\vW_t) \dd t+ (\mSigma_{\vW_t})^{\frac{1}{2}} \dd \vB_t\right) - \lambda_e \vW_t \dd t. \label{eq:sde-wd}
\end{align}

\subsection{SDE with Weight Decay and Normalization}\label{subsec:SDE_WD_Norm}
When the loss function is scale-invariant, the gradient noise $\mSigma_{\vW}$ is inherently anisotropic and position-dependent: Lemma~\ref{lem:perp_cov} in the appendix shows the noise lies in the subspace perpendicular to $w$ and blows up close to the origin. To get an SDE description closer to the canonical format, we reparametrize parameters to unit norm.
Define $\hW_t = \frac{\vW_t}{\norm{\vW_t}}$, $G_t = \norm{\vW_t}^2$, where $\norm{\vw}$ stands for the $\normltwo$-norm of a vector $\vw$.
The following Lemma is proved in the appendix using It\^{o}'s Lemma:
\begin{restatable}{theorem}{mainlemma} \label{lm:evolution}
	The evolution of the system can be described as:
\begin{align}
\dd \hW_{t} &= - \frac{\eta}{G_t} \left(\nabla \Loss(\hW_t)\dd t + (\mSigma_{\hW_t})^{\frac{1}{2}} \dd \mB_t\right) - \frac{\eta^2}{2G_t^2}  \Tr(\mSigma_{\hW_t}) \hW_t\dd t \label{eq:sde-hW} \\ \vspace{-0.1cm}
\frac{\dd G_t}{\dd t} &= -2 \lambda_e G_t + \frac{\eta^2}{G_t}\Tr(\mSigma_{\hW_t}) \label{eq:sde-rho}
\end{align}
\vspace{-0.1cm}
where $\Tr(\mSigma_{\vw})$ is the trace of the covariance matrix of gradient noise at $\vw$.\footnote{\scriptsize $\Tr(\mSigma_{\vw}) = \E\left[\norm{\nabla \Loss(\vw; \Batch) - \nabla \Loss(\vw)}^2\right]$, so it is also a noise term.}
\end{restatable}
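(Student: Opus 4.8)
The plan is to apply It\^o's Lemma twice: once to the scalar $G_t = \norm{\vW_t}^2 = \dotp{\vW_t}{\vW_t}$, and once to the vector-valued map $\vw \mapsto \vw/\norm{\vw}$, feeding in the SDE \eqref{eq:sde-wd} for $\vW_t$. Throughout I will use the two structural facts supplied by scale-invariance: by Lemma~\ref{lem:inv-grad-hess} and the orthogonality remark, $\dotp{\nabla\Loss(\vw)}{\vw}=0$, and since the noise directions are gradients of (scale-invariant) mini-batch losses, each column of $(\mSigma_{\vw})^{1/2}$ is also perpendicular to $\vw$; hence $\dotp{\vW_t}{(\mSigma_{\vW_t})^{1/2}\dd\vB_t}=0$ and $\vW_t^\top \mSigma_{\vW_t}\vW_t = 0$. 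These kill the martingale part and several drift terms in the $G_t$ computation.

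First I would compute $\dd G_t$. Writing $G_t = \sum_i (W_t^{(i)})^2$, It\^o gives $\dd G_t = 2\dotp{\vW_t}{\dd\vW_t} + \dd\langle \vW\rangle_t$, where the quadratic-variation (second-order) term contributes $\sum_i \dd\langle W^{(i)}\rangle_t = \eta^2 \Tr(\mSigma_{\vW_t})\,\dd t$. The cross term $2\dotp{\vW_t}{\dd\vW_t}$ splits into $-2\eta\dotp{\vW_t}{\nabla\Loss(\vW_t)}\dd t = 0$ (orthogonality), the vanishing noise term above, and $-2\lambda_e\dotp{\vW_t}{\vW_t}\dd t = -2\lambda_e G_t\,\dd t$. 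So $\dd G_t = (-2\lambda_e G_t + \eta^2\Tr(\mSigma_{\vW_t}))\,\dd t$; converting the noise covariance via Lemma~\ref{lem:inv-grad-hess} to the unit sphere, $\Tr(\mSigma_{\vW_t}) = \frac{1}{G_t}\Tr(\mSigma_{\hW_t})$, which yields \eqref{eq:sde-rho}. Note $G_t$ has no martingale part, so \eqref{eq:sde-rho} is an honest ODE driven by the (random) trajectory.

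Next, $\dd\hW_t$ with $\hW_t = \vW_t/\sqrt{G_t}$. I would apply It\^o to the function $\phi(\vw) = \vw/\norm{\vw}$, whose first derivative is $\frac{1}{\norm{\vw}}(\mI - \hat\vw\hat\vw^\top)$ (projection onto the tangent space, scaled by $1/\norm{\vw}$) and whose second derivatives bring in the $-\frac{1}{2}\sum$ Hessian-against-covariance term. The drift has three sources: (i) the $\nabla\Loss$ term, projected and rescaled, giving $-\frac{\eta}{\norm{\vW_t}}\cdot\frac{1}{\norm{\vW_t}}\nabla\Loss(\hW_t) = -\frac{\eta}{G_t}\nabla\Loss(\hW_t)\,\dd t$ after using Lemma~\ref{lem:inv-grad-hess} again ($\nabla\Loss(\vW_t) = \frac{1}{\norm{\vW_t}}\nabla\Loss(\hW_t)$) and noting the projection acts trivially since the gradient is already tangential; (ii) the $-\lambda_e\vW_t\,\dd t$ term, which is radial and is therefore annihilated by the tangential projection $\mI - \hat\vw\hat\vw^\top$ — this is why $\lambda_e$ does not appear in \eqref{eq:sde-hW}; (iii) the It\^o correction $\frac12 \sum_{jk}\partial_j\partial_k\phi(\vW_t)\,(\eta^2\mSigma_{\vW_t})_{jk}$, which after the scaling computation produces the $-\frac{\eta^2}{2G_t^2}\Tr(\mSigma_{\hW_t})\hW_t\,\dd t$ radial pull-back term. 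The martingale part is the projected, rescaled noise; since the noise columns are already tangential, the projection is again trivial and we get $-\frac{\eta}{G_t}(\mSigma_{\hW_t})^{1/2}\dd\mB_t$ (possibly after an orthogonal change of Brownian motion, which is harmless in law). Collecting terms gives \eqref{eq:sde-hW}.

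The main obstacle is bookkeeping the It\^o second-order term for the nonlinear map $\phi(\vw)=\vw/\norm{\vw}$ cleanly — getting the exact coefficient $\frac{\eta^2}{2G_t^2}$ and verifying that all the non-radial pieces of the Hessian contraction either cancel or are absorbed, using $\vW_t^\top\mSigma_{\vW_t}=0$ and $\Tr$ of the relevant blocks. A secondary subtlety is justifying that $(\mSigma_{\vW_t})^{1/2}$ inherits the perpendicularity (it suffices that $\mathrm{range}(\mSigma_{\vw})\perp\vw$, which follows because each noise sample $\nabla\Loss(\vw;\Batch)-\nabla\Loss(\vw)$ is perpendicular to $\vw$ by scale-invariance), and handling the change-of-Brownian-motion step so that the stated SDE holds with a genuine $d$-dimensional Wiener process. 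All of this is routine stochastic calculus once the orthogonality facts are in hand; I would relegate the detailed coordinate computation to the appendix, exactly as the paper does.
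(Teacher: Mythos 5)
Your proposal is correct and follows essentially the same route as the paper's proof: It\^{o}'s Lemma applied to $G_t$ and to the normalization map, with the scale-invariance orthogonality facts ($\dotp{\nabla\Loss(\vw)}{\vw}=0$, $\mathrm{range}(\mSigma_{\vw})\perp\vw$, $\mSigma_{c\vw}=c^{-2}\mSigma_{\vw}$) killing the radial drift and noise components and reducing the second-order term to the stated $-\frac{\eta^2}{2G_t^2}\Tr(\mSigma_{\hW_t})\hW_t$ correction. The only cosmetic difference is that the paper carries out the vector It\^{o} computation by testing $\hW_t$ against an arbitrary vector $\vv$ and isolating the Hessian contraction in a separate lemma, whereas you differentiate $\vw\mapsto\vw/\norm{\vw}$ directly; also note that $(\mSigma_{\vW_t})^{1/2}=\frac{1}{\norm{\vW_t}}(\mSigma_{\hW_t})^{1/2}$ holds exactly by the scaling property, so no change of Brownian motion is needed.
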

The SDE enables clean mathematical demonstration of many properties of normalization schemes. 
For example, dividing both sides of \eqref{eq:sde-rho} by $\eta$ gives
\begin{equation}
\frac{\dd(G_t/\eta)}{\dd t} = -2 \lambda_e \cdot \frac{G_t}{\eta} + \frac{\eta}{G_t}\Tr(\mSigma_{\hW_t}).
\end{equation}
This shows that the dynamics only depends on the ratio $G_t/\eta$, which also suggests that {\em initial LR is of limited importance, indistinguishable from scale of initialization.}  Now define $\gamma_t := (G_t/\eta)^2$. ($\eta/G_t = \gamma_t^{-0.5}$ was called the \emph{effective learning rate} in \citep{hoffer2018norm,zhang2018three,arora2018theoretical}.) This simplifies the equations:
\begin{align}
\dd \hW_{t} &= - \gamma_t^{-1/2} \left(\nabla L(\hW_t)\dd t + (\mSigma_{\hW_t})^{\frac{1}{2}} \dd \mB_t\right) - \frac{1}{2\gamma_t}  \Tr(\mSigma_{\hW_t}) \hW_t\dd t. \label{eq:gamma_W} \\
\frac{\dd \gamma_t}{\dd t} &= -4 \lambda_e \gamma_t + 2 \Tr(\mSigma_{\hW_t}) \label{eq:gamma}.
\end{align}
\eqref{eq:gamma} can be alternatively written as the following, which shows that squared effective LR $\gamma_t$ is a running average of the norm squared of gradient noise.
\begin{equation} \label{eq:gamma-avg}
	\gamma_t = e^{-4 \lambda_et} \gamma_0 + 2 \int_{0}^{t} e^{-4 \lambda_e (t-\tau)} \Tr(\mSigma_{\hW_{\tau}}) d\tau.
\end{equation}
Experimentally\footnote{\scriptsize \Cref{fig:7decay,fig:vgg_fast_conjecture} shows that after a certain length of time the relationship $\gamma_t^{1/2} \propto \lambda_e^{-1/2}$ holds approximately, up to a small multiplicative constant. Since $\gamma_t$ is the running average of $\Tr(\Sigma_{\hW})$, the magnitude of the noise, it suggests for different regions of the landscape explored by SGD with different intrinsic LR $\lambda_e$, the noise scales don't differ a lot.} we find that the trace of noise is approximately constant. This is the assumption of the next lemma (much weaker than assumption of fixed gaussian noise in past works). 
\begin{lemma} \label{lm:gamma-decay}
If $\sigma^2 \le \Tr(\Sigma_{\hW}) \le (1+\eps)\sigma^2$ for all $\hW$ encountered in the trajectory, then
\begin{equation}\label{eqn:norm_mixing2}
	\gamma_t = e^{-4 \lambda_et} \gamma_0 + \left(1 + O(\epsilon)\right) \frac{\sigma^2}{2\lambda_e}\left( 1 - e^{-4\lambda_e t}\right).
\end{equation}
\end{lemma}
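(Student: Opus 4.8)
The plan is to prove the lemma directly from the integral representation \eqref{eq:gamma-avg} of $\gamma_t$, which is obtained by solving the linear ODE \eqref{eq:gamma} with the integrating factor $e^{4\lambda_e t}$. Granting \eqref{eq:gamma-avg}, the only extra input needed is the two-sided bound $\sigma^2 \le \Tr(\Sigma_{\hW_\tau}) \le (1+\eps)\sigma^2$ along the trajectory, which is the hypothesis.

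First I would record the elementary identity $\int_0^t e^{-4\lambda_e(t-\tau)}\,d\tau = \frac{1}{4\lambda_e}(1 - e^{-4\lambda_e t})$, valid for $\lambda_e > 0$. Then, since the kernel $e^{-4\lambda_e(t-\tau)}$ is nonnegative on $[0,t]$, multiplying the pointwise bound on $\Tr(\Sigma_{\hW_\tau})$ by it and integrating gives
\[
\frac{\sigma^2}{2\lambda_e}\left(1 - e^{-4\lambda_e t}\right) \;\le\; 2\int_0^t e^{-4\lambda_e(t-\tau)}\Tr(\Sigma_{\hW_\tau})\,d\tau \;\le\; (1+\eps)\,\frac{\sigma^2}{2\lambda_e}\left(1 - e^{-4\lambda_e t}\right).
\]
Adding the deterministic term $e^{-4\lambda_e t}\gamma_0$ to all three sides and absorbing the multiplicative gap $[1,\,1+\eps]$ into a factor $1 + O(\eps)$ yields exactly \eqref{eqn:norm_mixing2}.

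The argument has no real obstacle; the only point worth care is that the $O(\eps)$ is uniform in $t$ and in $\lambda_e$, which holds because the gap enters purely multiplicatively on the $\tau$-integral and the kernel is nonnegative, so the constant hidden in $O(\eps)$ can be taken to be $1$. As a remark, the estimate degrades gracefully: if the trace bound only holds after some burn-in time $t_0$, the same computation applied to $\int_{t_0}^t$ contributes an $e^{-4\lambda_e(t-t_0)}$-damped error from the initial segment, but this refinement is not needed for the statement as given.
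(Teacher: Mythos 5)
Your proof is correct and follows essentially the same route as the paper's: starting from the integral representation \eqref{eq:gamma-avg}, sandwiching the integrand via the trace bound, and evaluating $\int_0^t e^{-4\lambda_e(t-\tau)}\,d\tau = \frac{1}{4\lambda_e}(1-e^{-4\lambda_e t})$. Your added remarks on the uniformity of the $O(\eps)$ constant and on a burn-in variant are sensible but not needed for the statement.
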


The lemma  again suggests that the initial effective LR decided together by LR $\eta$ and norm $\norm{\vW_0}$ only has a temporary effect on the dynamics: no matter how large is the initial effective LR, after $O(1/\lambda_e)$ time, the effective LR $\gamma_t^{-1/2}$ always converges to the stationary value $(1 + O(\epsilon)) \frac{\sigma}{\sqrt{2 \lambda_e}} \propto \lambda_e^{-1/2}$.

\subsection{A conjecture about mixing time in function space}\label{subsec:conjecture}
As mentioned earlier, there is evidence that SGD may take a very long time to mix in the parameter space. However, we observed that test/train errors converge in expectation soon after the norm $\norm{\vW_t}$ converges in expectation, which only takes $O(1/\lambda_e)$ time by our theoretical analysis. More specifically, we find experimentally that the number of steps of SGD (or the length of time for SDE) after the norm converges doesn't significantly affect the expectation of any known statistics related to the training procedure, including train/test errors, and even the output distribution on every single datapoint.
This suggests the neural net reaches an ``equilibrium state'' in function space.

The above findings motivate us to define the notion of ``equilibrium state'' rigorously and make a conjecture formally. For learning rate schedule $\eta(t)$ and effective weight decay schedule $\lambda_e(t)$, we define $\nu(\mu; \lambda_{e},\eta, t)$ to be the marginal distribution of $\vW_t$ in the following SDE when $\vW_0 \sim \mu$:
\begin{equation} \label{eq:SDE_wd}
\dd \vW_{t} = - \eta(t) \left(\nabla \Loss(\vW_t)\dd t + (\mSigma_{\vW_t})^{\frac{1}{2}} \dd \vB_t\right) - \lambda_e(t) \vW_t\dd t.
\end{equation}
For a random variable $X$, we define $P_{X}$ to be the probability distribution of $X$. The total variation $\dTV(P_1, P_2)$ between two probability measures $P_1, P_2$ is defined by the supremum of $\lvert P_1(A) - P_2(A) \rvert$ over all measurable set $A$. Given input $\vx$ and neural net parameter $\vw$, we use $F(\vw; \vx)$ to denote the output of the neural net (which is the pre-softmax logits in classification tasks).
\begin{conjecture}[Fast Equilibrium Conjecture] \label{conj:weaker}
	Under the dynamics of \eqref{eq:gamma_W} and \eqref{eq:gamma}, modern neural nets converge to the equilibrium distribution in $O(1/\lambda_e)$ time in the following sense. Given two initial distributions $\mu, \mu'$ for $\vW_0$, constant learning rate and effective weight decay schedules $\lambda_e^*, \eta^*$, there exists a mixing time $T = O(1 / \lambda^*_e)$ \footnote{Here we assume the both initial weight norm and the initial LR are of constant magnitude. Otherwise there will be a multiplicative $\log\gamma_0$ factor in the mixing time, as indicated by \Cref{eq:gamma-avg}. This $\log$ dependency can usually be ignored in practice, unless the initial weight norm or LR are extremely large or small. See \Cref{fig:unfavorable_init}.} such that for any input data $\vx$ from some input domain $\mathcal{X}$,
	$\dTV\left(P_{F\left(\vW_t;~\vx\right)}, P_{F\left(\vW'_t;~\vx\right)} \right) \approx 0$ for all $t \ge T$, where $\vW_t \sim \nu(\mu; \lambda_e^*, \eta^*, t)$, $\vW'_t \sim \nu(\mu'; \lambda_e^*, \eta^*, t)$.
\end{conjecture}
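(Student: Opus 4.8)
The plan is to split $\vW_t$ into its radial part $G_t=\norm{\vW_t}^2$ and its angular part $\hW_t$, prove a forgetting (contraction) estimate for each, and then recombine through a joint coupling before pushing forward to function space. The radial part is essentially handled by \Cref{lm:gamma-decay}: since the right-hand side of \eqref{eq:sde-rho} is, for a fixed angular input $\hW$, a strictly decreasing function of $G$, two radial paths driven by the same $\hW$-trajectory satisfy $|G_t-G'_t|\le e^{-2\lambda_e t}|G_0-G'_0|$, and more generally \eqref{eq:gamma-avg} shows $\gamma_t-\gamma'_t$ is controlled by $e^{-4\lambda_e t}$ plus the accumulated discrepancy of $\Tr(\mSigma_{\hW_\tau})$ between the two runs. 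So everything reduces to showing that the angular process on the unit sphere, governed by \eqref{eq:gamma_W}, forgets its initial condition in time $O(1/\lambda_e)$ --- at least as seen through the scale-invariant map $\vw\mapsto F(\vw;\vx)$.

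For the angular process, first I would freeze the time dependence. By \Cref{lm:gamma-decay} the squared effective learning rate $\gamma_t$ relaxes to $(1+O(\eps))\sigma^2/(2\lambda_e)$ after time $O(1/\lambda_e)$ (up to the $\log\gamma_0$ factor noted in \Cref{conj:weaker}), so the diffusion coefficient $\gamma_t^{-1}=\eta^2/G_t^2$ in \eqref{eq:gamma_W} settles at order $\lambda_e$; past that point \eqref{eq:gamma_W} is, to leading order, a time-homogeneous diffusion on the sphere whose generator $\mathcal{L}=\tfrac12\,\nabla\!\cdot(\mSigma_{\hW}\nabla)-\langle\beta(\hW),\nabla\rangle$ (drift $\beta$ built from $\nabla L$ and the It\^o correction) carries the factor $\lambda_e$. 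Rescaling time by $s=\lambda_e t$ removes that factor and leaves an $O(1)$-speed diffusion on the sphere with some invariant measure $\pi$ (the analogue of a Gibbs measure at the energy scale $\Theta(1/\lambda_e)$). If this rescaled diffusion admits a Poincar\'e (or log-Sobolev) inequality with a constant bounded below by an absolute constant $\rho$ --- independent of $\lambda_e$ and of the dimension --- then a synchronous/reflection coupling (or $\chi^2$-contraction) gives $\dTV(P_{\hW_t},P_{\hW'_t})\le Ce^{-\rho\lambda_e(t-T_0)}$ with $T_0=O(1/\lambda_e)$. Running the joint $(\hW_t,G_t)$-coupling then yields $\dTV(P_{\vW_t},P_{\vW'_t})\le o(1)$ for $t\ge T=O(1/\lambda_e)$, and the data-processing inequality $\dTV(P_{F(\vW_t;\vx)},P_{F(\vW'_t;\vx)})\le\dTV(P_{\vW_t},P_{\vW'_t})$ finishes the conjecture.

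The main obstacle is precisely the functional-inequality assumption used in the last step: a \emph{dimension-free and $\lambda_e$-free} spectral gap for the rescaled spherical diffusion. This cannot follow from generic geometry: if the sphere-restricted loss $L$ had two basins separated by a barrier of height $h$ at the relevant energy scale, the Eyring--Kramers law would force mixing time $\asymp e^{\Theta(h/\lambda_e)}$, matching the pessimistic $e^{O(1/\eta)}$ bound of \citep{shi2020learning,bovier2004metastability} and contradicting \Cref{conj:weaker}. A proof must therefore inject a structural hypothesis about real networks --- e.g.\ that along the manifold actually visited by SGD the function-space landscape has no exponentially deep wells at this scale, which is consistent with the near-convexity in function space suggested by the SWA experiments in \Cref{fig:swa} --- or exhibit a mechanism by which BN's scale-invariance together with the anisotropic, position-dependent noise $\mSigma_{\hW}$ flattens barriers at exactly the scale $\Theta(1/\lambda_e)$. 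Pending that, a realistic intermediate target is the conditional statement ``\emph{if} the sphere-restricted loss obeys a Poincar\'e inequality with an $O(1)$ constant at energy scale $\Theta(1/\lambda_e)$, \emph{then} \Cref{conj:weaker} holds with $T=O(1/\lambda_e)$,'' together with a direct verification of that hypothesis in a tractable surrogate such as a scale-invariant quadratic model or the toy loss $L(x,y)=x^2/(x^2+y^2)$, where $\mathcal{L}$ is explicit.
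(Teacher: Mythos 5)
The statement you are asked to prove is a \emph{conjecture}: the paper deliberately does not prove it. What the paper establishes rigorously is only the radial half of your decomposition --- \Cref{lm:evolution} and \Cref{lm:gamma-decay} show that $G_t$ (equivalently $\gamma_t$) relaxes to its stationary value in $O(1/\lambda_e)$ time under the bounded-noise-trace assumption --- and the function-space mixing claim is supported solely by the experiments of \Cref{sec:evidence} (e.g.\ \Cref{fig:random_hist,fig:vgg_fast_conjecture}), with the paper explicitly stating that identifying sufficient conditions ``remains a future work.'' So there is no proof to match your attempt against; the honest comparison is between your reduction and the paper's informal intuition, and on that score your sketch is a faithful and somewhat sharper formalization: the radial contraction estimate is consistent with \eqref{eq:sde-rho} and \eqref{eq:gamma-avg}, the data-processing step is correct, and you correctly isolate the entire difficulty in a dimension-free, $\lambda_e$-independent spectral gap for the angular diffusion on the sphere.

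That said, you should not present this as a proof, because the Poincar\'e hypothesis you need is not a technical lemma --- it is essentially a restatement of the conjecture itself, and your own Eyring--Kramers remark shows it is \emph{false} for generic landscapes. Two further cautions. First, check the scaling in your time-homogeneous reduction: in \eqref{eq:gamma_W} the diffusion matrix scales as $\gamma_t^{-1}\mSigma \propto \lambda_e$, but the $\nabla L$ drift scales as $\gamma_t^{-1/2} \propto \lambda_e^{1/2}$, so after rescaling $s=\lambda_e t$ the drift-to-noise ratio diverges like $\lambda_e^{-1/2}$; the rescaled process is a \emph{low-temperature} diffusion whose invariant measure concentrates near minimizers of $L$ on the sphere, not an $O(1)$-speed diffusion with benign geometry, and the generic barrier-crossing exponent is $\Theta(h/\lambda_e^{1/2})$ rather than $\Theta(h/\lambda_e)$ in your normalization. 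Second, the conjecture is stated in function space precisely because mixing in parameter space provably fails (\Cref{sec:incompat}); your coupling delivers total-variation closeness of $P_{\vW_t}$ and $P_{\vW'_t}$, which is strictly stronger than what is claimed and stronger than what the SWA evidence suggests is true. A proof route consistent with the paper's evidence would need the contraction to act only on the pushforward through $\vw\mapsto F(\vw;\vx)$, allowing the parameter-space distributions to remain far apart. Your proposed conditional theorem, with the hypothesis weakened to a functional inequality for the pushforward dynamics and verified on the toy loss $L(x,y)=x^2/(x^2+y^2)$, is the right intermediate target.
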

It is worth to note that this conjecture obviously does not hold for some pathological initial distributions, e.g., all the neurons are initially dead. But we can verify that our conjecture holds for many initial distributions that can occur in training neural nets, e.g., random initialization, or the distribution after training with certain schedule for certain number of epochs. It remains a future work to theoretically identify the specific condition for this conjecture.

Interesting, we empirically find that the above conjecture still holds even if we are allowed to fine-tune the model before producing the output. This can be modeled by starting another SDE from $t \ge T$:
\begin{conjecture}[Fast Equilibrium Conjecture, Strong Form]
	Let  $\tilde{\eta}(\tau), \tilde{\lambda}_e(\tau)$ be a pair of learning rate and effective weight decay schedules. Under the same conditions of \Cref{conj:weaker}, there exists a mixing time $T = O(1 / \lambda^*_e)$ such that for any input data $\vx$ from some input domain $\mathcal{X}$, $\dTV\left(P_{F\left(\vW_{t, \tau};~\vx\right)}, P_{F\left(\vW_{t, \tau}';~\vx\right)} \right) \approx 0$ for all $t \ge T$, where $\vW_{t,\tau} \sim \nu\left(\nu(\mu; \lambda_e^*, \eta^*, t); \tilde{\lambda}_e, \tilde{\eta}, t\right)$, $\vW'_{t, \tau} \sim \nu\left(\nu(\mu'; \lambda_e^*, \eta^*, t); \tilde{\lambda}_e, \tilde{\eta}, t\right)$.
\end{conjecture}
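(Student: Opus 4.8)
The plan is to derive the strong form from the basic form (\Cref{conj:weaker}) by a data‑processing argument, so that the only genuinely new ingredient is a statement about which features of $\vW_t$ are still present at the start of the fine‑tuning phase. First I would recast the dynamics so that the ``state that matters'' is isolated: by \Cref{lm:evolution} the post‑reparametrization evolution is driven entirely by the pair $(\hW_t,\gamma_t)$ through \eqref{eq:gamma_W} and \eqref{eq:gamma}, and $F(\vw;\vx)$ depends on $\vw$ only through $\hW_t$, in fact only through $\hW_t$ modulo the internal symmetries of a normalized net (neuron permutations, within‑group rescalings), under which \eqref{eq:gamma_W} is equivariant. Write $S(\vW_t)$ for this reduced state (a point on the unit sphere modulo net symmetries, together with the scalar $\gamma_t$). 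The statement I would actually try to prove, a mild strengthening of \Cref{conj:weaker}, is: for $t\ge T=O(1/\lambda_e^*)$ the laws $P_{S(\vW_t)}$ and $P_{S(\vW'_t)}$ under the two initializations $\mu,\mu'$ are $\approx 0$ apart in $\dTV$; pushing forward through $\hW\mapsto F(\hW;\vx)$ then recovers the per‑$\vx$ claim of \Cref{conj:weaker}, since $\dTV$ cannot increase under a fixed map.

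Granting this, the fine‑tuning phase is handled by viewing it as a Markov kernel. Running \eqref{eq:SDE_wd} with schedules $\tilde{\eta},\tilde{\lambda}_e$ for an additional time $\tau$ and then reading off $F(\cdot;\vx)$ is a measurable functional of the reduced state at the start of the phase together with a fresh Brownian path; moreover, by \Cref{lm:gamma-decay} applied with rate $\tilde{\lambda}_e$, the $\gamma$‑coordinate forgets its value at the start of the phase within $O(1/\tilde{\lambda}_e)$ time and re‑equilibrates to $(1+O(\epsilon))\,\sigma/\sqrt{2\tilde{\lambda}_e}$, so even the $\gamma$‑component of the input may be taken as concentrated and initialization‑independent. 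Calling the resulting kernel $K$, the contraction of total variation under Markov kernels gives $\dTV(KP_{S(\vW_t)},KP_{S(\vW'_t)})\le \dTV(P_{S(\vW_t)},P_{S(\vW'_t)})\approx 0$ for all $t\ge T$, which is exactly the strong form, uniformly over $\tau$ and over the fine‑tuning schedules $\tilde{\eta},\tilde{\lambda}_e$.

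The main obstacle is the step I quietly assumed: that the law of $S(\vW_t)$ mixes in $O(1/\lambda_e^*)$ time. This is precisely the phenomenon that generic Langevin theory fails to predict --- the worst‑case bound there is $e^{O(C/\eta)}$ (\citep{bovier2004metastability,shi2020learning}), controlled by escape from deep metastable wells --- so any honest proof must exploit structure special to normalized deep nets that rules out such wells along the SGD trajectory, for instance the approximate convexity of the loss along SGD paths suggested by the SWA experiments in \Cref{fig:swa}, or an $\Omega(\lambda_e^*)$ spectral gap for the generator of \eqref{eq:gamma_W} restricted to the symmetry‑reduced sphere. A secondary difficulty is that \Cref{conj:weaker} as literally stated controls only the marginal law of $F(\vW_t;\vx)$ for each fixed $\vx$, which is formally weaker than control of $P_{S(\vW_t)}$ that the coupling argument needs; closing this gap --- e.g.\ by an identifiability argument showing that the family of per‑datapoint output laws pins down the law of the reduced state --- is itself nontrivial, and I expect most of the work beyond the basic conjecture to live there.
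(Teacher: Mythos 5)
There is a fundamental mismatch here: the statement you are asked to prove is a \emph{conjecture}, and the paper offers no proof of it. Its only support in the paper is experimental (the MNIST fine-tuning curves in \Cref{fig:mnist}, the total-variation estimates in \Cref{fig:mnist-tv}, and the post-decay accuracy matches in \Cref{fig:random_hist,fig:7decay}) together with the heuristic norm/effective-LR analysis of \Cref{lm:gamma-decay}, which the authors explicitly contrast with the $e^{O(C/\eta)}$ worst-case mixing bounds of Langevin theory. So there is no ``paper proof'' to compare against, and your proposal cannot be judged correct as a proof --- indeed, you yourself identify that its load-bearing step (an $O(1/\lambda_e^*)$ mixing time for the reduced state, e.g.\ via a spectral gap of order $\lambda_e^*$ for the generator of \eqref{eq:gamma_W} on the symmetry-reduced sphere) is exactly the open content of the conjecture. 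A conditional reduction that assumes the hard part is not a proof of the statement.

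That said, the reduction itself is a reasonable way to organize the problem, and your two flagged difficulties are the right ones, so let me sharpen them. First, the data-processing step $\dTV(KP, KP') \le \dTV(P, P')$ is only useful if the input laws $P, P'$ are laws of a state on which the fine-tuning kernel $K$ genuinely acts; \Cref{conj:weaker} controls only the per-$\vx$ marginals of $F(\vW_t;\vx)$, and the paper is emphatic (\Cref{sec:incompat}) that the law of $\vW_t$ itself does \emph{not} mix, so you must pass through an intermediate object. Your candidate $S(\vW_t)$ (direction modulo network symmetries, plus $\gamma_t$) is a sensible choice, but note that the function-space equivalence class of a parameter can be strictly larger than its symmetry orbit, so ``the per-datapoint output laws pin down the law of $S$'' is not merely nontrivial but possibly false as stated; you would need the fine-tuning kernel to descend to the function-space quotient, not the symmetry quotient. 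Second, the claim that $\gamma$ re-equilibrates under $\tilde{\lambda}_e$ via \Cref{lm:gamma-decay} requires $\tilde{\lambda}_e > 0$ and a noise trace bounded as in that lemma along the fine-tuning trajectory, whereas the experimentally relevant fine-tuning regime in the paper sets $\tilde{\lambda}_e = 0$ (e.g.\ epoch $121$ onward in \Cref{fig:mnist-acc}), where $\gamma$ does not forget its initial condition at all and the data-processing argument must carry the $\gamma$-coordinate through $K$ rather than discard it. Neither issue is fatal to the reduction, but both would need to be resolved before even the conditional statement is rigorous.
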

In the appendix we provide the discrete version of the above conjecture by viewing each step of SGD as one step transition of a Markov Chain. By this means we can also extend the conjecture to SGD with momentum and even Adam. 	

\subsection{What happens in real life training -- An interpretation} \label{sec:twop}
Let's first recap  Step Decay -- there are $K$ phases and LR in  phase $i$ is $\bar{\eta}_i$. Below we will  explain or give better interpretation for some phenomena in real life training related to Step Decay. 

\myparagraph{Sudden increase of test error and training loss after every LR decay:} Usually here LR is dropped by something like a factor $10$. As shown above, the instant effect is to reduce effective LR  by a factor of $10$, but it gradually equilibriates to the value $\lambda_e^{-1/2}$, which is only reduced by a factor of $\sqrt{10}$. Hence there is a slow rise in error after every drop, as observed in previous works~\citep{zagoruyko16wide,zhang2018three,li2020exp}. This rise could be beneficial since it coincides with equilibrium state in function space.

\myparagraph{Intrisic LR and the final LR decay step:} However, the final LR decay needs to be treated differently. It is customary do early stopping, that finish very soon  after the final LR decay, when accuracy is best. The above paragraph can help to explain this by decomposing the training after the final LR decay into two stage. In the first stage, the effective LR is very small, so the dynamics is  closer to the classical gradient flow approximation, which can settle into a local basin. In the second stage, the effective LR increases to the stationary value and brings larger noise and worse performance. This decomposition also applies  to earlier LR decay operations, but the phenomenon is more significant for the final LR decay because the convergence time $O(1/\lambda_e)$ is much longer.

 \begin{figure}[t]
 	\centering
 	\begin{subfigure}[b]{0.49\textwidth}
 		\centering
 		\includegraphics[width=0.8\linewidth]{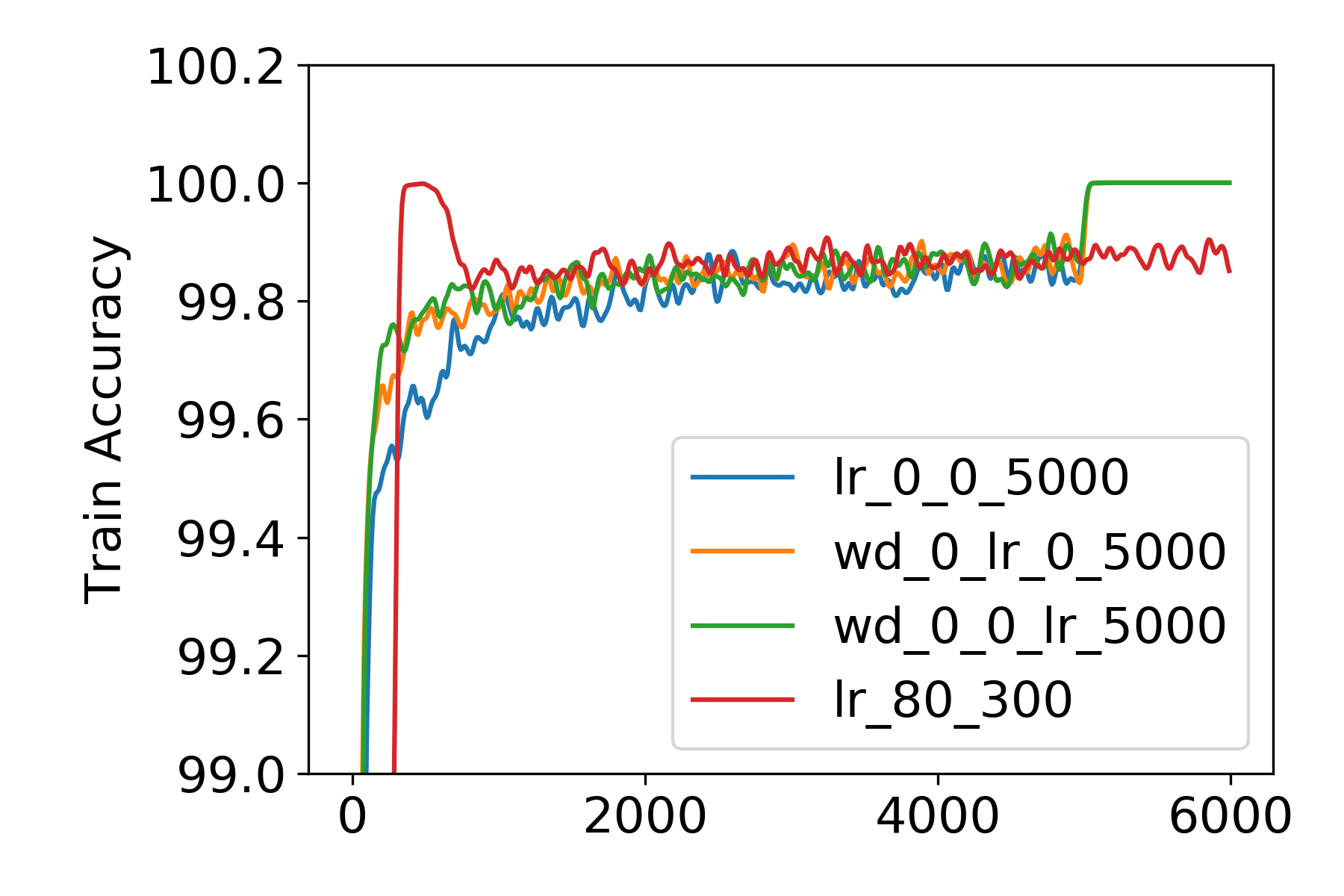}
 		\caption{Train accuracy}
 	\end{subfigure}%
 	\begin{subfigure}[b]{0.49\textwidth}
 		\centering
 		\includegraphics[width=0.8\linewidth]{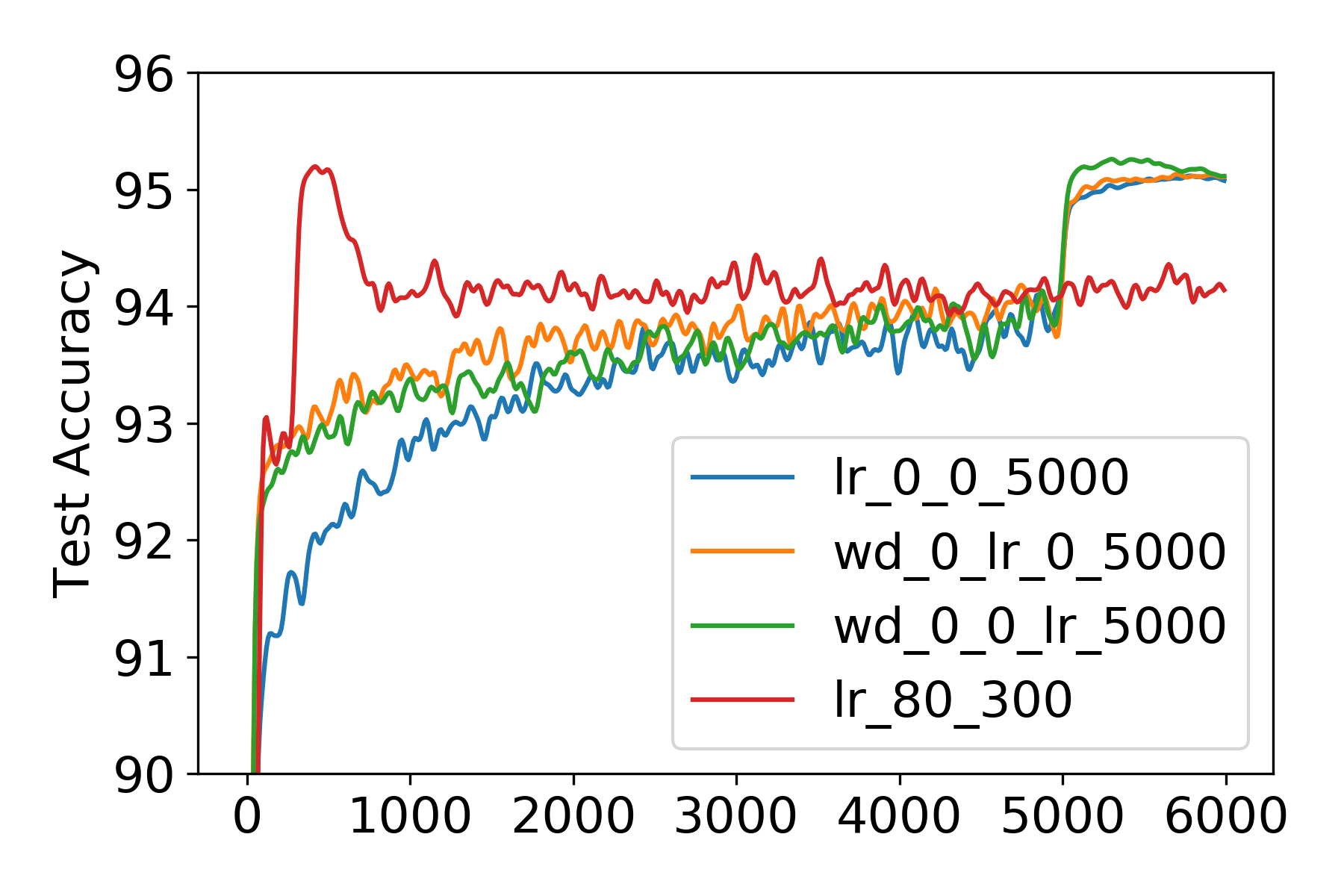}
 		\caption{Test accuracy}
 	\end{subfigure}%
 	
 	\begin{subfigure}[t]{0.49\textwidth}
 		\centering
 		\includegraphics[width=0.8\linewidth]{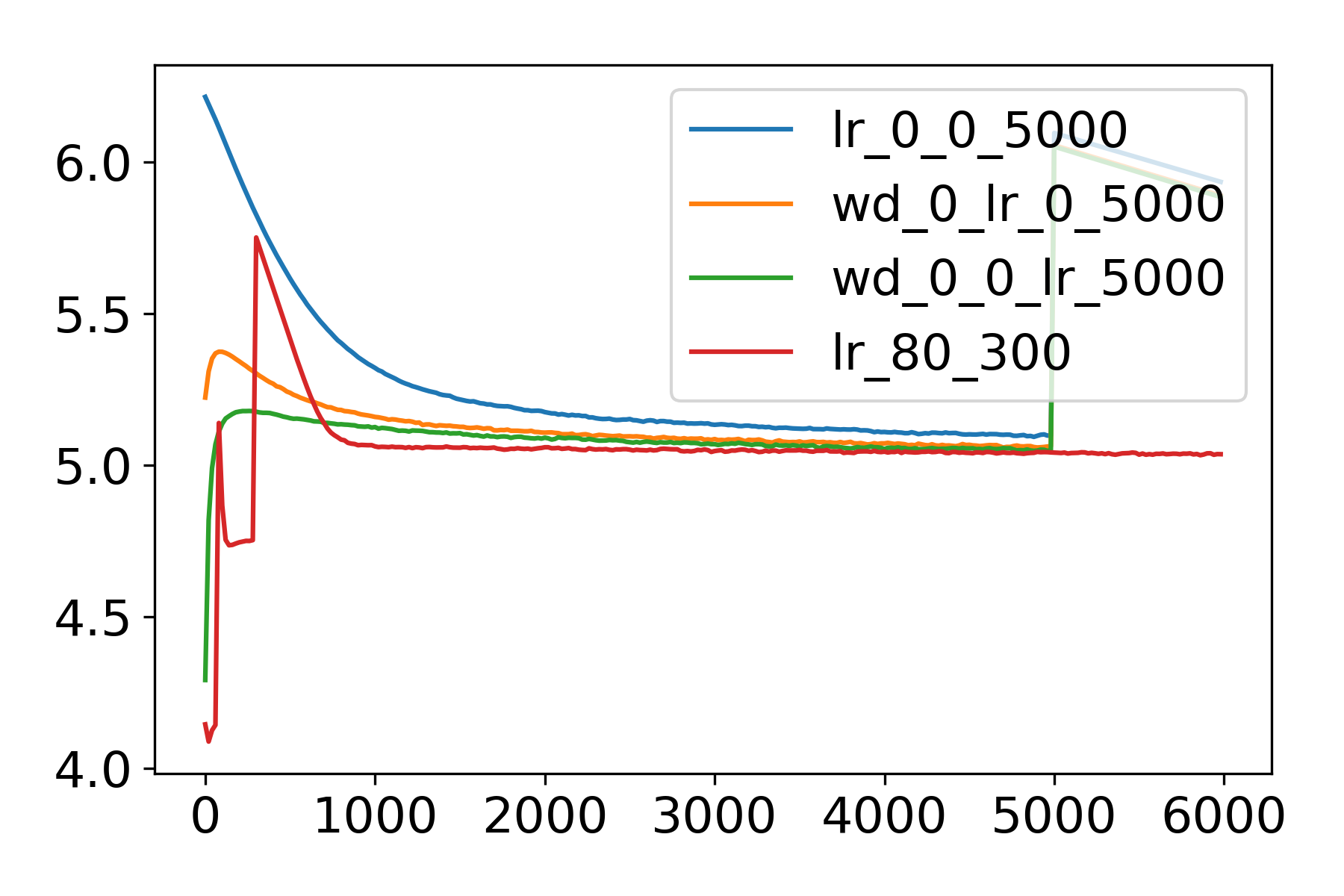}
 		\caption{$\log_{10} \gamma_t^{1/2}$ (effective LR is $\gamma_t^{-1/2}$ )}
 		\label{fig:small_lr_sota_norm}
 	\end{subfigure}%
 	\begin{subfigure}[t]{0.49\textwidth}
 		\centering
 		\includegraphics[width=0.8\linewidth]{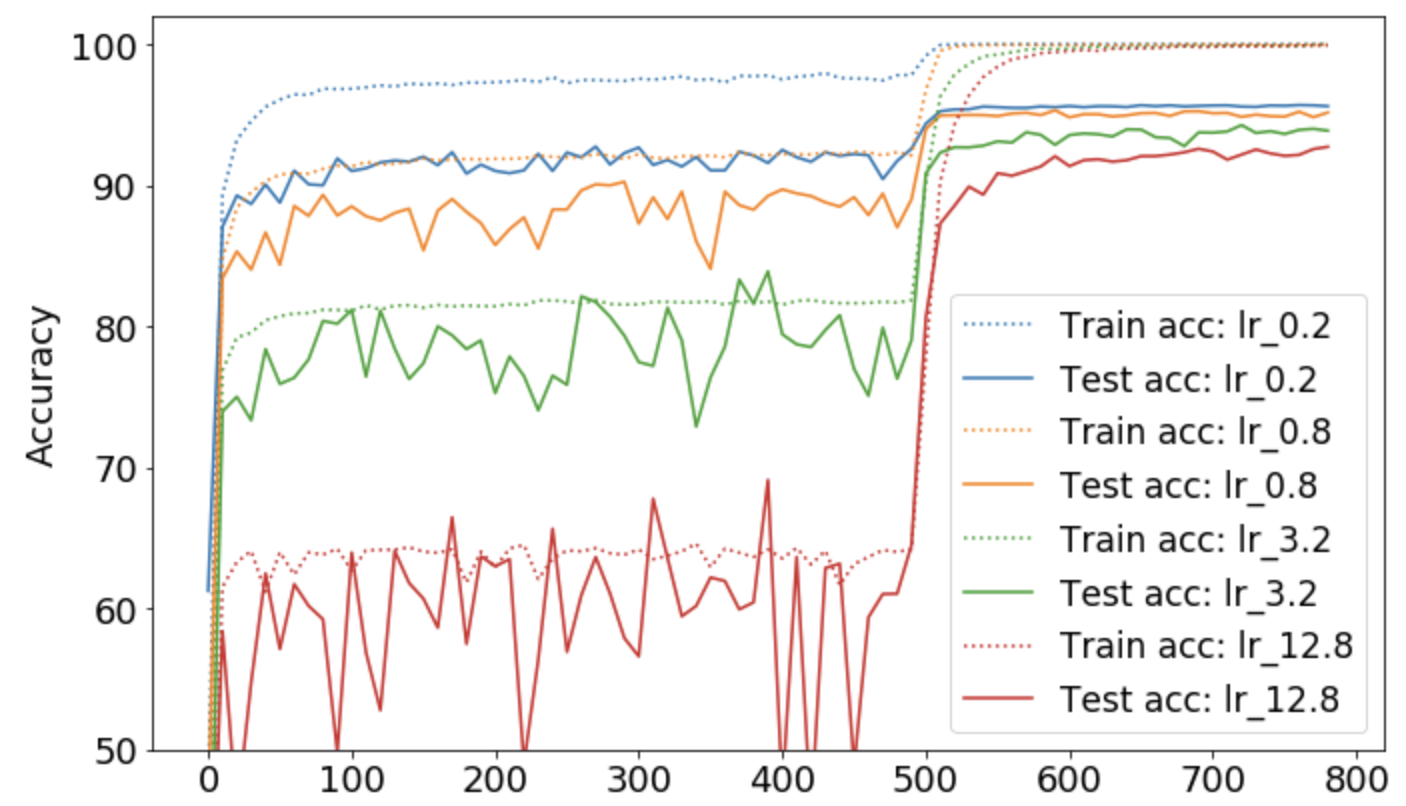}
 		\caption{Train/Test Accuracy}
 		\label{fig:which_equilibrium}
 	\end{subfigure}%
 	\caption{\small \textbf{(a)-(c)}  Achieving SOTA test accuracy by $0.9$-momentum SGD with small learning rates (the blue line). The initial learning rate is 0.1, initial WD factor is 0.0005. The label \texttt{wd\_x\_y\_lr\_z\_u} means dividing WD factor by 10 at epoch $x$ and $y$, and dividing LR by 10 at epoch $z$ and $u$. For example, the blue line means dividing LR by 10 twice at epoch 0, i.e. using an initial LR of 0.001 and dividing LR by 10 at epoch 5000. The red line is baseline.  \textbf{(d)} Equilibrium of smaller intrinsic LR leads to higher test accuracy on CIFAR after LR decay by 10. PreResNet32 trained with SGD without momentum and with WD factor 0.0005. }
 	\label{fig:small_lr_sota}
 \end{figure}
 
Since each phase in Step decay except the last one is allowed to reach equilibrium, the above conjecture suggests the generalization error of Step Decay schedule only depends on the intrinsic LR for its last equilibrium, namely the second-to-last phase. Thus, Step Decay could be abstracted into the following general \emph{two-phase training} paradigm, where the only hyper-parameter of SGD that affects generalization is the intrinsic LR, $\lambda_e$:
\begin{enumerate}[topsep=0pt,itemsep=0pt]
\item \textbf{SDE Phase.} Reach the equilibrium of (momentum) SGD with $\lambda_e$ (as fast as possible).
\item \textbf{Gradient Flow Phase.} Decay the learning rate by a large constant, e.g., 10, and set $\lambda_e=0$. Train until loss is zero.
\end{enumerate}
The above training paradigm  says for good generalization, what we need is only \emph{reaching the equilibrium of small (intrinsic) LR and then decay LR and stop quickly}. In other words, the initial large LR should not be necessary to achieve high test accuracy. Indeed our experiments show that networks trained directly with the small intrinsic LR, though necessarily for much longer due to slower mixing, also achieve the same performance. See \Cref{fig:small_lr_sota} for SGD with momentum and \Cref{fig:small_lr_sgd_sota} for vanilla SGD.
 
\myparagraph{So what's the benefit of early large learning rates?} Empirically we observed that initial large (intrinsic) LR leads to a faster convergence of the training process to the equilibrium. See the red lines in \Cref{fig:small_lr_sota}. A natural guess for the reason is that directly reaching the equilibrium of small intrinsic LR from the initial distribution is slower than to first reaching the equilibrium of a larger intrinsic LR and then the equilibrium of the target small intrinsic LR. This has been made rigorous for the mixing time of SDE in parameter space~\citep{shi2020learning}. In our setting, we show in \Cref{appsec:benefit_early_large_lr} that this argument makes sense at least for norm convergence: the initial large LR reduces the gap between the current norm and the stationary value corresponding to the small LR, in a much shorter time. In \Cref{fig:unfavorable_init}, we show that the early large learning rate is crucial for the learnability of normalized networks with initial distributions with extreme magnitude. Intriguingly, though without a theoretical analysis, early large learning rate experimentally (see  \Cref{fig:small_lr_sota_norm}) accelerates norm convergence and convergence to equilibrium even with momentum.

\myparagraph{But is it worth waiting for the equilibrium of small (intrinsic) LR?} In \Cref{fig:which_equilibrium} we show that different equilibrium does lead to different performance after the final LR decay. Given this experimental result we speculate the basins of different scales in the optimization landscape seems to be nested, i.e., a larger basin can contain multiple smaller basins of different performances. And reaching the equilibrium of a smaller intrinsic LR seems to be a stronger regularization method, though it also costs much more time.

\myparagraph{Batch size and linear scaling rule:} Recall the batch loss is $\Loss(\vw; \Batch) = \frac{1}{\lvert \Batch \rvert}\sum_{b \in \Batch} \ell_\Batch(\vw; \vx_b, y_b)$. If $\ell_\Batch$ is independent of $\Batch$, such as GroupNorm or LayerNorm is used instead of BatchNorm, we have $\Sigma^{B}_\vw = \frac{1}{B}\Sigma^{1}_\vw$, where $\Sigma^{B}_\vw$ is the noise covariance when the batch size is $B$. Therefore, let $\vW^{B,\eta}_t$  denote the solution in \Cref{eq:sgd-sde}, we have $\vW^{B,\eta}_t = \vW^{1,\frac{\eta}{B}}_{Bt}$, given that the initialization are the same, i.e. $\vW^{B,\eta}_0 = \vW^{1,\frac{\eta}{B}}_0$. In other words, up to a time rescaling, doubling the batch size $B$ is equivalent to halving down LR $\eta$ for all losses in the SDE regime, which is a.k.a.~\emph{linear scaling rule}~\citep{goyal2017accurate}. Therefore, in the special case of sclale-invariant loss and WD, we conclude that $\frac{\lambda_e}{B}$ alone  determines the equilibrium of SDE. 

However, this analysis has the following weaknesses and thus we treat batch size $B$ as a fixed hyper-parameter in this paper: (1) It's less general. For example, it doesn't work for BN, especially when batch size goes to 0, as $\Sigma^{B}_\vw $ can be significantly different from $\frac{1}{B}\Sigma^{1}_\vw$ due to the noise in batch statistics; (2) Unlike the equivalence between LR and WD, which holds exactly even for SGD \citep{li19learningrate}, the equivalence between batch size $B$ and intrinsic LR $\lambda_e$, only holds in the regime where $\lambda_e$ and $B$ are small and is less meaningful for the more interesting direction, i.e., acceleration via large-batch training~\citep{smith2020generalization}.

\section{Experimental Evidence of Theory} \label{sec:evidence}
\subsection{Equilibrium is independent of the initial distribution}\label{subsec:eq_idp_init}
In this subsection we aim to show that the equilibrium only depends on the intrinsic LR, $\lambda_e = \eta\lambda$, and is independent of the initial distribution of the weights and individual values of $\eta$ and $\lambda$. 

\begin{figure}[t]
	\begin{minipage}{\textwidth}
		\centering
		\begin{subfigure}[b]{0.4\textwidth}
			\centering
			\includegraphics[scale=0.4]{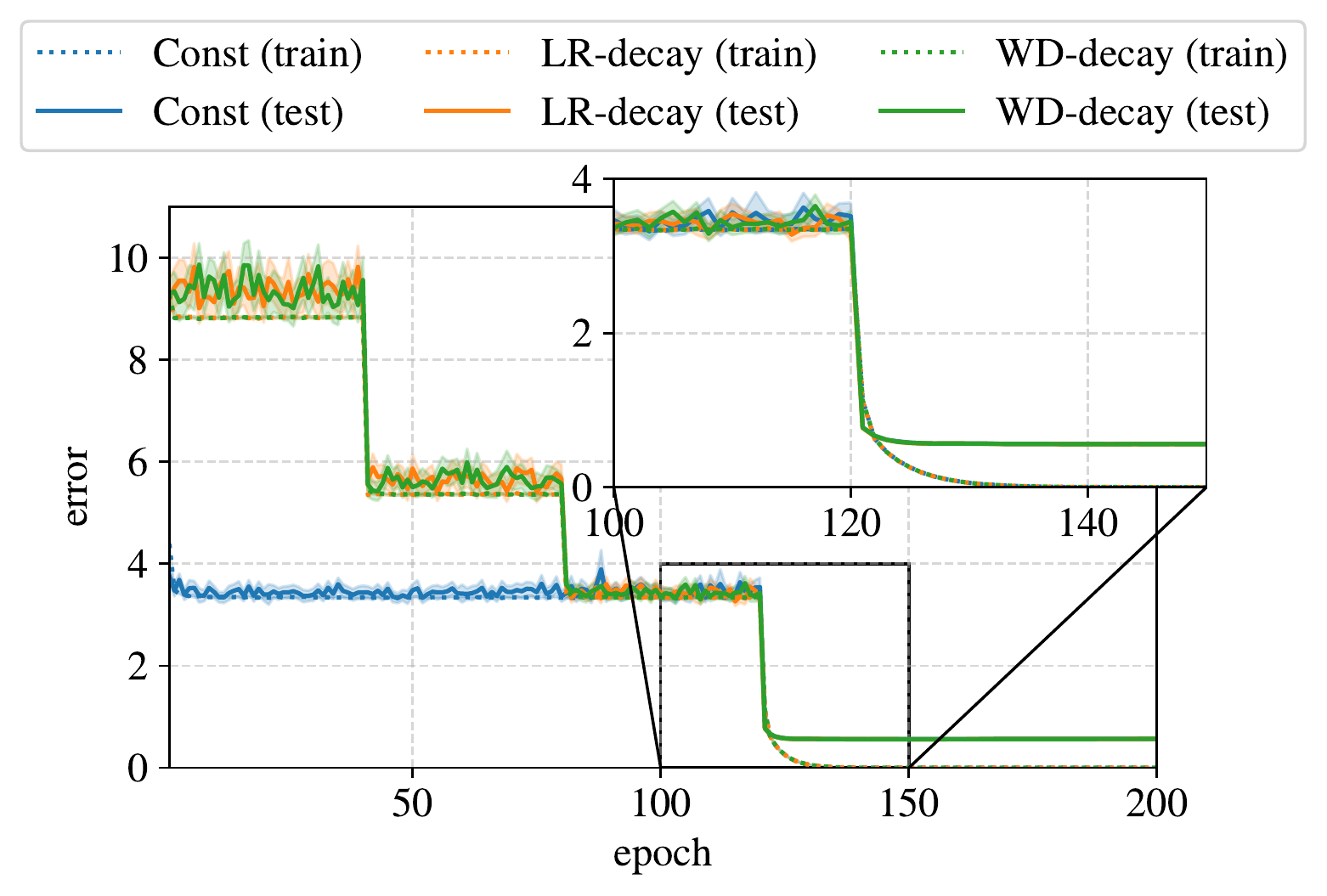}
			\caption{Train/test accuracy} \label{fig:mnist-acc}
		\end{subfigure}%
		\begin{subfigure}[b]{0.6\textwidth}
			\centering
			\includegraphics[scale=0.4]{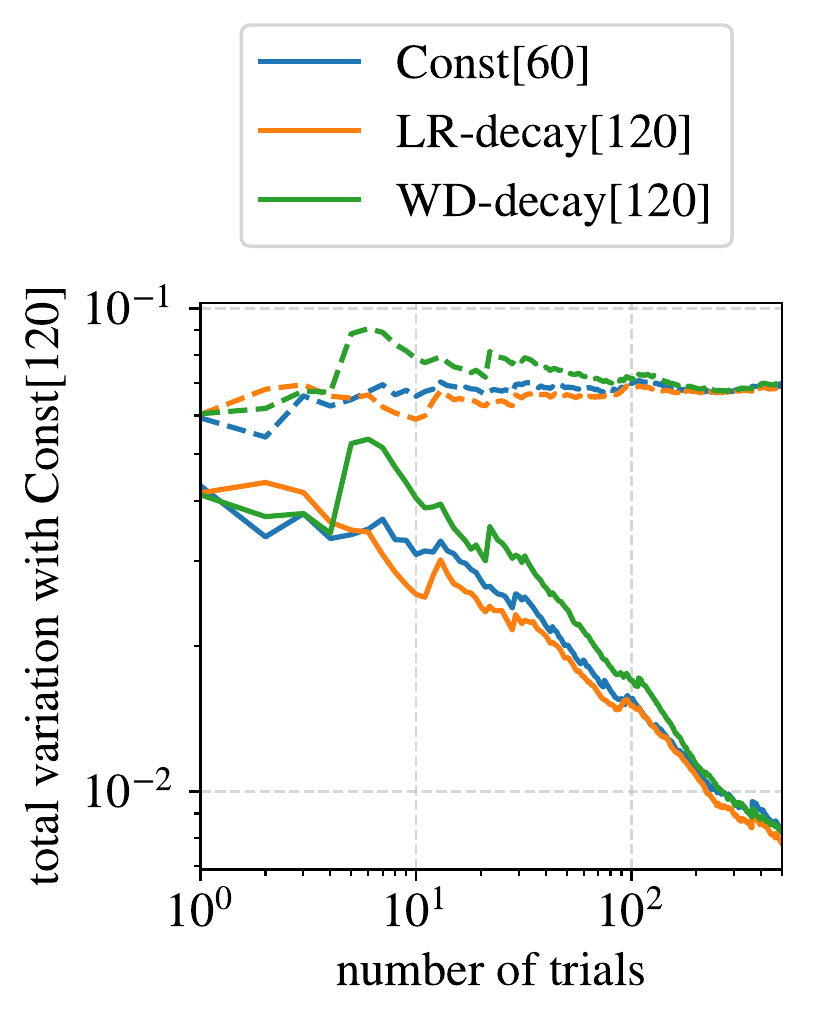}
			\includegraphics[scale=0.4]{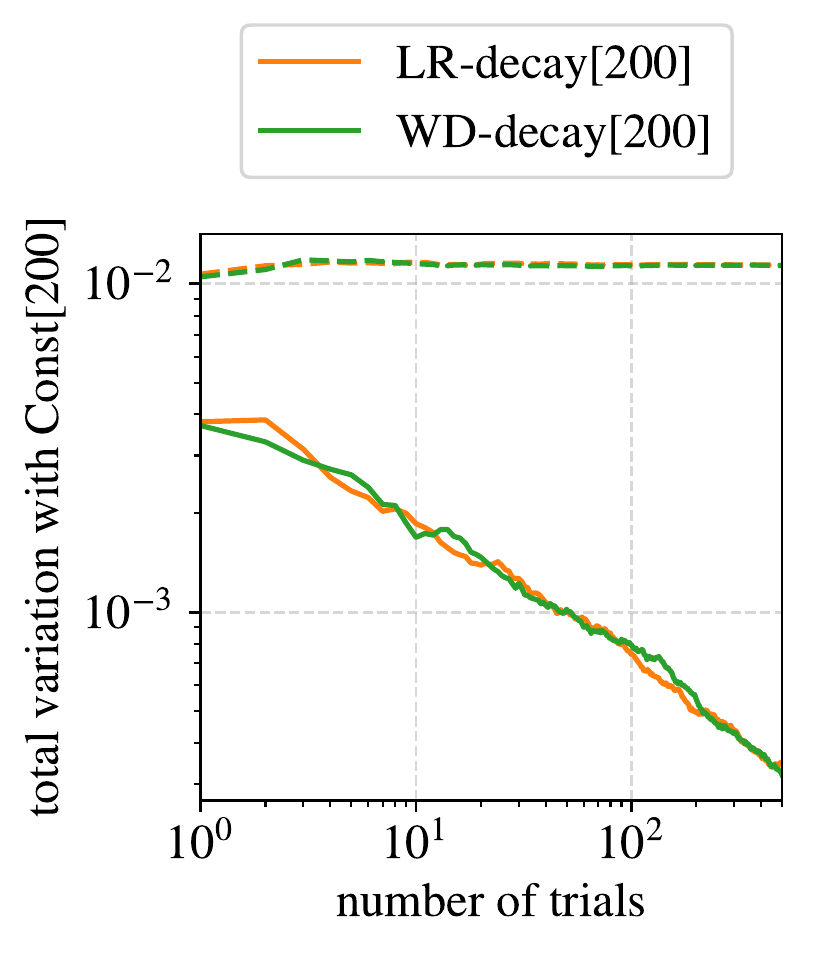}
			\caption{Total variation} \label{fig:mnist-tv}
		\end{subfigure}%
	\end{minipage}

	\caption{\footnotesize A simple 4-layer CNN trained on MNIST with three schedules converge to the same equilibrium after intrinsic LRs become equal at epoch $81$. \textbf{(a)} The train/test errors (averaged over 500 trials) are almostly the same from epoch $81$. \textbf{(b)} We estimate the total variation between the empirical distribution of the predictions on test images for neural nets trained with schedule \texttt{Const} and other schedules for 120/200 epochs (solid lines). The estimated value decreases with the number of trials. For comparison, the dashed lines are the sum of averaged test errors of each pair of training processes, which can be seen as baselines since the sum is the total variation when the set of images that lead to wrong predictions for the two training processes are completely different.} \label{fig:mnist}
\end{figure}

\myparagraph{MNIST Experiments.} We use a simple 4-layer CNN for MNIST. To highlight the effect of scale-invariance, we make the CNN scale-invariant by fixing the last linear layer as well as the affine parameters in every BN. \Figref{fig:mnist-acc} shows the train/test errors for three different schedules, \texttt{Const}, \texttt{LR-decay} and \texttt{WD-decay}. Each error curve is averaged over 500 independent runs, where we call each run as a \textit{trial}. \texttt{Const} initiates the training with $\eta = 0.1$ and $\lambda = 0.1$. \texttt{LR-decay} initiates the training with 4 times larger LR and decreases LR by a factor of $2$ every $40$ epochs. \texttt{WD-decay} initiates the training with 4 times larger WD and decreases WD by a factor of $2$ every $40$ epochs. All these three schedules share the same intrinsic LR from epoch $81$ to $120$, and thus reach the same train/test errors in this phase as we have conjectured. Moreover, after we setting $\eta = 0.01$ and $\lambda = 0$ at epoch $121$ for fine-tuning, all the schedules show the same curve of decreasing train/test errors, which verifies the strong form of our conjecture.

\Figref{fig:mnist-tv} measures the total variation between predictions of neural nets trained for 120 and 200 epochs with different schedules. Given a pair of distributions $\mathcal{W}, \mathcal{W}'$ of neural net parameters (e.g., the distributions of neural net parameters after training with \texttt{LR-decay} and \texttt{WD-decay} for $200$ epochs), we enumerate each input image $x$ from the test set and compute the total variation between the empirical distributions $\mathcal{D}_x, \mathcal{D}_x'$ of the class prediction on $x$ for weights sampled from $\mathcal{W}, \mathcal{W}'$, where $\mathcal{D}_x, \mathcal{D}_x'$ are estimated through the trials. \Figref{fig:mnist-tv} shows that the average total variation over all test inputs decrease with the number of trials, which again suggests that the mixing happens in the function space.

\myparagraph{CIFAR-10 Experiments.} We use PreResNet32 for CIFAR10 with data augmentation and the batch size is 128. We modify the downsampling part according to the Appendix C in \citep{li19learningrate} and fix the last layer and 
$\gamma$, $\beta$ in every BN, to ensure the scale invariance.  In \Cref{fig:random_hist} we focus on the comparison between the performance of the networks within and after leaving the equilibrium, where the networks are initialized differently via different LR/WD schedules before switching to the same intrinsic LR. We repeat this experiment with VGG16 on CIFAR-10 (\Cref{fig:random_hist_vgg16_cifar10}) and PreResNet32 on CIFAR-100 (\Cref{fig:random_hist_cifar100}) in appendix and get the same results. A direct comparison between the effect of LR and WD can be found in \Cref{fig:7decay}.
\begin{figure}
        \centering
        \begin{subfigure}[b]{0.41\textwidth}
                \centering
				\includegraphics[width=\linewidth]{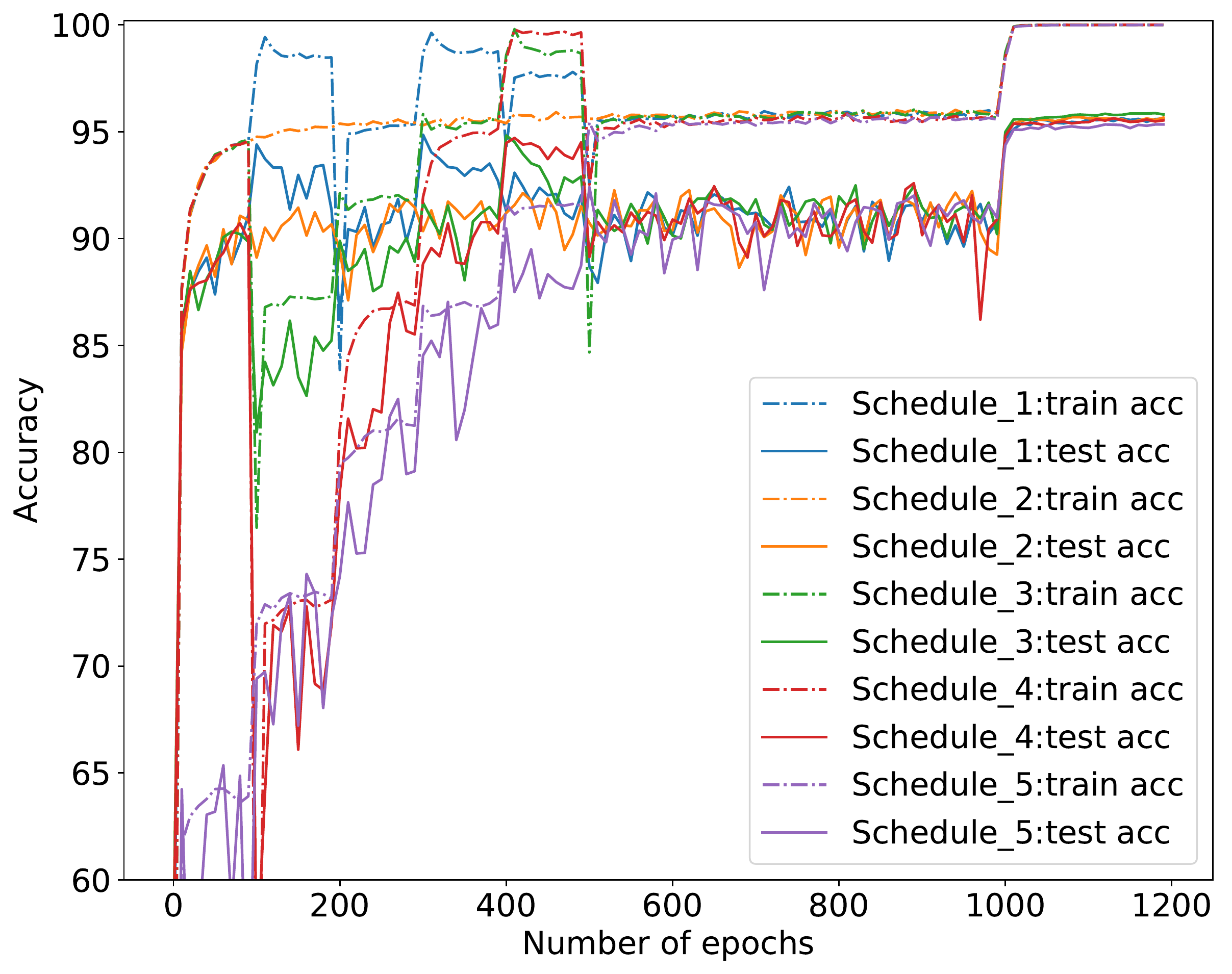}
				\caption{Train/test accuracy}
        \end{subfigure}%
        \begin{subfigure}[b]{0.58\textwidth}
                \centering
                \includegraphics[width=\linewidth]{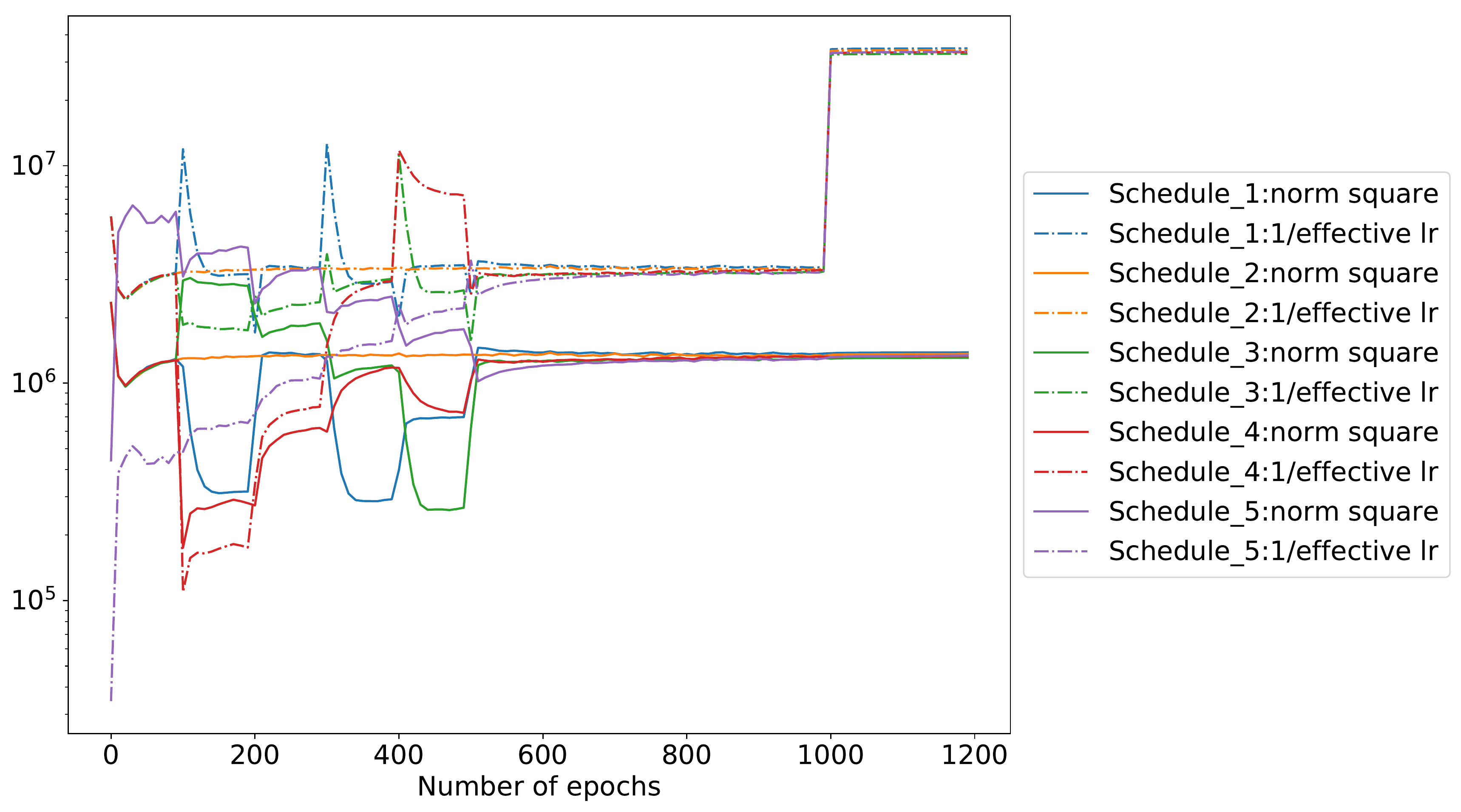}
                \caption{Norm and effective LR}
        \end{subfigure}%
        \caption{\small PreResNet32 trained by SGD with 5 random LR/WD schedules in the first 500 epochs converge to the same equilibrium when LR and WD factor are set the same at epoch 500 -- These different trajectories exhibit similar test/train accuracy,  norm and effective LR. Moreover, they achieve the same best test accuracy ($\sim 95\%$, the same as that with momentum) after decaying LR and removing WD at epoch 1000, suggesting that the equilibrium is independent of initialization. See details of the schedules in  \Cref{table:random_history}. (Appendix)}
        \label{fig:random_hist}
\end{figure}

\subsection{Reaching Equilibrium only takes $O(1/(\lambda\eta))$ steps}\label{subsec:fast_eq_exp}

\begin{figure}[t]
        \centering
        \begin{subfigure}[b]{0.41\textwidth}
                \centering				\includegraphics[width=\linewidth]{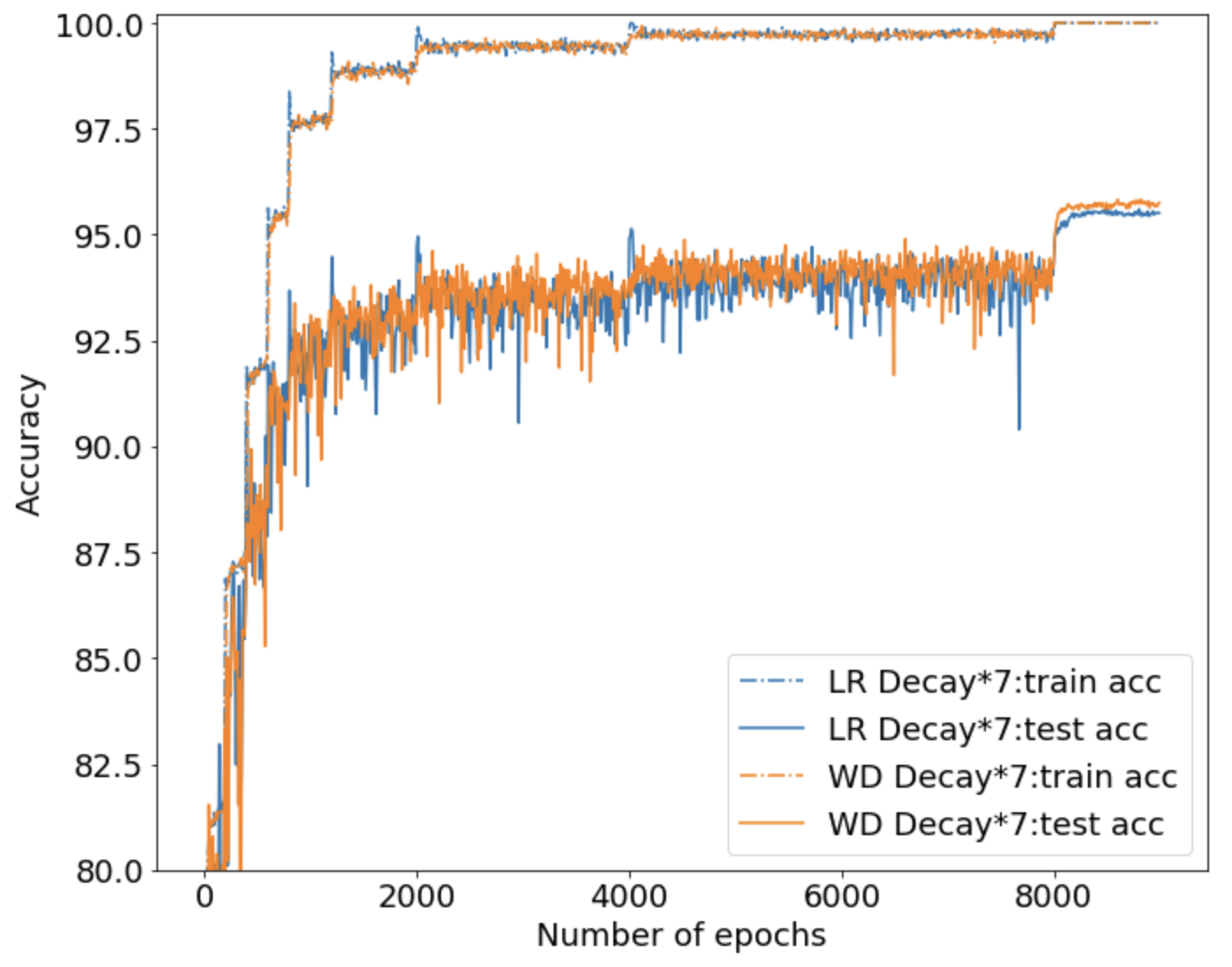}
				\caption{Train/test accuracy}
        \end{subfigure}%
        \begin{subfigure}[b]{0.58\textwidth}
                \centering
                \includegraphics[width=\linewidth]{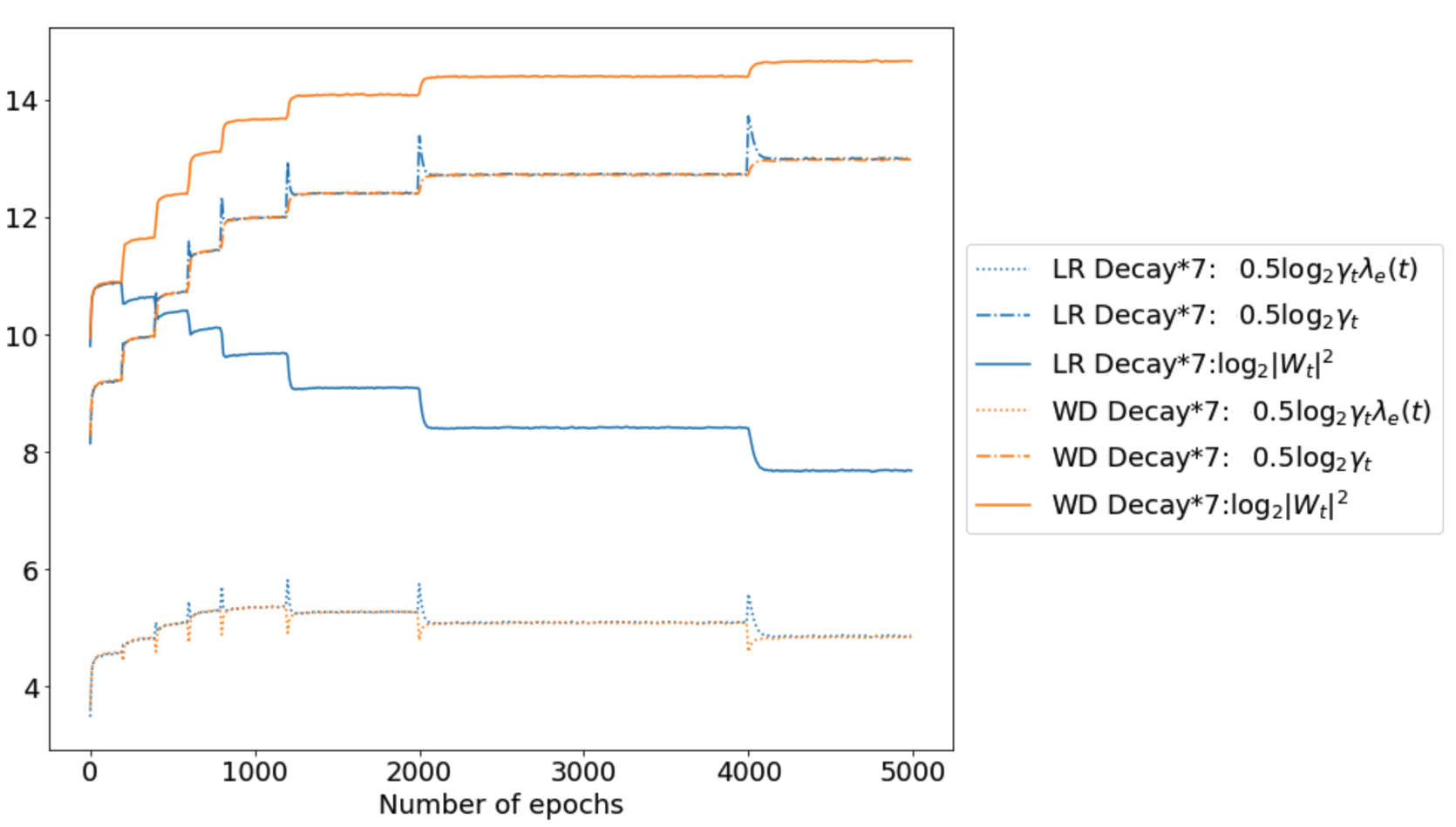}
                \caption{Norm and 1/ effective LR, $\gamma_t^{0.5}$}
        \end{subfigure}%
        \caption{\small Smaller intrinsic LR takes longer time to stabilize its norm. We train two PreResNet32 by SGD with the same initial LR$,\eta=3.2$ and WD factor, $\lambda =0.0005$ without momentum. The LR/ WD factor are divided by 2 at epoch 200, 400, 600, 800, 1200, 2000 and 4000 respectively. Still, the networks share almost the same effective LR and train/test accuracy for most of the time. The best test accuracies for both are achieved by removing WD and dividing LR by 10 at epoch 8000.}
        \label{fig:7decay}
\end{figure}

\begin{figure}[!htbp]
        \centering
        \begin{subfigure}[b]{\textwidth}
                \centering
				\includegraphics[width=\linewidth]{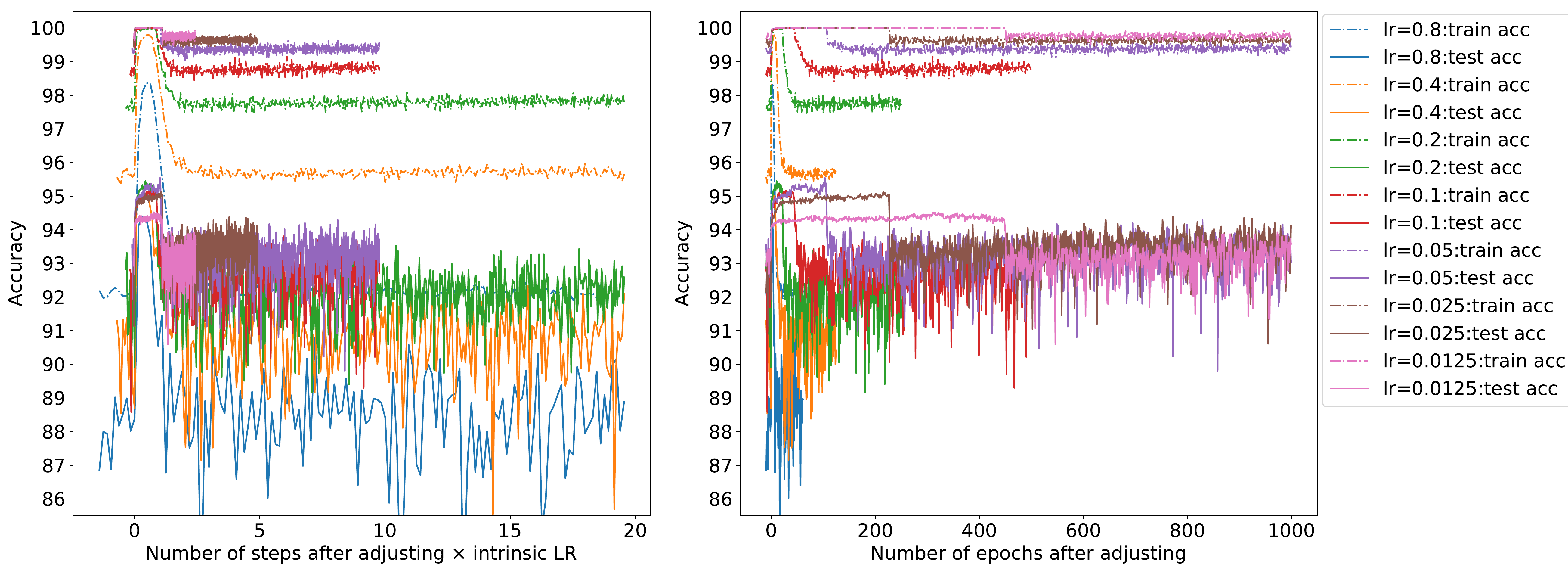}
				\caption{Train/test accuracy}
        \end{subfigure}%
        
        \begin{subfigure}[b]{\textwidth}
                \centering
                \includegraphics[width=\linewidth]{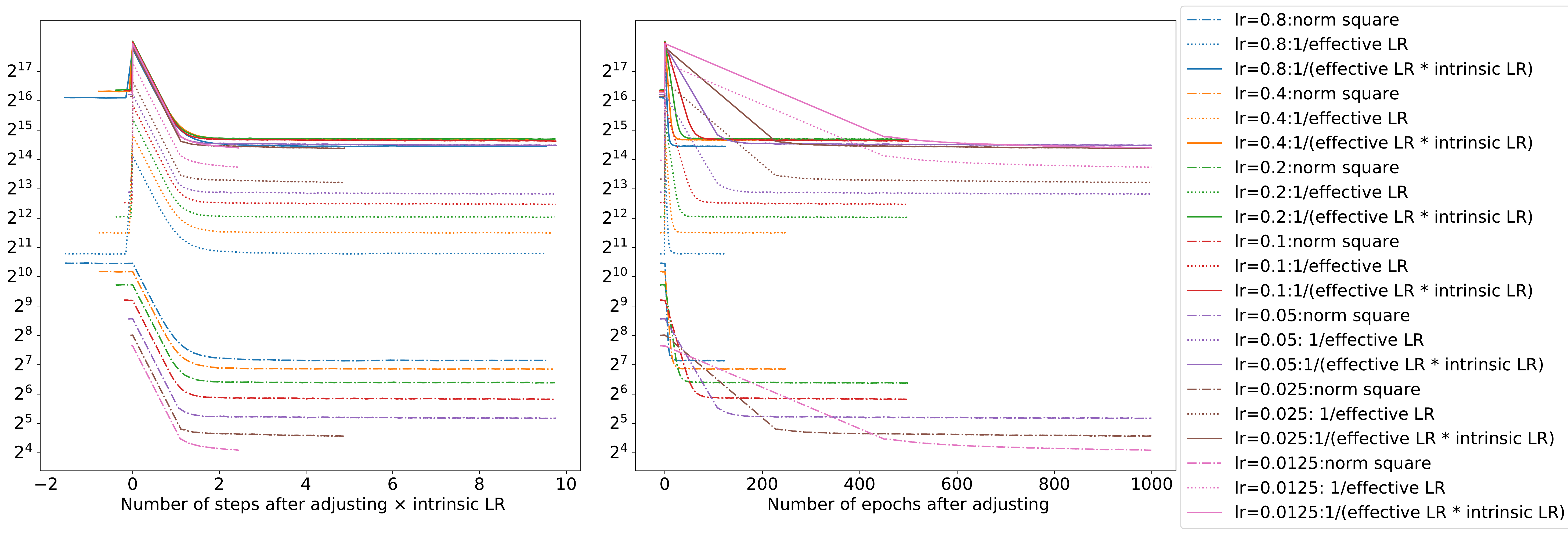}
                \caption{Norm and effective LR}
        \end{subfigure}%
        \caption{ VGG16 was trained on CIFAR10 with BN + SGD and different intrinsic LRs. Then LR and WD were changed while maintaining their product (i.e., intrinsic LR).  Number of steps to  reach equilibrium again was measured. It scales inversely with intrinsic LR, supporting Fast Equilibrium Conjecture.  }
        \label{fig:vgg_fast_conjecture}
\end{figure}

In \Cref{fig:7decay} we show that convergence of norm is a good measurement for reaching equilibrium, and it takes longer time for smaller intrinsic LR $\lambda_e$. The two networks are trained with the same sequence of intrinsic learning rates, where the first schedule (blue) decays LR by 2 at epoch , and the second schedule decays WD factor by 2 at the same epoch list. Note that the effective LR almost has the same trend as the training accuracy. Since in each phase, the effective LR $\gamma_t^{-0.5} \propto \norm{\vW_t}^{-2}$, we conclude that the convergence of norm suggests SGD reaches the equilibrium. 

In \Cref{fig:vgg_fast_conjecture} we provide  experimental evidence that the mixing time to equilibrium in function space scales to $\frac{1}{\eta\lambda}$. Note in \Cref{eqn:norm_mixing2}, the convergence of norm also depends on the initial value. Thus in order to reduce the effect of initialization on the time towards equilibrium, we use the setting of Figure 3 in \citep{li19learningrate}, where we first let the networks with the same architecture reach the equilibrium of different intrinsic LRs, and we decay the LR by $10$ and multiplying the WD factor by $10$ simultaneously. In this way the intrinsic LR is not changed and the equilibrium is still the same. However, the effective LR is perturbed far away from the equilibrium, i.e. multiplied by $0.1$. And we measure how long does it takes SGD to recover the network back to the equilibrium and we find it to be almost linear in $1/\lambda\eta$.

\section{Conclusion and Open Questions}
We pointed that use of normalization in today's state-of-art architectures  today leads to a mismatch with traditional mathematical views of optimization.  To bridge  this gap we develop the mathematics of  SGD + BN + WD in scale-invariant nets, in the process identifying a new hyper-parameter ``intrinsic learning rate'', $\lambda_e =\eta\lambda$, for which appears to determine  trajectory evolution and network performance  after reaching equilibrium. Experiments suggest time to equilibrium in {\em function space} is only $O(\frac{1}{\lambda_e})$, dramatically lower than the usual  exponential upper bound for mixing time in parameter space. Our  \emph{fast equilibrium conjecture} about this  may guide future theory.  The conjecture suggests a more general two-phase training paradigm, which could be potentially interesting to practitioners and lead to better training.

Our theory shows that convergence of norm is a good sign for having reached equilibrium. However, we still lack a satisfying  measure of the progress of training, since empirical risk is not good. Finally, it would be good to \emph{understand why reaching equilibrium helps regularization}.

\section*{Acknowledgement}

ZL and SA acknowledge support from NSF, ONR, Simons Foundation, Schmidt Foundation, Mozilla
Research, Amazon Research, DARPA and SRC.

\bibliography{main}

\newpage

\appendix

\section{Batch Normalization} \label{app:bn}

Batch normalization (BN)~\citep{ioffe2015batch} is one of the most commonly-used normalization schemes. Given a mini-batch of inputs $\{z_b\}_{b \in \Batch}$ from the last layer, a batch normalization layer first normalizes the inputs by subtracting the mean $\mu_{\Batch} := \frac{1}{\lvert \Batch \rvert} \sum_{b \in \Batch} z_b$ and dividing the variance $\sigma^2_{\Batch} = \frac{1}{\lvert \Batch \rvert} \sum_{b \in \Batch} (z_b - \mu_{\Batch})^2$, and then applies a linear transformation with trainable parameters $\gamma$ and $\beta$:
\[
\BN(z_b) = \gamma \frac{z_b - \mu_{\Batch}}{\sigma_{\Batch}} + \beta.
\]
Typically BN is placed between the linear transformation and activation function. This makes the loss invariant to the re-scaling of weights in the linear transformation preceding the BN. If we fix the weights in the last linear layer as suggested by~\citep{hoffer2018fix} and put BN after every linear transformation, then the loss is invariant to all its parameters (see Appendix C of \citep{li2020exp} for more details).
\section{Missing derivation and proofs}

For scale-invariant loss function $\Loss(\vw; \Batch)$, we have the following lemma on the covariance matrix of gradient noise:
\begin{lemma}\label{lem:perp_cov}
	If $\Loss(\vw; \Batch)$ is scale-invariant with respect to $\vw$, then
	\begin{enumerate}
		\item $\mSigma_{c\vw} = c^{-2}\mSigma_{\vw}$ for any $c > 0$.
		\item $\normtwosm{\mSigma_{\vw}^{1/2} \vw} = \vw^\top \mSigma_{\vw} \vw = 0$.
	\end{enumerate}
\end{lemma}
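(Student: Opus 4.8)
The plan is to obtain both claims as direct consequences of the scaling identity for gradients (Lemma~\ref{lem:inv-grad-hess}) together with the orthogonality $\dotp{\nabla f(\vw)}{\vw} = 0$ noted right after it. First observe that since $\Loss(\vw) = \E_{\Batch}[\Loss(\vw;\Batch)]$ is an average (expectation) of scale-invariant functions, it is itself scale-invariant; hence Lemma~\ref{lem:inv-grad-hess} applies both to $\Loss(\cdot;\Batch)$ for every fixed $\Batch$ and to $\Loss(\cdot)$, giving $\nabla\Loss(c\vw;\Batch) = c^{-1}\nabla\Loss(\vw;\Batch)$ and $\nabla\Loss(c\vw) = c^{-1}\nabla\Loss(\vw)$ for all $c>0$.

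For part~1, I would substitute these identities into the definition $\mSigma_{c\vw} = \E_{\Batch}\big[(\nabla\Loss(c\vw;\Batch) - \nabla\Loss(c\vw))(\nabla\Loss(c\vw;\Batch) - \nabla\Loss(c\vw))^\top\big]$. Each of the two factors inside the expectation picks up a scalar $c^{-1}$, so the rank-one term scales by $c^{-2}$; pulling this constant out of the expectation yields $\mSigma_{c\vw} = c^{-2}\mSigma_{\vw}$.

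For part~2, the key input is that the gradient of a scale-invariant function is perpendicular to $\vw$ — this follows by differentiating $f(c\vw) = f(\vw)$ with respect to $c$ and evaluating at $c=1$, or equivalently from Euler's homogeneous function theorem. Applying it to $\Loss(\cdot;\Batch)$ and to $\Loss(\cdot)$ shows that the per-batch noise vector $\vxi_{\Batch} := \nabla\Loss(\vw;\Batch) - \nabla\Loss(\vw)$ satisfies $\dotp{\vxi_{\Batch}}{\vw} = 0$ for every $\Batch$. Consequently $\vw^\top\mSigma_{\vw}\vw = \E_{\Batch}\big[(\vw^\top\vxi_{\Batch})^2\big] = 0$. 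Finally, since $\mSigma_{\vw}^{1/2}$ is symmetric and positive semidefinite, $\normtwosm{\mSigma_{\vw}^{1/2}\vw}^2 = \vw^\top\mSigma_{\vw}^{1/2}\mSigma_{\vw}^{1/2}\vw = \vw^\top\mSigma_{\vw}\vw = 0$, which gives the claim.

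I do not expect a genuine obstacle: the argument is essentially a one-line computation once Lemma~\ref{lem:inv-grad-hess} and the orthogonality of $\nabla f$ to $\vw$ are in hand. The only points requiring a word of care are that scale-invariance is preserved under batch-averaging/expectation (immediate) and that differentiation commutes with the expectation defining $\Loss(\vw)$, which holds under the standing smoothness assumptions on the per-example losses; neither is a real difficulty.
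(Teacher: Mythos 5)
Your proof is correct and follows essentially the same route as the paper's: both deduce scale-invariance of the expected loss, apply the gradient scaling identity of Lemma~\ref{lem:inv-grad-hess} to each factor in the definition of $\mSigma_{c\vw}$ for part~1, and use orthogonality of the per-batch and expected gradients to $\vw$ to conclude $\vw^\top\mSigma_\vw\vw = \E\big[\dotp{\nabla\Loss(\vw;\Batch)-\nabla\Loss(\vw)}{\vw}^2\big]=0$ for part~2. Your explicit final step identifying $\normtwosm{\mSigma_\vw^{1/2}\vw}^2$ with $\vw^\top\mSigma_\vw\vw$ is a detail the paper leaves implicit, but there is no substantive difference.
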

\begin{proof}
	Note that the expectation of scale-invariant functions is scale-invariant, so $\Loss(\vw)$ is scale-invariant. The first bullet can be proved by combining the definition of $\mSigma_{c\vw}$ and
	\[
	\nabla \Loss(c\vw; \Batch) - \nabla \Loss(c\vw) = \frac{1}{c}\left(\nabla \Loss(\vw; \Batch) - \nabla \Loss(\vw)\right).
	\]
	For the second bullet, by homogeneity we have $\dotp{\nabla \Loss(\vw; \Batch)}{\vw} = 0$ and $\dotp{\nabla \Loss(\vw)}{\vw} = 0$, so $\dotp{\nabla \Loss(\vw; \Batch) - \nabla \Loss(\vw)}{\vw} = 0$. This implies $\vw^\top \mSigma_{\vw} \vw = \E[\dotp{\nabla \Loss(\vw; \Batch) - \nabla \Loss(\vw)}{\vw}^2] = 0$.
\end{proof}

To prove \Cref{lm:evolution}, we will need to use the It\^{o}'s lemma, which is stated below:

\begin{lemma}[It\^{o}'s Lemma]\label{lem:ito}
	Suppose $\mX_t = (X_t^1, X_t^2, \ldots, X_t^d)$ is a vector of It\^{o}'s processes s.t. 
	\[ \dd \mX_t = \vmu_t \dd t + \mG_t \dd \mB_t, \]

we have for any twice differentiable function $f$,
\[
{\displaystyle {\begin{aligned}df(t,\mathbf {X} _{t})&={\frac {\partial f}{\partial t}}\,dt+\left(\nabla _{\mathbf {X} }f\right)^{T}\,d\mathbf {X} _{t}+{\frac {1}{2}}\left(d\mathbf {X} _{t}\right)^{T}\left(H_{\mathbf {X} }f\right)\,d\mathbf {X} _{t},\\&=\left\{{\frac {\partial f}{\partial t}}+\left(\nabla _{\mathbf {X} }f\right)^{T}{\boldsymbol {\mu }}_{t}+{\frac {1}{2}}\operatorname {Tr} \left[\mathbf {G} _{t}^{T}\left(\nabla^2_{\mathbf {X} }f\right)\mathbf {G} _{t}\right]\right\}dt+\left(\nabla _{\mathbf {X} }f\right)^{T}\mathbf {G} _{t}\,d\mathbf {B} _{t}\end{aligned}}}\]
\end{lemma}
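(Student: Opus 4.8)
The plan is to establish the formula as the continuous-time limit of a second-order Taylor expansion, with the crucial input being the quadratic variation of Brownian motion. Since the two displayed expressions for $\dd f$ coincide once one substitutes $\dd\mX_t = \vmu_t\dd t + \mG_t\dd\mB_t$ into the first and simplifies, it suffices to prove the first equality and then perform that algebraic reduction. I would therefore split the argument into an analytic part (justifying the Taylor expansion in the stochastic sense) and a purely algebraic part (collapsing the second-order term into the trace form).

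First I would fix a partition $0 = t_0 < t_1 < \cdots < t_N = t$ of vanishing mesh and write $f(t,\mX_t) - f(0,\mX_0)$ as the telescoping sum $\sum_n [f(t_{n+1},\mX_{t_{n+1}}) - f(t_n,\mX_{t_n})]$. Applying a second-order Taylor expansion to each increment, with $\Delta t_n = t_{n+1}-t_n$ and $\Delta\mX_n = \mX_{t_{n+1}} - \mX_{t_n}$, produces the first-order terms $\partial_t f\,\Delta t_n + (\nabla_\mX f)^\top\Delta\mX_n$, the second-order term $\tfrac12(\Delta\mX_n)^\top(\nabla^2_\mX f)\,\Delta\mX_n$, and a remainder. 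As the mesh shrinks, the first-order sums converge to $\int_0^t \partial_t f\,\dd s$ and to the It\^o integral $\int_0^t (\nabla_\mX f)^\top\dd\mX_s$ (left-endpoint Riemann sums, by definition of the It\^o integral and continuity of the integrands).

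The heart of the matter is the second-order sum. Substituting $\Delta\mX_n \approx \vmu_{t_n}\Delta t_n + \mG_{t_n}\Delta\mB_n$ and expanding, the products $\Delta t_n\,\Delta t_n$ and $\Delta t_n\,\Delta\mB_n$ are of higher order and drop out, so the genuinely new contribution comes from $(\mG_{t_n}\Delta\mB_n)(\mG_{t_n}\Delta\mB_n)^\top$. The key lemma --- the quadratic variation of Brownian motion --- is that $\Delta B^k_n\,\Delta B^l_n$ behaves like $\delta_{kl}\Delta t_n$ in the limit: for adapted, suitably integrable $g$ one has $\sum_n g(t_n)\bigl(\Delta B^k_n\Delta B^l_n - \delta_{kl}\Delta t_n\bigr)\to 0$ in $L^2$, because the summands are martingale differences with $\E[(\Delta B^k_n)^2]=\Delta t_n$ and variance $O(\Delta t_n^2)$, so the total variance is $O\bigl(\max_n\Delta t_n\cdot\sum_n\Delta t_n\bigr)\to 0$. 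Granting this, the second-order sum converges to $\tfrac12\int_0^t \sum_{i,j}(\nabla^2_\mX f)_{ij}(\mG_s\mG_s^\top)_{ij}\,\dd s = \tfrac12\int_0^t \Tr\bigl[\mG_s^\top(\nabla^2_\mX f)\mG_s\bigr]\dd s$, the last step being the elementary identity $\sum_{i,j}H_{ij}(\mG\mG^\top)_{ij} = \Tr(\mG^\top H\mG)$. Recasting the telescoping identity in differential form recovers the first line, and substituting $(\nabla_\mX f)^\top\dd\mX_t = (\nabla_\mX f)^\top\vmu_t\dd t + (\nabla_\mX f)^\top\mG_t\dd\mB_t$ yields the second.

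The main obstacle is rigor in the quadratic-variation step: one must simultaneously show that the off-diagonal and fluctuation parts of $\sum_n g(t_n)\Delta B^k_n\Delta B^l_n$ vanish, that the diagonal part converges to a Lebesgue integral, and that the Taylor remainder is controlled uniformly along the partition. A clean route is to prove the formula first for $f\in C^{1,2}$ with bounded derivatives via the $L^2$ estimates above, then remove boundedness by a standard localization (stopping-time) argument; this also makes explicit the regularity hypothesis that $f$ be $C^1$ in $t$ and $C^2$ in $\mX$, together with the integrability conditions on $\vmu_t,\mG_t$ that are left implicit in the statement.
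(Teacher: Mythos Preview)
Your proposal is a correct and standard proof sketch of It\^{o}'s Lemma via Taylor expansion and the quadratic variation of Brownian motion. However, the paper does not actually prove this statement: it is stated as a classical tool (introduced with ``we will need to use the It\^{o}'s lemma, which is stated below'') and then applied directly in the proof of \Cref{lm:evolution}. There is therefore nothing to compare against; the paper simply cites the result, whereas you have supplied the textbook derivation.
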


Recall the following original SDE in the space of $\mW$. Below we will prove \Cref{lm:evolution} by It\^{o}'s Lemma.
\begin{align*} 
\dd \vW_{t} &= - \eta \left(\nabla \Loss(\vW_t) \dd t+ (\mSigma_{\vW_t})^{\frac{1}{2}} \dd \vB_t\right) - \lambda_e \vW_t \dd t. 
\end{align*}

\mainlemma*
\begin{proof}[Proof for \Cref{lm:evolution}]
    We can prove \eqref{eq:sde-hW} and \eqref{eq:sde-rho} by It\^{o}'s Lemma. For \eqref{eq:sde-rho}, note that $G_t = \norm{\mW_t}^2$, we have 
	\begin{align*}
		\dd G_t &= 2 \mW_t^\top \dd \mW_t + \dd \mW_t^\top \dd \mW_t\\
		&= 2 \dotp{\vW_t}{- \eta \left(\nabla \Loss(\vW_t) \dd t+ (\mSigma_{\vW_t})^{\frac{1}{2}} \dd \vB_t\right) - \lambda_e \vW_t \dd t} + \eta^2 \Tr(\mSigma_{\vW_t}) \dd t.
	\end{align*}

	By scale-invariance and \Cref{lem:perp_cov}, $\dotp{\vW_t}{\nabla \Loss(\vW_t)} = 0$, $\dotp{\vW_t}{(\mSigma_{\vW_t})^{\frac{1}{2}} \dd \vB_t} = 0$, $\Tr(\mSigma_{\vW_t}) = \frac{1}{G_t}\Tr(\mSigma_{\hW_t})$. So we can simplify the formula to conclude that
	\[
		\dd G_t = -2\lambda_e G_t \dd t + \frac{\eta^2}{G_t} \Tr(\mSigma_{\hW_t}) \dd t.
	\]
	
	For \eqref{eq:sde-hW}, let $\vv\in\R^d$ be an arbitrary vector, then
	\begin{align*}
	\dd\dotp{\vv}{\hW_t} =& \dotp{\frac{\partial \dotp{\vv}{\hW_t}}{\partial \mW_{t}}}{ \dd \mW_{t}} + \frac{1}{2}  (\dd \mW_{t})^\top\frac{\partial^2 \dotp{\vv}{\hW_t}}{\left(\partial \mW_{t}\right)^2 }\dd \mW_{t}\\
	= & \dotp{\frac{1}{\norm{\vW_t}} \left(\vv - \dotp{\vv}{\hW_t} \hW_{t}\right)}{ \dd \mW_{t}} 
	+ 	 \frac{\eta^2}{2}  \Tr\left( (\mSigma_{\mW_t})^{\frac{1}{2}} \frac{\partial^2 \dotp{\vv}{\hW_t}}{\left(\partial \mW_{t}\right)^2 }(\mSigma_{\mW_t})^{\frac{1}{2}}\right) \dd t
	\end{align*}
	By scale-invariance and \Cref{lem:perp_cov}, $\dotp{\vW_t}{\nabla \Loss(\vW_t)} = 0$, $(\mSigma_{\vW_t})^{\frac{1}{2}} \vW_t  = \bm{0}$, which means the column span of $\mSigma_{\vW_t}$ is orthogonal to $\vW_t$.
	Thus we can apply \Cref{lm:invariant-hess} below and get 
	 \[ \Tr\left( (\mSigma_{\mW_t})^{\frac{1}{2}} \frac{\partial^2 \dotp{\vv}{\hW_t}}{\left(\partial \mW_{t}\right)^2 }(\mSigma_{\mW_t})^{\frac{1}{2}}\right) = -\frac{\vv^\top \hW_t}{\norm{\vW_t}^2}\Tr\left( \mSigma_{\mW_t}\right).\] 
	 
	 Then we have
	\[
		\dd\dotp{\vv}{\hW_t} = \dotp{\frac{\vv}{\norm{\vW_t}}}{- \eta \left(\nabla \Loss(\vW_t) \dd t+ (\mSigma_{\vW_t})^{\frac{1}{2}} \dd \vB_t\right)} - \frac{\eta^2}{2 \norm{\vW_t}^2} \dotp{\hW_t}{\vv} \Tr(\mSigma_{\vW_t}) \dd t.
	\]
	By scale-invariance, this can be simplified to the following formula:
	\[
		\dd\dotp{\vv}{\hW_t} = -\frac{\eta}{\norm{\vW_t}^2 }\dotp{\vv}{\nabla \Loss(\hW_t) \dd t+ (\mSigma_{\hW_t})^{\frac{1}{2}} \dd \vB_t} - \frac{\eta^2}{2 \norm{\vW_t}^4} \Tr(\mSigma_{\hW_t}) \dotp{\vv}{\hW_t}  \dd t,
	\]
	which proves \eqref{eq:sde-hW}, since $\vv$ is arbitrary.
\end{proof}

\begin{lemma} \label{lm:invariant-hess}
For any twice differentiable function $g:\R^d\to \R$, the Hessian matrix of $f(\vw) := g(\frac{\vw}{\norm{\vw}})$ satisfies that, $\forall \vv^\top \vw = 0$,
\begin{equation}
	\vv^\top \nabla^2 f(\vw) \vv =\frac{1}{\norm{\vw}^2}\left(\vv^\top \nabla^2 g(\hvw)\vv- \hvw^\top \nabla g(\hvw) \norm{\vv}^2\right).
\end{equation}
where $\hvw = \frac{\vw}{\norm{\vw}}$.
\end{lemma}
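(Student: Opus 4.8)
The plan is to evaluate the quadratic form $\vv^\top\nabla^2 f(\vw)\vv$ as a second directional derivative. Fix $\vw\neq\vzero$ and $\vv$ with $\dotp{\vv}{\vw}=0$, and set $\phi(s):=f(\vw+s\vv)=g\!\left(\frac{\vw+s\vv}{\norm{\vw+s\vv}}\right)$. Since $\vv^\top\nabla^2 f(\vw)\vv=\phi''(0)$, it suffices to compute $\phi''(0)$. The whole point of the hypothesis $\vv\perp\vw$ is that it makes the normalizing factor especially simple along this line.

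Concretely, I would first record that, because $\dotp{\vv}{\vw}=0$, $\norm{\vw+s\vv}^2=\norm{\vw}^2+s^2\norm{\vv}^2$, so $r(s):=\norm{\vw+s\vv}$ satisfies $r(0)=\norm{\vw}$, $r'(0)=0$, and $r''(0)=\norm{\vv}^2/\norm{\vw}$. Writing $u(s):=(\vw+s\vv)/r(s)$ for the unit-normalized curve, differentiation then gives $u(0)=\hvw$, and — because the $r'(0)$ term drops out — $u'(0)=\vv/\norm{\vw}$ and $u''(0)=-\hvw\,\norm{\vv}^2/\norm{\vw}^2$.

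Next I would apply the chain rule for second derivatives, $\phi''(s)=u'(s)^\top\nabla^2 g(u(s))\,u'(s)+\nabla g(u(s))^\top u''(s)$, and substitute $s=0$:
\[
\phi''(0)=\frac{1}{\norm{\vw}^2}\,\vv^\top\nabla^2 g(\hvw)\vv-\frac{\norm{\vv}^2}{\norm{\vw}^2}\,\hvw^\top\nabla g(\hvw)=\frac{1}{\norm{\vw}^2}\left(\vv^\top\nabla^2 g(\hvw)\vv-\hvw^\top\nabla g(\hvw)\norm{\vv}^2\right),
\]
which is exactly the claimed identity. Optionally one can shorten the bookkeeping by first noting that $f$ is scale-invariant, so \Cref{lem:inv-grad-hess} gives $\nabla^2 f(\vw)=\norm{\vw}^{-2}\nabla^2 f(\hvw)$, reducing everything to the case $\norm{\vw}=1$; then the statement is just the standard relation between the ambient Hessian of $g$ and its restriction to the unit sphere, with $-\hvw^\top\nabla g(\hvw)\norm{\vv}^2$ the second-fundamental-form correction.

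There is no genuine obstacle here; the only care needed is in differentiating $u(s)$ twice and checking that the cross terms carrying $r'(0)$ really vanish — which is precisely where $\vv\perp\vw$ is used. In the application of this lemma (the proof of \Cref{lm:evolution}), it is invoked with $g=\dotp{\vv}{\,\cdot\,}$ linear, so $\nabla^2 g\equiv 0$ and only the curvature term $-\hvw^\top\nabla g(\hvw)\norm{\vv}^2=-\hvw^\top\vv\,\norm{\vv}^2$ survives; but stating it for general twice-differentiable $g$ costs nothing extra.
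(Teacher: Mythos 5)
Your proof is correct and is essentially the paper's own argument: both reduce $\vv^\top\nabla^2 f(\vw)\vv$ to a second derivative along the line $\vw+s\vv$, use $\vv\perp\vw$ to get $\norm{\vw+s\vv}^2=\norm{\vw}^2+s^2\norm{\vv}^2$ and hence the second-order expansion $u(s)=\hvw+s\vv/\norm{\vw}-\tfrac{s^2\norm{\vv}^2}{2\norm{\vw}^2}\hvw+O(s^3)$, and then read off the Hessian term plus the curvature correction (the paper Taylor-expands where you apply the second-order chain rule, which is the same computation up to the harmless rescaling $s\mapsto\lambda\norm{\vw}$). Your calculation also confirms the lemma as stated; note the paper's final displayed line in its proof contains a typo, writing $\nabla g(\hvw)^\top\vv$ where the correction term should be $\nabla g(\hvw)^\top\hvw\,\norm{\vv}^2$ as in the lemma statement and in your derivation.
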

\begin{proof}
	For any $\vv\in\R^d$ s.t. $\vv^\top \vw=0$, we define $h(\lambda) = f(\vw+\lambda\norm{\vw}\vv)$. Then by definition, $\norm{\vw}^2\vv^\top \nabla^2 f(\vw) \vv = h''(0)$.
	
	On the other hand, note $\vv^\top \vw = 0$, we have 
	
	\begin{align*}
	\frac{\vw+\lambda\norm{\vw}\vv}{\norm{\vw+\lambda\norm{\vw}\vv}} &= \frac{\vw+\lambda\norm{\vw}\vv}{\norm{\vw}} \frac{1}{\sqrt{1+\lambda^2\norm{\vv}^2}} \\
	&= (\hvw+ \lambda\vv)(1-\frac{\lambda^2\norm{\vv}^2}{2} +O(\lambda^4)) \\
	&= \hvw +\lambda\vv-\frac{\lambda^2\norm{\vv}^2}{2}\hvw + O(\lambda^3).
	\end{align*}
	
	Thus 
	\begin{align*}
	h(\lambda) &= g(\hvw +\lambda\vv-\frac{\lambda^2\norm{\vv}^2}{2}\hvw + O(\lambda^3) ) \\
	&= g(\hvw)+ \lambda \nabla  g(\hvw)^\top \vv + \frac{\lambda^2}{2} \left( \vv^\top\nabla^2g(\hvw)\vv - \norm{\vv}^2 \nabla g(\hvw)^\top \vv \right) + O(\lambda^3),
	\end{align*}
	from which we conclude $h''(0) =\vv^\top\nabla^2g(\hvw)\vv - \norm{\vv}^2 \nabla g(\hvw)^\top \vv $.
\end{proof}

\begin{proof}[Proof for \Cref{lm:gamma-decay}]
	By~\eqref{eq:gamma-avg}, we can upper bound and lower bound $\gamma_t$ by
	\begin{align*}
		\gamma_t &\le e^{-4 \lambda_et} \gamma_0 + 2 \int_{0}^{t} e^{-4 \lambda_e (t-\tau)} (1+\epsilon)\sigma^2 d\tau \le  e^{-4 \lambda_et} \gamma_0 + \frac{1}{2\lambda_e}(1 - e^{-4 \lambda_e t}) (1+\epsilon)\sigma^2 d\tau. \\
		\gamma_t &\ge e^{-4 \lambda_et} \gamma_0 + 2 \int_{0}^{t} e^{-4 \lambda_e (t-\tau)} \sigma^2 d\tau \ge e^{-4 \lambda_et} \gamma_0 + \frac{1}{2\lambda_e}(1 - e^{-4 \lambda_e t})\sigma^2 d\tau.
	\end{align*}
	Therefore, we have $\gamma_t = e^{-4 \lambda_et} \gamma_0 + \left(1 + O(\epsilon)\right) \frac{\sigma^2}{2\lambda_e}\left( 1 - e^{-4\lambda_e t}\right)$.
\end{proof}

\paragraph{Connection to Exp LR schedule:} ~\citep{li2020exp} shows that 
\[\vw_{t+1} \gets \vw_t - \eta(1-\lambda_e)^{-2t} \left(\nabla \Loss(\vw_t) + \vxi_t \right)\]
yields the same trajectory in function space as \Cref{eq:sgd-wd-noisy} for scale invariant loss $\Loss$. In fact, they also correspond to the same surrogate SDE \Cref{eq:gamma_W,eq:gamma}, where the exponent in the rate schedule is the intrinsic LR. 
\begin{lemma}
The following SDE with exponential LR is equivalent to \Cref{eq:gamma_W,eq:gamma}, where $\gamma_t = \frac{\norm{\mW_t}^4 e^{-4\lambda_e t}}{\eta^2}$.
\[\dd \vW_{t} = - e^{2\lambda_e t}\eta \left(\nabla \Loss(\vW_t) \dd t+ (\mSigma_{\vW_t})^{\frac{1}{2}} \dd \vB_t\right).\]
\end{lemma}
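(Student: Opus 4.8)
The plan is to avoid any new stochastic calculus and instead reduce the claim to \Cref{lm:evolution} by a purely deterministic, time-dependent rescaling of the trajectory. Let $\vW_t$ solve the exponential-LR SDE $\dd\vW_t = -e^{2\lambda_e t}\eta\big(\nabla\Loss(\vW_t)\dd t + (\mSigma_{\vW_t})^{1/2}\dd\vB_t\big)$ and set $\vV_t := e^{-\lambda_e t}\,\vW_t$. I claim $\vV_t$ is exactly a solution of the weight-decay SDE \eqref{eq:sde-wd}, and that the pair $(\hW_t,\gamma_t)$ built from $\vW_t$ as in the lemma statement coincides with the pair that \Cref{lm:evolution} produces from $\vV_t$; the lemma then follows immediately.

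First I would verify the claim about $\vV_t$. Since $\vW_t = e^{\lambda_e t}\vV_t$ with $e^{\lambda_e t}>0$, scale-invariance of $\Loss$ together with \Cref{lem:inv-grad-hess} gives $\nabla\Loss(\vW_t) = e^{-\lambda_e t}\nabla\Loss(\vV_t)$, and \Cref{lem:perp_cov}(1) gives $(\mSigma_{\vW_t})^{1/2} = e^{-\lambda_e t}(\mSigma_{\vV_t})^{1/2}$. Plugging $\dd\vV_t = -\lambda_e e^{-\lambda_e t}\vW_t\dd t + e^{-\lambda_e t}\dd\vW_t$ into the exponential-LR SDE and simplifying, the two factors $e^{\pm\lambda_e t}$ cancel and all explicit exponentials disappear, leaving
\[
\dd\vV_t = -\lambda_e\vV_t\dd t - \eta\big(\nabla\Loss(\vV_t)\dd t + (\mSigma_{\vV_t})^{1/2}\dd\vB_t\big),
\]
which is precisely \eqref{eq:sde-wd}. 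Applying \Cref{lm:evolution} to $\vV_t$: with $\widehat{\vV}_t := \vV_t/\norm{\vV_t}$ and $\gamma^{\vV}_t := (\norm{\vV_t}^2/\eta)^2$, equations \eqref{eq:gamma_W} and \eqref{eq:gamma} hold. But $\vV_t$ is a positive scalar multiple of $\vW_t$, so $\widehat{\vV}_t = \hW_t$; and $\norm{\vV_t}^2 = e^{-2\lambda_e t}\norm{\vW_t}^2$, so $\gamma^{\vV}_t = \norm{\vW_t}^4 e^{-4\lambda_e t}/\eta^2$, which is exactly the $\gamma_t$ in the lemma. Hence $(\hW_t,\gamma_t)$ obeys \eqref{eq:gamma_W} and \eqref{eq:gamma}. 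Since the network output $F(\vw;\vx)$ depends only on the direction $\vw/\norm{\vw}$ by scale-invariance, "equivalent" here should be read as: the exponential-LR dynamics and the WD dynamics induce the same law on $\hW_t$, hence on $F(\vW_t;\vx)$, for every $t$ — matching the function-space equivalence already established for the discrete iterates in \citep{li2020exp}.

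As a fully self-contained alternative, I could instead redo the Itô computation of \Cref{lm:evolution} directly on the exponential-LR SDE, replacing the constant $\eta$ by the smooth deterministic coefficient $\eta_t := e^{2\lambda_e t}\eta$ and deleting the weight-decay drift. Every line of that proof — including the use of \Cref{lm:invariant-hess} for the second-order term — carries over verbatim, yielding $\dd\hW_t = -\tfrac{\eta_t}{G_t}\big(\nabla\Loss(\hW_t)\dd t + (\mSigma_{\hW_t})^{1/2}\dd\mB_t\big) - \tfrac{\eta_t^2}{2G_t^2}\Tr(\mSigma_{\hW_t})\hW_t\dd t$ and $\dd G_t = \tfrac{\eta_t^2}{G_t}\Tr(\mSigma_{\hW_t})\dd t$, where $G_t = \norm{\vW_t}^2$. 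The one algebraic observation that makes the exponentials collapse is $\gamma_t^{1/2} = G_t e^{-2\lambda_e t}/\eta$, i.e. $\eta_t/G_t = \gamma_t^{-1/2}$ and $\eta_t^2/G_t^2 = \gamma_t^{-1}$; substituting gives \eqref{eq:gamma_W}, and differentiating $\gamma_t = G_t^2 e^{-4\lambda_e t}/\eta^2$ by the product rule and inserting $\dd G_t/\dd t$ gives $\dd\gamma_t/\dd t = 2\Tr(\mSigma_{\hW_t}) - 4\lambda_e\gamma_t$, which is \eqref{eq:gamma}. I do not anticipate a genuine obstacle; the only point deserving care — in either route — is that $G_t$, and hence $\gamma_t$ (a function of $G_t$ and $t$ alone), has no martingale part and so needs no Itô correction, which is exactly the content of $(\mSigma_{\vW_t})^{1/2}\vW_t = \vzero$ from \Cref{lem:perp_cov}(2).
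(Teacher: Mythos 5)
Your proposal is correct and follows essentially the same route as the paper's own proof: both define the rescaled process $e^{-\lambda_e t}\vW_t$, use scale-invariance of $\nabla\Loss$ and of $\mSigma_{\vw}^{1/2}$ to show it solves the weight-decay SDE \eqref{eq:sde-wd}, and then invoke \Cref{lm:evolution} together with $\norm{e^{-\lambda_e t}\vW_t}^4/\eta^2 = \norm{\vW_t}^4 e^{-4\lambda_e t}/\eta^2$ to identify $\gamma_t$. Your write-up is in fact slightly more careful than the paper's (whose displayed algebra drops a $-\eta$ factor in an intermediate line), and the direct It\^{o} recomputation you sketch as an alternative is a valid but unnecessary second route.
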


\begin{proof}
	By It\^{o}'s Lemma, let $\mU_t = e^{-\lambda_e t}\mW_t$, we have 
	\begin{align*}
	\dd \mU_{t} =& -\lambda_e \mU_t \dd t + e^{-\lambda_e t}\dd \mW_t \\
	= &-\lambda_e \mU_t \dd t  + e^{-\lambda_e t}\left(  \nabla \Loss(\vW_t) \dd t+ (\mSigma_{\vW_t})^{\frac{1}{2}} \dd \vB_t \right)\\
	=& -\lambda_e \mU_t \dd t  +  \nabla \Loss(\mU_t) \dd t+ (\mSigma_{\mU_t})^{\frac{1}{2}} \dd \vB_t,
	\end{align*}
	
where the last step is by scale-invariance.

Note $\mU_t$ has the same direction as $\mW_t$, i.e. $\overline{\mU_t} = \hW_t$, we can apply \Cref{lm:evolution} to get \Cref{eq:sde-hW,eq:sde-rho}, and thus get \Cref{eq:gamma_W,eq:gamma}, with $\gamma_t = \frac{\norm{\mU_t}^4}{\eta^2} = \frac{\norm{\mW_t}^4 e^{-4\lambda_e t}}{\eta^2}$.
\end{proof}

\section{Extension to Other Optimization Algorithms} \label{app:other}

\subsection{Momentum}
In this subsection we use momentum SGD as an example to show how does the discrete version of the fast equilibrium conjecture look like. Throughout this section we will assume all the momentum factors are constant, and we only care about the role of LR $\eta$ and WD factor $\lambda$ in the discrete dynamics.

For fixed LR $\eta$ and WD $\lambda$, the formula of SGD with momentum can be written as follows:
\begin{align*}
	\vv_{t+1} &\gets \beta\vv_t + (\nabla \Loss(\vw_t; \Batch_t) + \lambda\vw_t) \\
	\vw_{t+1} &\gets \vw_t - \eta\vv_{t+1},
\end{align*}
which is also equivalent to
\[\vw_{t+1} -\vw_{t} = \beta(\vw_{t}-\vw_{t-1}) -\eta (\nabla\Loss(\vw_t; \Batch_t) + \lambda\vw_t).\]

We can decouple the effect of WD from SGD by replacing $\eta\lambda$ by $\lambda_e$:
\[\vw_{t+1} -(1 +\beta -\lambda_e)\vw_{t} + \beta\vw_{t-1} =-\eta\nabla\Loss(\vw_t; \Batch_t).\]
By scale invariance of $\Loss$, letting $\vw'_t = \frac{\vw_t}{\sqrt{\eta}}$, we have 
\[\vw'_{t+1} -(1 +\beta -\lambda_e)\vw'_{t} + \beta\vw'_{t-1} =-\nabla\Loss(\vw'_t; \Batch_t),\]

which means the effect of $\eta$ in the new parametrization is no more than rescaling the initialization. This motivates as to define $\lambda_e = \eta \lambda$ as the effective WD, or intrinsic LR. 

Unlike vanilla SGD, the evolution of norm for momentum SGD is more complicated. However, a folklore intuition is that, if the gradient of loss $\Loss$ changes slowly, one can approximate momentum SGD by vanilla SGD with LR $\frac{\eta\lambda}{1-\gamma}$. Therefore, we propose the following discrete version of fast equilibrium conjecture.

For LR schedule $\eta(t)$ and WD schedule $\lambda(t)$, we define $\nu(\mu; \lambda,\eta, t)$ to be the marginal distribution of $(\vw_t, \vv_t)$ in the following dynamical system when $(\vw_0, \vv_0) \sim \mu$:
\begin{align*}
\vv_{t+1} &\gets \beta\vv_t + (\nabla \Loss(\vw_t; \Batch_t) + \lambda(t)\vw_t) \\
\vw_{t+1} &\gets \vw_t - \eta(t)\vv_{t+1}
\end{align*}

\begin{conjecture}[Fast Equilibrium Conjecture for Momentum] \label{conj:mom}
	For SGD with momentum, modern neural nets converge to the equilibrium distribution in $O(1/\lambda_e)$ time in the following sense. Given two initial distributions $\mu, \mu'$ for $\vw_0$, constant LR and effective WD schedules $\lambda^*, \eta^*$, there exists a mixing time $T = O(1 / \lambda^*_e)$, where $\lambda^*_e = \eta^*\lambda^*$, such that for any input data $\vx$ from some input domain $\mathcal{X}$,
	\[
		\dTV\left(P_{F\left(\vw_t;~\vx\right)}, P_{F\left(\vw'_t;~\vx\right)} \right) \approx 0,
	\]
	for all $t \ge T$, where $(\vw_t, \vv_t) \sim \nu(\mu; \lambda^*, \eta^*, t)$, $(\vw'_t, \vv'_t) \sim \nu(\mu'; \lambda^*, \eta^*, t)$.
	
	Moreover, let $\tilde{\eta}(\tau), \tilde{\lambda}(\tau)$ be a pair of LR and WD schedules, then there exists a mixing time $T = O(1 / \lambda^*_e)$ such that for any input data $\vx$ from some input domain $\mathcal{X}$,
	\[
		\dTV\left(P_{F\left(\vw_{t, \tau};~\vx\right)}, P_{F\left(\vw_{t, \tau}';~\vx\right)} \right) \approx 0
	\]
	for all $t \ge T$, where $\vw_{t,\tau} \sim \nu\left(\nu(\mu; \lambda^*, \eta^*, t); \tilde{\lambda}, \tilde{\eta}, t\right)$, $\vw'_{t, \tau} \sim \nu\left(\nu(\mu'; \lambda^*, \eta^*, t); \tilde{\lambda}, \tilde{\eta}, t\right)$.
\end{conjecture}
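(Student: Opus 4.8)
The statement is a \emph{conjecture}, so what follows is a proof \emph{program} rather than a proof; I will indicate precisely where the genuinely open step sits. The plan has three parts: (i) normalize the momentum recursion so that $\lambda_e$ is manifestly the only relevant hyper-parameter, (ii) prove that the ``effective learning rate'' (the running average of the gradient-noise trace, as in \Cref{lm:gamma-decay}) equilibrates in $O(1/\lambda_e)$ steps independently of initialization, and (iii) argue that once the effective LR has equilibrated the induced dynamics mixes in function space in $O(1/\lambda_e)$ steps.

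\textbf{Part (i).} Starting from the decoupled momentum update already written in \Cref{app:other}, substitute $\vw'_t = \vw_t/\sqrt{\eta}$; by scale-invariance of $\Loss$ this removes $\eta$ from the three-term recursion entirely, leaving a recursion governed by $\beta$ and $\lambda_e$ alone, with $\eta$ surviving only through the scale of the initialization $(\vw'_0,\vv'_0)$. This reduces \Cref{conj:mom} for general $(\eta,\lambda)$ to the case of fixed $\eta$ and variable $\lambda_e$, exactly as the footnote to \Cref{conj:weaker} anticipates — the residual dependence on $\norm{\vw_0},\norm{\vv_0}$ contributing only the logarithmic factor mentioned there.

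\textbf{Part (ii).} Introduce $G_t=\norm{\vw_t}^2$, $H_t=\dotp{\vw_t}{\vv_t}$, $K_t=\norm{\vv_t}^2$. Using $\dotp{\vw_t}{\nabla\Loss(\vw_t;\Batch_t)}=0$ (homogeneity) and $\dotp{\vw_t}{(\mSigma_{\vw_t})^{1/2}\vu}=0$ from \Cref{lem:perp_cov}, one obtains a closed stochastic difference system for $(G_t,H_t,K_t)$ whose inhomogeneous term is driven only by $\Tr(\mSigma_{\hW_t})$. Its deterministic part is a $3\times 3$ linear recursion with one eigenvalue governing the momentum relaxation $\beta$ and the other two the $\lambda_e$-relaxation of the norm, so its spectral radius is $1-\Theta(\lambda_e)$; under the bounded-trace assumption $\sigma^2\le\Tr(\mSigma_{\hW})\le(1+\eps)\sigma^2$ of \Cref{lm:gamma-decay} (treating $\Tr(\mSigma_{\hW_t})$ as a bounded inhomogeneity, since $\hW_t$ moves), a discrete variation-of-constants / Gr\"onwall estimate shows that $G_t$, hence $\gamma_t$ and the effective LR $\gamma_t^{-1/2}$, reaches a band of width $O(\eps)$ around its stationary value $\propto\lambda_e^{-1/2}$ after $O(1/\lambda_e)$ steps, regardless of the initial norm. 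This is the rigorous core of the program: it upgrades \Cref{lm:evolution,lm:gamma-decay} to the momentum setting, replacing It\^o's lemma by its discrete-time analogue and paying only lower-order terms; the folklore substitution ``momentum SGD $\approx$ vanilla SGD with LR $\eta/(1-\beta)$'' is what one uses to guess, and then verify, the stationary value.

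\textbf{Part (iii) and the main obstacle.} Condition on the event that $\gamma_t$ stays in its stationary band for $t\ge T_0=O(1/\lambda_e)$. On this event the direction process $\hW_t$, together with the velocity component tangent to the sphere, evolves as a momentum Langevin diffusion on the unit sphere at a temperature fixed by $\lambda_e$; one would invoke a mixing estimate for this diffusion and push it forward through the measurable read-out $\vw\mapsto F(\vw;\vx)$, using the data-processing inequality for $\dTV$ to conclude $\dTV(P_{F(\vw_t;\vx)},P_{F(\vw'_t;\vx)})\approx 0$ for $t\ge T$. For the strong form, the fine-tuning schedule $(\tilde\eta,\tilde\lambda)$ acts as a further measurable map on distributions, so contraction in $\dTV$ is preserved and the bound transfers verbatim. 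The obstacle is that the only available mixing bound for Langevin-type dynamics on a generic loss landscape is the metastability estimate $e^{O(1/\eta)}$ of \citep{bovier2004metastability,shi2020learning}, which is exponentially too weak; obtaining a polynomial — let alone $O(1/\lambda_e)$ — bound requires structural input about the function-space dynamics of real networks (a dimension-free Poincar\'e/log-Sobolev constant along the trajectory, or the ``nested basins'' picture of \Cref{sec:twop}) that is presently unavailable. I therefore expect parts (i)--(ii) to be provable essentially as stated under the noise assumptions already in the paper, while part (iii) is exactly the content of the conjecture; the honest deliverable of this program is a \emph{conditional} theorem: if the direction dynamics mixes in time $O(1/\lambda_e)$ for the relevant class of initializations, then \Cref{conj:mom} holds.
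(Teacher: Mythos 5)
The statement you are asked to prove is a \emph{conjecture}: the paper offers no proof of \Cref{conj:mom}, only a motivating reparametrization argument and experimental evidence, so your decision to present a proof program that isolates the genuinely open step is the right call. Your Part (i) is essentially identical to what the paper itself does in \Cref{app:other}: substituting $\vw'_t=\vw_t/\sqrt{\eta}$ into the three-term recursion $\vw_{t+1}-(1+\beta-\lambda_e)\vw_t+\beta\vw_{t-1}=-\eta\nabla\Loss(\vw_t;\Batch_t)$ and using scale-invariance to push $\eta$ into the initialization scale. Your Part (iii) correctly identifies the actual content of the conjecture --- a polynomial (indeed $O(1/\lambda_e)$) mixing bound in function space where only the exponential metastability estimate is available --- and your conditional formulation is the honest deliverable.

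The one place where you overclaim is Part (ii). The system for $(G_t,H_t,K_t)$ with $G_t=\norm{\vw_t}^2$, $H_t=\dotp{\vw_t}{\vv_t}$, $K_t=\norm{\vv_t}^2$ is \emph{not} closed with an inhomogeneity driven only by $\Tr(\mSigma_{\hW_t})$. Orthogonality kills $\dotp{\vw_t}{\nabla\Loss(\vw_t;\Batch_t)}$, so the update of $H_t$ is fine, but the update of $K_t=\norm{\vv_t}^2$ contains the cross term $2\beta\,\dotp{\vv_t}{\nabla\Loss(\vw_t;\Batch_t)}$, which is first order (not an It\^{o}-type second-order correction), depends on the full gradient vector rather than its noise trace, and has no sign or magnitude control under the assumptions of \Cref{lm:gamma-decay}; likewise $\norm{\nabla\Loss(\vw_t;\Batch_t)}^2$ contributes the squared mean gradient, not just the noise trace. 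This is precisely why the paper states that ``the evolution of norm for momentum SGD is more complicated'' and retreats to the folklore $\eta/(1-\beta)$ heuristic rather than proving a momentum analogue of \Cref{lm:evolution,lm:gamma-decay}. To make Part (ii) rigorous you would need an additional hypothesis (e.g.\ that the velocity decorrelates from the instantaneous gradient on the $1/(1-\beta)$ timescale, or that the mean gradient is negligible against the noise), and that hypothesis should be stated explicitly as part of the conditional theorem rather than absorbed into a ``lower-order terms'' remark.
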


\subsection{Adam}

\begin{algorithm}  \label{alg:adam}
	\begin{algorithmic}[1]
		\STATE{\textbf{given} $\alpha = 0.001, \beta_1 = 0.9, \beta_2 =0.999, \epsilon = 10^{-8}, \lambda\in \R$} \label{adam-Given}
		\STATE{\textbf{initialize} time step $t \leftarrow 0$, parameter vector $\bm{\theta}_{t=0} \in \R^n$,  first moment vector $\vm_{t=0} \leftarrow \bm{0}$, second moment vector  $\vv_{t=0} \leftarrow \bm{0}$, schedule multiplier $\eta_{t=0} \in \R$}
		\REPEAT
		\STATE{$t \leftarrow t + 1$}
		\STATE{$\nabla f_t(\bm{\theta}_{t-1}) \leftarrow  \text{SelectBatch}(\bm{\theta}_{t-1})$}  \COMMENT{select batch and return the corresponding gradient}
		\STATE{$\vg_t \leftarrow \nabla f_t(\bm{\theta}_{t-1})$  \adam{+ \lambda\bm{\theta}_{t-1}}}
		\STATE{$\vm_t \leftarrow \beta_1 \vm_{t-1} + (1 - \beta_1) \vg_t $} \label{adam-mom1} \COMMENT{here and below all operations are element-wise}
		\STATE{$\vv_t \leftarrow \beta_2 \vv_{t-1} + (1 - \beta_2) \vg^2_t $} \label{adam-mom2}
		\STATE{$\hat{\vm}_t \leftarrow \vm_t/(1 - \beta_1^t) $} \COMMENT{$\beta_1$ is taken to the power of $t$} \label{adam-corr1}
		\STATE{$\hat{\vv_t} \leftarrow \vv_t/(1 - \beta_2^t) $} \COMMENT{$\beta_2$ is taken to the power of $t$} \label{adam-corr2}
		\STATE{$\eta_t \leftarrow \text{SetScheduleMultiplier}(t)$}\COMMENT{can be fixed, decay, or also be used for warm restarts}
		\STATE{$\bm{\theta}_t \leftarrow \bm{\theta}_{t-1} - \eta_t \left(\alpha\hat{\vm}_t / (\sqrt{\hat{\vv}_t} + \epsilon) \ouradam{+ \lambda\bm{\theta}_{t-1}} \right)$} \label{adam-xupdate}
		\UNTIL{ \textit{stopping criterion is met} }
		\RETURN{optimized parameters $\bm{\theta}_t$}
	\end{algorithmic}
	\caption{\scriptsize \adamtext{Adam with L$_2$ regularization} and \ouradamtext{Adam with decoupled weight decay (AdamW)} [Copied from \citep{loshchilov2018decoupled} ]}
\end{algorithm}

\myparagraph{Connection to AdamW:} \citep{loshchilov2018decoupled} found that using the parametrization of $\lambda_e = \eta\lambda$  achieves better generalization and a more separable hyper-parameter search space for SGD and Adam, which are named SGDW and AdamW respectively. So far we have justified the role of intrinsic LR for SGD(W). The theorem below shows that the notion of the intrinsic LR also holds for AdamW, while the learning rate has no more power than initialization scale.

\begin{theorem}
	For fixed scale-invariant losses $\{f_t(\vw)\}_{t=1}$, constant schedule multiplier $\eta_t\equiv \eta$ and $\eps=0$, multiplying the initial weight $\vW_0$ and the learning rate $\alpha$ by the same constant $C$ would not change the trajectory of AdamW (\Cref{alg:adam}) in function space.
\end{theorem}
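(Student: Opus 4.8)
The plan is a single induction showing that the rescaling $(\vW_0,\alpha)\mapsto(C\vW_0,C\alpha)$ with $C>0$ (the constant $C$ is a learning-rate multiplier, hence positive) propagates through every line of \Cref{alg:adam} as a homogeneous scaling of \emph{all} state variables, so that the parameter iterates are exactly multiplied by $C$ at every step while the loss, being scale-invariant, is unchanged. Let $(\bm{\theta}_t,\vm_t,\vv_t)$ denote the AdamW iterates produced from $\bm{\theta}_0$ with learning rate $\alpha$, and $(\tilde{\bm{\theta}}_t,\tilde{\vm}_t,\tilde{\vv}_t)$ those produced from $C\bm{\theta}_0$ with learning rate $C\alpha$; the constant schedule multiplier $\eta$ is the same in both runs. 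I claim the invariant
\[
	\tilde{\bm{\theta}}_t = C\,\bm{\theta}_t,\qquad \tilde{\vm}_t = C^{-1}\vm_t,\qquad \tilde{\vv}_t = C^{-2}\vv_t
\]
holds for every $t\ge 0$. The base case $t=0$ is immediate, since $\tilde{\bm{\theta}}_0 = C\bm{\theta}_0$ by hypothesis and $\tilde{\vm}_0=\vm_0=\bm{0}$, $\tilde{\vv}_0=\vv_0=\bm{0}$.

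For the inductive step, assume the invariant at time $t-1$. Because $f_t$ is scale-invariant, \Cref{lem:inv-grad-hess} gives $\nabla f_t(C\bm{\theta}_{t-1}) = C^{-1}\nabla f_t(\bm{\theta}_{t-1})$; since AdamW puts the weight decay in the update (not in the gradient), the vector $\vg_t$ used on the gradient line scales as $\tilde{\vg}_t = C^{-1}\vg_t$. The first-moment recursion is linear in $\vg_t$, so $\tilde{\vm}_t = \beta_1\tilde{\vm}_{t-1} + (1-\beta_1)\tilde{\vg}_t = C^{-1}\vm_t$; the second-moment recursion is linear in the coordinate-wise squares $\vg_t^2$, which scale by $C^{-2}$, giving $\tilde{\vv}_t = C^{-2}\vv_t$. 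The bias-correction steps divide by the $C$-independent scalars $1-\beta_1^t$ and $1-\beta_2^t$, so the bias-corrected moments inherit the same $C^{-1}$ and $C^{-2}$ scalings. Finally, on the parameter-update line, with $\epsilon=0$ the adaptive term is the coordinate-wise ratio $\alpha\,\hat{\vm}_t/\sqrt{\hat{\vv}_t}$; under the rescaling it becomes $(C\alpha)\bigl(C^{-1}\hat{\vm}_t\bigr)/\sqrt{C^{-2}\hat{\vv}_t} = C\cdot\alpha\,\hat{\vm}_t/\sqrt{\hat{\vv}_t}$, where $\sqrt{C^{-2}\,}=C^{-1}$ uses $C>0$, and the decoupled weight-decay term scales as $\lambda\,\tilde{\bm{\theta}}_{t-1} = C\lambda\,\bm{\theta}_{t-1}$. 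Hence the whole increment is multiplied by $C$, so $\tilde{\bm{\theta}}_t = C\bm{\theta}_t$, closing the induction. Since the network output $F(\cdot;\vx)$ and each $f_t$ are scale-invariant, $F(\tilde{\bm{\theta}}_t;\vx) = F(C\bm{\theta}_t;\vx) = F(\bm{\theta}_t;\vx)$ for all $\vx$ and all $t$, i.e.\ the two runs coincide in function space.

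The argument is essentially bookkeeping, and the three points I would be most careful about are the following. First, the $\epsilon=0$ hypothesis is genuinely necessary: a fixed additive $\epsilon$ in the denominator breaks the homogeneity of the adaptive term, which is exactly why the theorem assumes it. Second, the exponents $-1$ and $-2$ attached to $\vm$ and $\vv$ must be propagated consistently through the coordinate-wise squaring and square root, and one should note that every operation in the loop acts coordinate-wise, so there is no cross-coordinate interaction that could spoil the scaling. Third, coordinates where $\hat{\vv}_t$ vanishes produce a $0/0$ in the update; there $\hat{\vm}_t$ vanishes as well, and under the convention $0/0=0$ (or, equivalently, restricting attention to trajectories on which the updates are defined) the induction is unaffected, since $0/0=0$ is itself scale-invariant. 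None of these requires substantive work, so I expect the ``hard part'' to be merely stating the invariant on $(\bm{\theta}_t,\vm_t,\vv_t)$ correctly rather than proving it.
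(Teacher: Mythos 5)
Your proof is correct and follows essentially the same route as the paper's: an induction showing the gradients scale by $C^{-1}$ under scale-invariance, hence $\hat{\vm}_t/\sqrt{\hat{\vv}_t}$ is unchanged and the full update (including the decoupled weight-decay term) scales by $C$, so $\tilde{\bm{\theta}}_t = C\bm{\theta}_t$ and the function-space trajectory is identical. Your version is slightly more explicit in tracking the $C^{-1}$ and $C^{-2}$ scalings of the moment vectors and in flagging the $\epsilon=0$ and $0/0$ caveats, but the argument is the same.
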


\begin{remark}
The notation of LR is slightly different in \citep{loshchilov2018decoupled} than in the main paper, where $alpha$ is LR and $\eta_t$ is the \textsc{Schedule Multiplier}. By using schedule multiplier, AdamW\Cref{alg:adam} can decay LR and WD factor simultaneously. This notation is only used for the statement of the above theorem and its proof. 	
\end{remark}

\begin{proof}
	The proof is based on induction. It suffices to prove that for two history $\{\vtheta_t\}_{t=0}$ and $\{\vtheta'_t\}_{t=0}$ satisfying that $C\vtheta_t = \vtheta_t'$, for all $0\le t\le T$ and  some $C>0$, and evolving with $\alpha$ and $\alpha' = C\alpha$ respectively, the following holds:
	\[ C\vtheta_{T+1}=\vtheta'_{T+1}.\]
	
Note that by scale invariance of $f_t$, $\vg_t = C\vg'_t, \forall 1\le t\le T$, therefore by definition $\alpha\hat{\vm}_T / \sqrt{\hat{\vv}_T} = \alpha\hat{\vm'}_T / \sqrt{\hat{\vv}'_T}$, i.e. it's independent of scaling of the history. We now can conclude that 
\[ C\vtheta_{T+1} = C\vtheta_{T} -\eta C\alpha \hat{\vm}_T / \sqrt{\hat{\vv}_T} -\eta C\vtheta_{T} = \vtheta'_{T} -\eta \alpha' \hat{\vm}'_T / \sqrt{\hat{\vv}'_T} -\eta \vtheta'_{T}= \vtheta_{T+1},\]
which completes the proof.
\end{proof}

\section{Discussion on the Benefit of Early Large Intrinsic LR}\label{appsec:benefit_early_large_lr}

Fast equilibrium conjecture says that the equilibrium can be reached in $O(1/\lambda_e)$ steps for all reasonable initializations. Indeed, \Cref{eqn:norm_mixing2} indicates that there is also a logarithmic dependency on $\frac{\gamma_t}{\gamma_0}$, i.e., if the initial effective LR is far from the effective LR at equilibrium, then the mixing time can be larger by a multiplicative constant compared to good initial effective LRs. Below we show this constant improvement coming from a good initialization matters a lot for real life training (meaning the training budget is limited), and the usage of initial large intrinsic LR helps SGD to reach a better initialization for the final phase, and thus allow faster mixing to the final equilibrium.

\paragraph{The benefit of early large intrinsic learning rates.} In this section we give experimental evidence that how the fast equilibrium conjecture led by BatchNorm    + WD makes the Step Decay training schedule robust to various different initialization methods. In detail, we compare the following 4 types of initialization: Neural Tangent Kernel (NTK) initialization~\citep{jacot2018ntk,arora2019exact}, Kaiming initialization~\citep{he2015delving} , Kaiming initialization multiplied by 1000 and  Kaiming initialization multiplied by 0.001. In \Cref{fig:unfavorable_init} we show that the initial large (intrinsic) learning rate in Step Decay is very necessary to ensure SGD reach the equilibrium of small (intrinsic) LR within the normal training budget, and thus achieving good test accuracy.

\paragraph{Comparison between the above four initialization.} Briefly speaking, these methods are quite similar as they all initialize each parameter by i.i.d. Gaussian, and the only difference is the variance of the gaussian distribution in each layer. For Kaiming initialization, the variance is roughly $\frac{1}{N}$, and for NTK initialization, the variance is always $O(1)$ but there is an additional multiplier of $O(\frac{1}{\sqrt{N}})$ per layer, where $N$ is the number of the input channels/neurons that layer. \footnote{Strictly speaking, NTK initialization is a re-parametrization of Kaiming initialization,  rather than a different initialization method, as the additional multiplier indeed changes the architecture.}
Note that the NTK initialization and Kaiming initialization are always the same in function space. Due the scale invariance led by BatchNorm, all the scaled version of Kaiming initialization are the same as the original Kaiming initialization in function space. 

\begin{figure}[!htbp]
	\centering
	\begin{subfigure}[b]{1.0\textwidth}
		\centering
		\includegraphics[width=\linewidth]{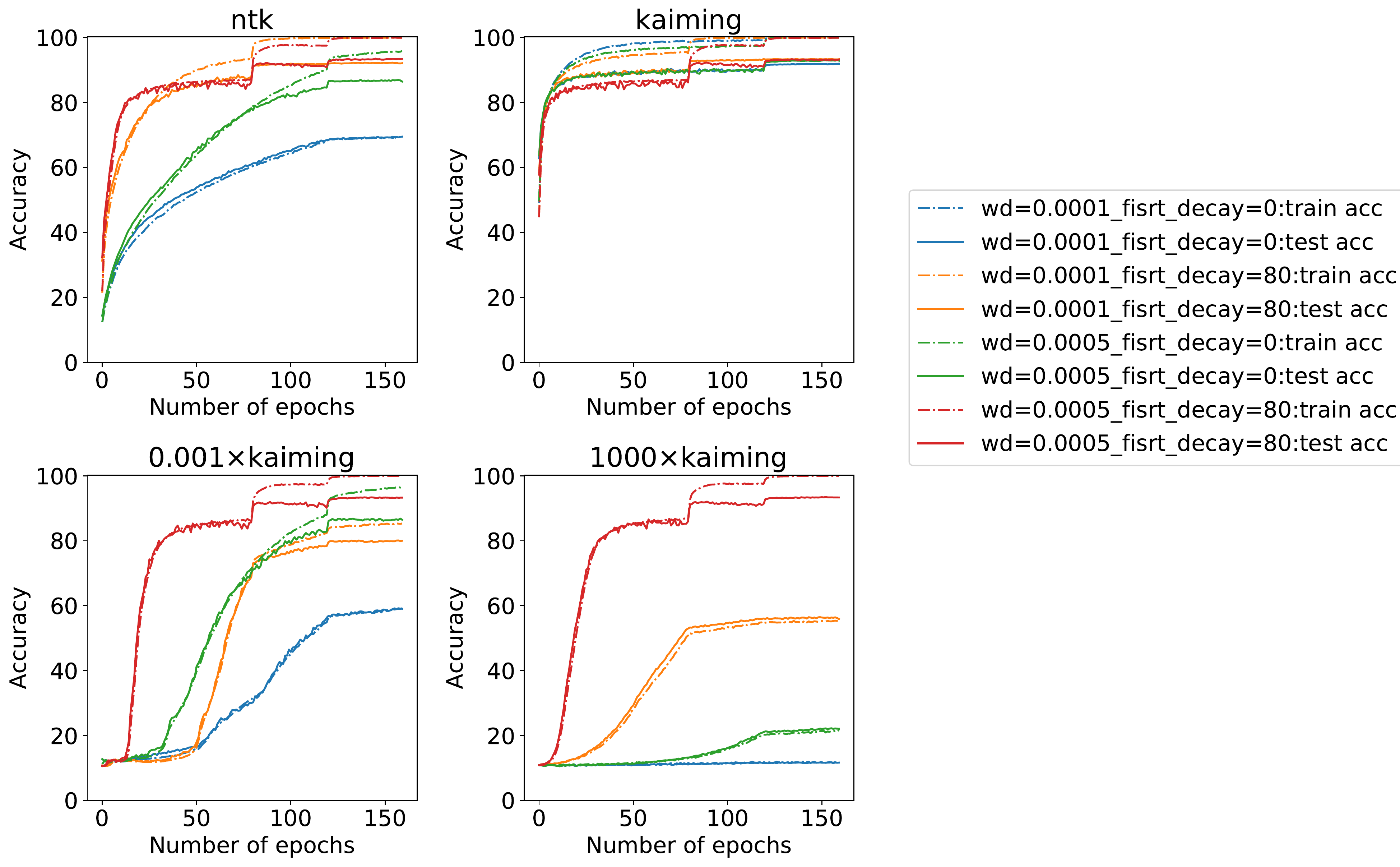}
		\caption{Train/test Accuracy}
	\end{subfigure}%
	
	\begin{subfigure}[b]{1.0\textwidth}
		\centering
		\includegraphics[width=\linewidth]{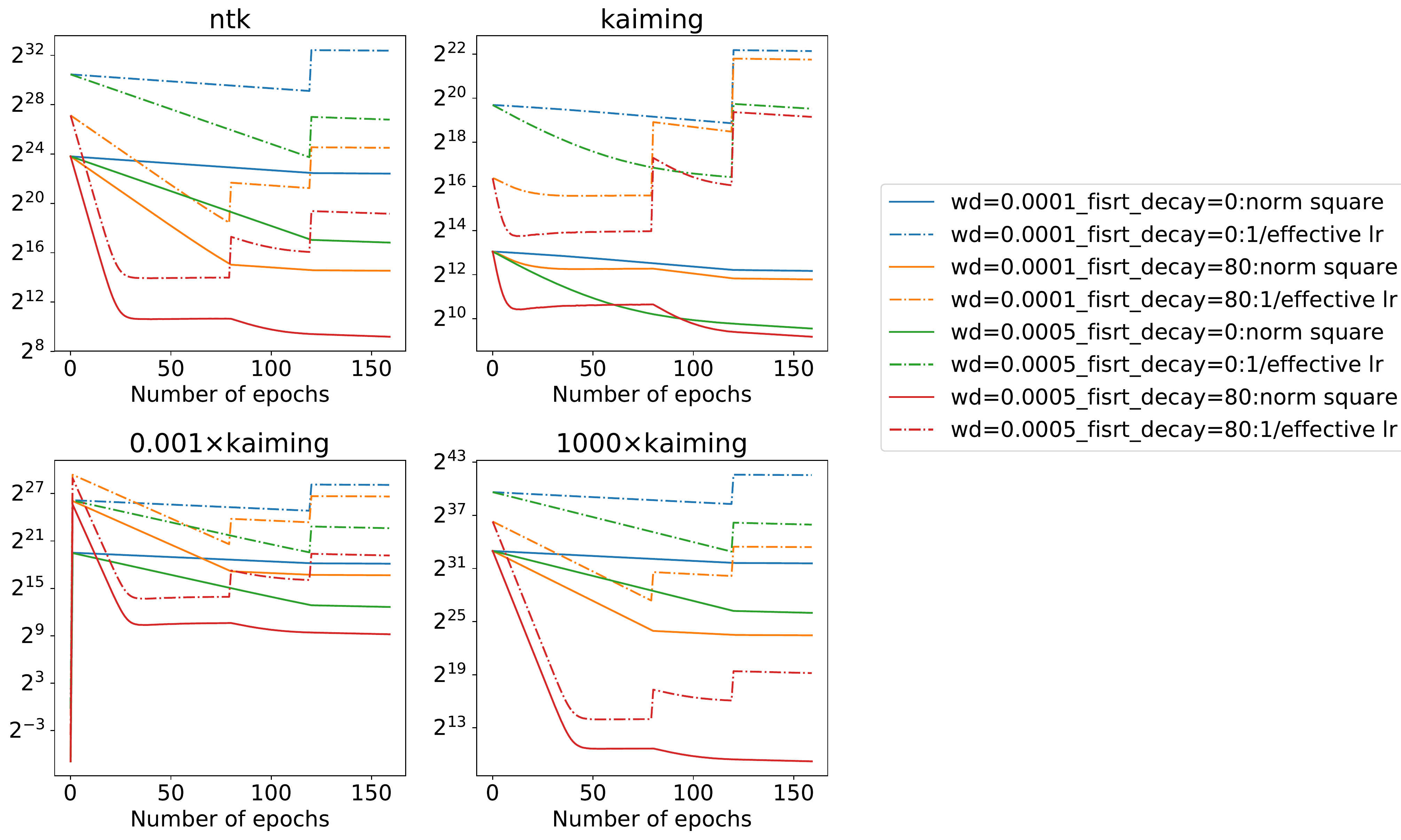}
		\caption{ Norm and effective LR}
	\end{subfigure}%
	\caption{The large initial intrinsic LR (as well as large WD factor) helps achieve high test accuracy within normal training budget consistently for different initialization methods. The training curve and convergence time to equilibrium for large initial LR is robust even to the extreme small/large initializations. PreResNet32 trained by momentum SGD with initial LR $0.1$ on CIAFR10 with 4 different initialization methods, 2 different WD values, and 2 different LR schedules. Each LR schedule divides its LR by 10 twice at epoch [80,120] (the normal schedule) or epoch [0,120] (meaning starting with a 10 times smaller LR, $0.01$). The red line and orange line performs much better than their counterparts (without initial large LR) when not using standard Kaiming Initialization. Still, the red line even outperforms orange line a lot when the initialization are extremely large or small, due to the effect of large intrinsic LR brought by large WD factor. This justifies the argument in \Cref{sec:theory} that the equilibrium of small intrinsic LR is much closer to that of large intrinsic LR, than some arbitrary random initialization. This is very clear from the view of norm convergence. See (b). }
	\label{fig:unfavorable_init}
\end{figure}

\paragraph{A Theoretical Analysis on Norm Convergence.} Although the convergence of norm is not equivalent to the convergence in function space, analysing the convergence of norm can provide insights into how large LR helps training. Now we theoretically analyse the effect of early large LR on the convergence rate of norm. We compare the following two processes with the same initial norm squared $G_0$:
\begin{enumerate}
	\item Train the neural net with LR $\eta$;
	\item Train the neural net with intrinsic LR $K\eta$, then decay it to $\eta$ after the norm converges.
\end{enumerate}
For simplicity, we consider the case that $\frac{\sigma^2}{2 \eta \lambda} = 1$, which means $\gamma_t$ in the first process eventually converges to $1 + O(\epsilon)$; other cases can be transformed to this case by re-scaling the initialization.

For the first process, $\gamma_t = G_0 / \eta^2$ initially. By~\Cref{lm:gamma-decay}, $\gamma_t$ converges to $1 + O(\epsilon)$ in
\[
	O\left(\frac{1}{\eta \lambda} \max\left\{\ln \frac{G_0}{\eta^2}, 1\right\} \right)
\]
time. For the second process, $\gamma_t = G_0 / (K^2 \eta^2)$ initially, and $\gamma_t$ first converges to $(1 + O(\epsilon)) \frac{1}{K}$ in $O\left(\frac{1}{K\eta \lambda} \max\left\{\ln \frac{G_0}{K\eta^2}, 1\right\} \right)$ time. After LR decay, $\gamma_t$ instantly becomes $(1 + O(\epsilon)) K$ as $\gamma_t$ is inversely proportional to LR squared. Then we only need another $O(\frac{1}{\eta \lambda} \ln K)$ time to make the effective LR converges again. Overall, the second process takes
\begin{equation}\label{eq:how_norm_change_two_phase}
	O\left(\frac{1}{K\eta \lambda} \max\left\{\ln \frac{G_0}{K\eta^2}, 1\right\} + \frac{1}{\eta \lambda} \ln K\right)
\end{equation}
time. Comparing the second process with the first process, we can see that the large initial LR reduces the dependence of convergence time on the initial norm. It is worth to note that $\frac{G_0}{\eta^2}$ is typically larger than $K$ (which equals to $10$) in~\Cref{fig:unfavorable_init}. Therefore, a large initial LR also leads to faster convergence time without tuning the initialization scale.

\myparagraph{Explanation for different convergence rates in \Cref{fig:unfavorable_init}:} The 4 settings about LR schedules and WD can be interpreted using \Cref{eq:how_norm_change_two_phase} as the choices of $(K,\lambda)$. Let $\eta=0.01$, $K=1$ means starting with $\eta=0.01$, while $K=10$ means starting with the default LR, $0.1$, which is $10$ times larger than $\eta$. For the rest 3 initializations other than kaiming initialization, from \Cref{fig:unfavorable_init}, we can see that the initial norm are all exponentially large\footnote{As discussed earlier, NTK initialization has larger weight norm. For 0.001 kaiming initialization, the reason is more subtle: the initial norm are indeed super small, thus leading to huge initial gradient, and therefore the norm grows quickly in the first few iterations.}, making $\ln \frac{G_0}{K\eta^2}$ a large constant. Thus the total steps of training has to be $\Omega(\frac{1}{K\eta \lambda} \ln \frac{G_0}{K\eta^2})$ for the effective learning rate to grow and the training to proceed. This could also be seen directly from the ratio of the slopes of the log norm square, which is ${\color{blue}1}:{\color{orange}10}:{\color{green}5}:{\color{red}50}$.
\section{Supplementary Figures and Tables for \Cref{sec:evidence}}

\subsection{Equilibrium is Independent of Initialization}

This subsection provides supplementary materials to justify that the equilibrium is independent of initialization.

\Cref{table:random_history} shows the LR and WD of each random schedule in \Cref{fig:random_hist}. \Cref{fig:random_hist_vgg16_cifar10} and \Cref{fig:random_hist_cifar100} are experiments in similar settings as \Cref{fig:random_hist} to show that the equilibrium is independent of initialization for VGG16 on CIFAR-100.

We also validate our claim in the case that we initialize the training with a single possible initial point $\vw_0$ in a similar setting as \Cref{fig:mnist}. That is, we first randomly sample a parameter from the distribution for random initialization, and use it to initialize CNNs in all the independent runs for estimating the equilibrium. \Cref{fig:mnist-tv-fixed} shows that CNNs still converge to the equilibrium even if the initial parameter $\vw_0$ is fixed to the same random sample.

\begin{table}[!htbp]
	{\scriptsize
		\centering
		\begin{tabular}{c||c |c|c|c|c|c|c}
			Epoch & 0 &  100 & 200 & 300 & 400 & 500 & 1000 \\
			\hline
			\blue{Schedule\_1}  &  - & LR$/4$ & LR$\times 4$  & LR$/4$ & LR$\times2$ & LR$\times2$ & LR$/10$,WD $=0$\\
			\orange{Schedule\_2} & - & - & -  & -& - & - & LR$/10$,WD $=0$\\
			\green{Schedule\_3} & - & LR$\times4$ & LR$/2$ &  LR$/2$ &  LR$/4$ &  LR$\times 4$ & LR$/10$,WD $=0$\\
			\red{Schedule\_4} & -  & LR,WD$\times4$ & LR,WD$/2$  & LR,WD$/2$ & LR,WD$/4$ & LR,WD$\times 4$ &LR$/10$,WD $=0$\\
			\purple{Schedule\_5} & LR$\times32$  & LR$/2$ & LR$/2$  &  LR$/2$ & LR$/2$  & LR$/2$ &LR$/10$,WD $=0$\\
			\hline
		\end{tabular}
		\vspace{0.1cm}
		\caption{LR/WD Schedules in \Cref{fig:random_hist}. All the schedules have the same initial LR $= 0.4$ and classic WD $=0.0005$. The batch size is 128 and momentum is turned off.}
		\label{table:random_history}
	}
\end{table}

\begin{figure}[htbp] \label{fig:mnist-fixed}
	\centering
	\includegraphics[scale=0.4]{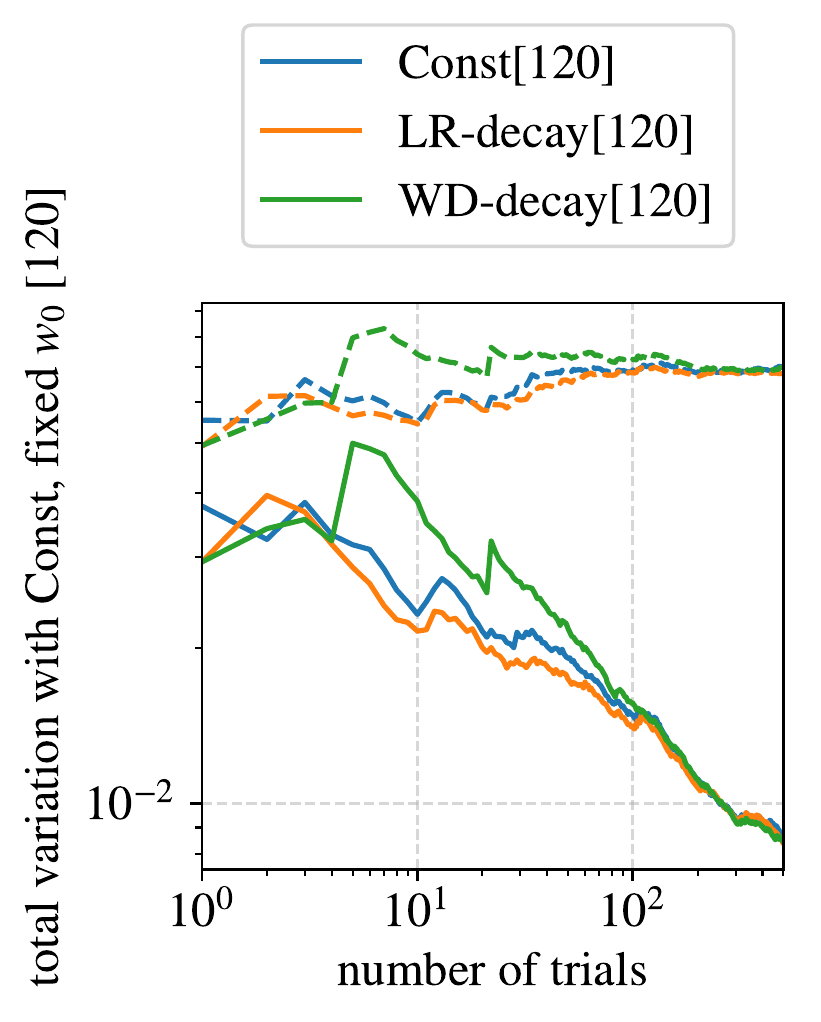}
	\includegraphics[scale=0.4]{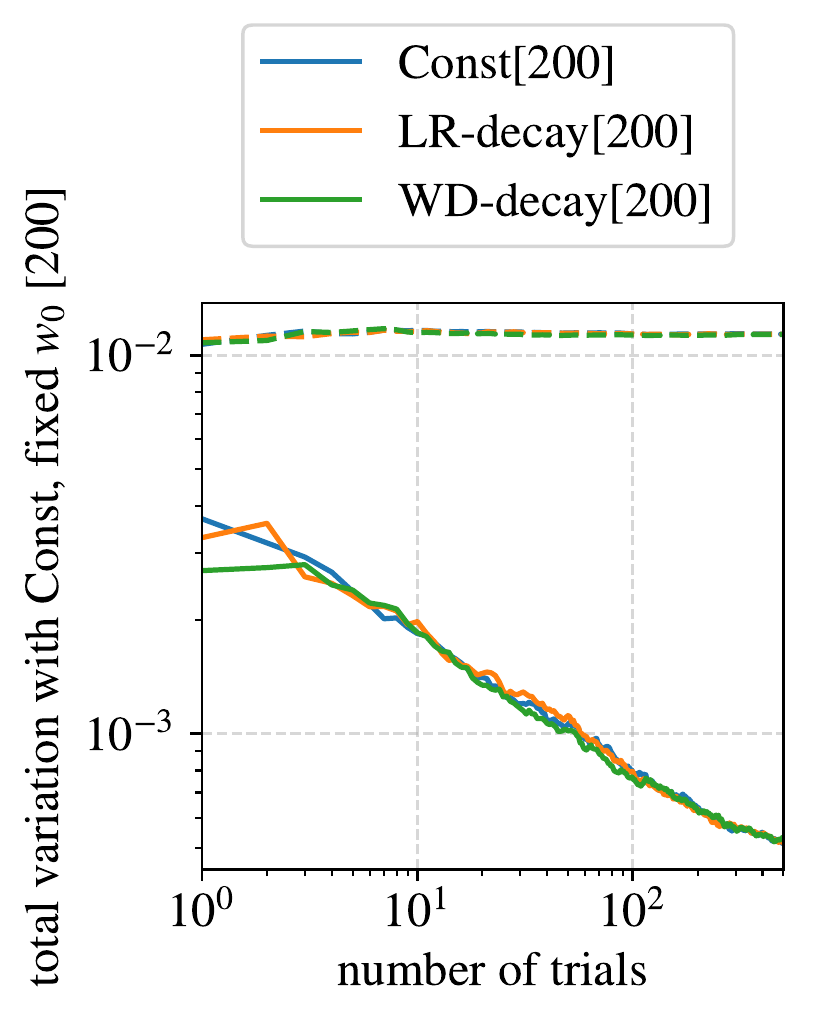}
	\caption{\footnotesize CNNs trained on MNIST converge to the equilibrium even if the initial parameter $\vw_0$ is fixed to some random sample. We estimate the total variation between the empirical distribution of the predictions on test images for neural nets trained with schedule \texttt{Const} with fixed $\vw_0$ and other schedules for 120/200 epochs (solid lines). The estimated value decreases with the number of trials. The dashed lines are the sum of averaged test errors of each pair of training processes which can be seen as baselines.} \label{fig:mnist-tv-fixed}
\end{figure}

\subsection{Equilibrium Can be Reached in $O(1/\lambda\eta)$ Steps}

In this subsection we provide more experimental evidence that the mixing time to equilibrium in function space scales to $\frac{1}{\eta\lambda}$. Note in \eqref{eqn:norm_mixing2}, the convergence of norm also depends on the initial value. Thus in order to reduce the effect of initialization on the time towards equilibrium, we use the setting of Figure 3 in \citep{li19learningrate}, where we first let the networks with the same architecture reach the equilibrium of different intrinsic LRs, and we decay the LR by $10$ and multiplying the WD factor by $10$ simultaneously. In this way the intrinsic LR is not changed and the equilibrium is still the same. However, the effective LR is perturbed far away from the equilibrium, i.e. multiplied by $0.1$. And we measure how long does it takes SGD to recover the network back to the equilibrium and we find it to be almost linear in $1/\lambda\eta$.

 \begin{figure}[!htbp]
 	\centering
 	\begin{subfigure}[b]{0.33\textwidth}
 		\centering
 		\includegraphics[width=\linewidth]{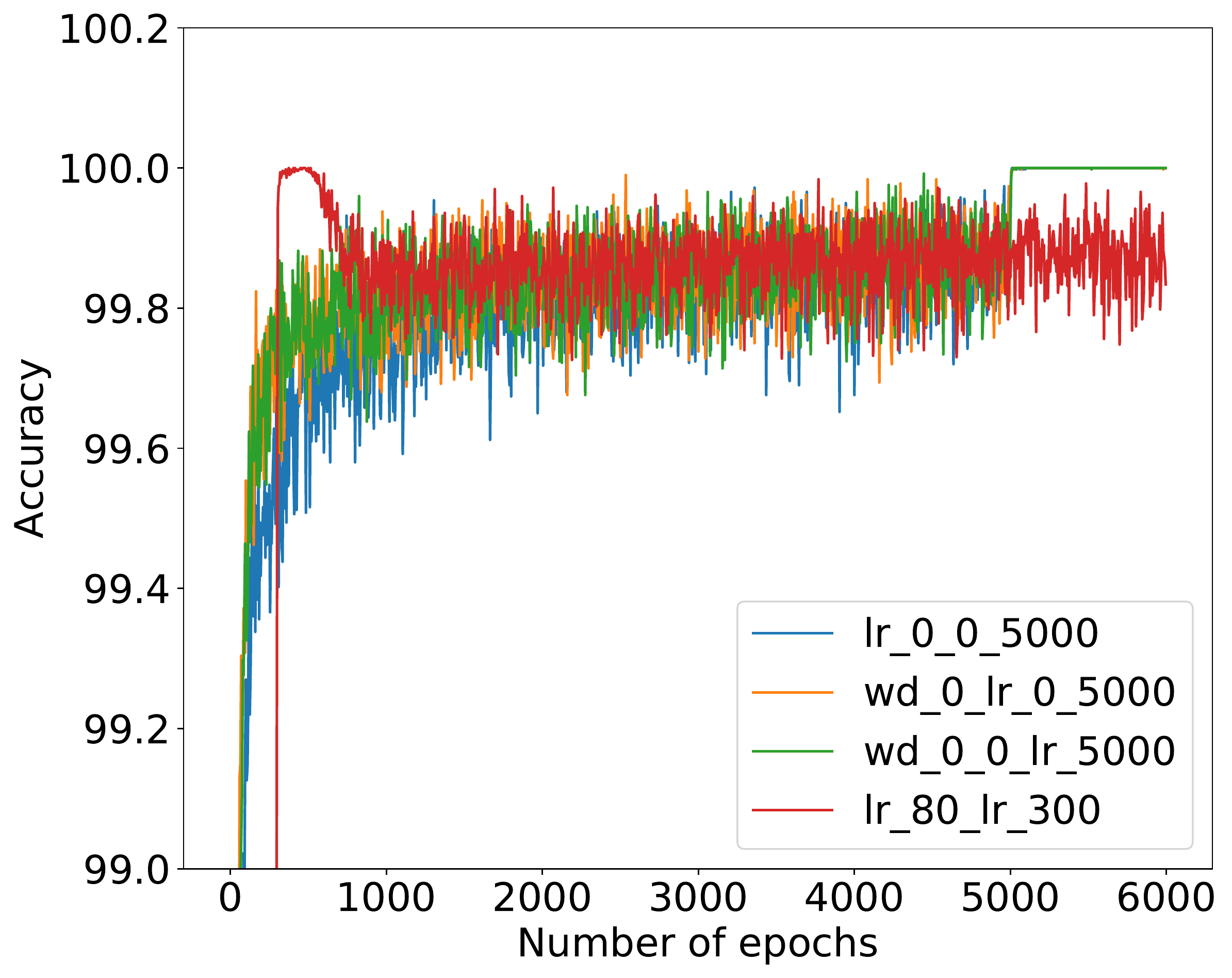}
 		\caption{Train accuracy}
 	\end{subfigure}%
 	\begin{subfigure}[b]{0.33\textwidth}
 		\centering
 		\includegraphics[width=\linewidth]{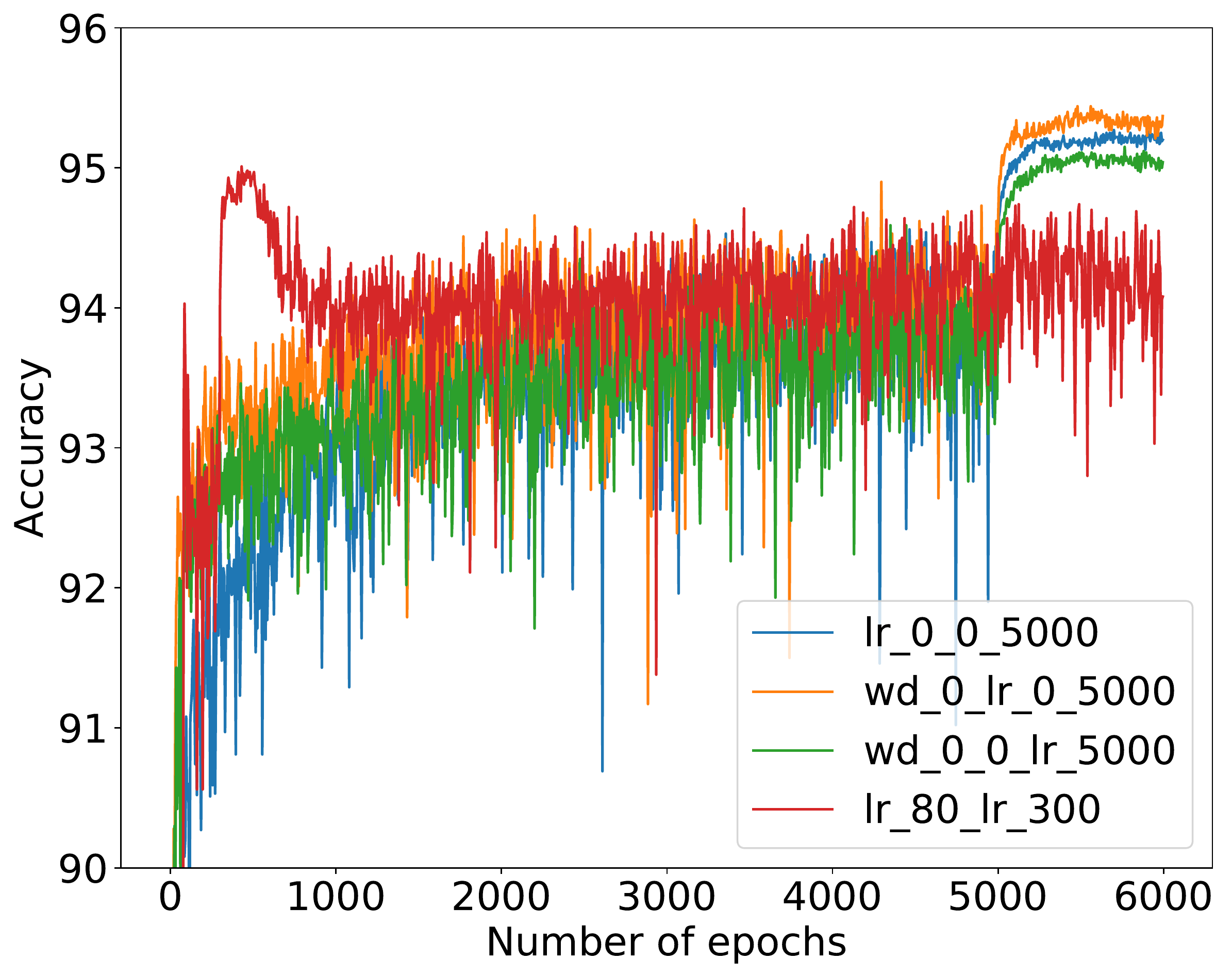}
 		\caption{Test accuracy}
 	\end{subfigure}%
 	\begin{subfigure}[b]{0.33\textwidth}
 		\centering
 		\includegraphics[width=\linewidth]{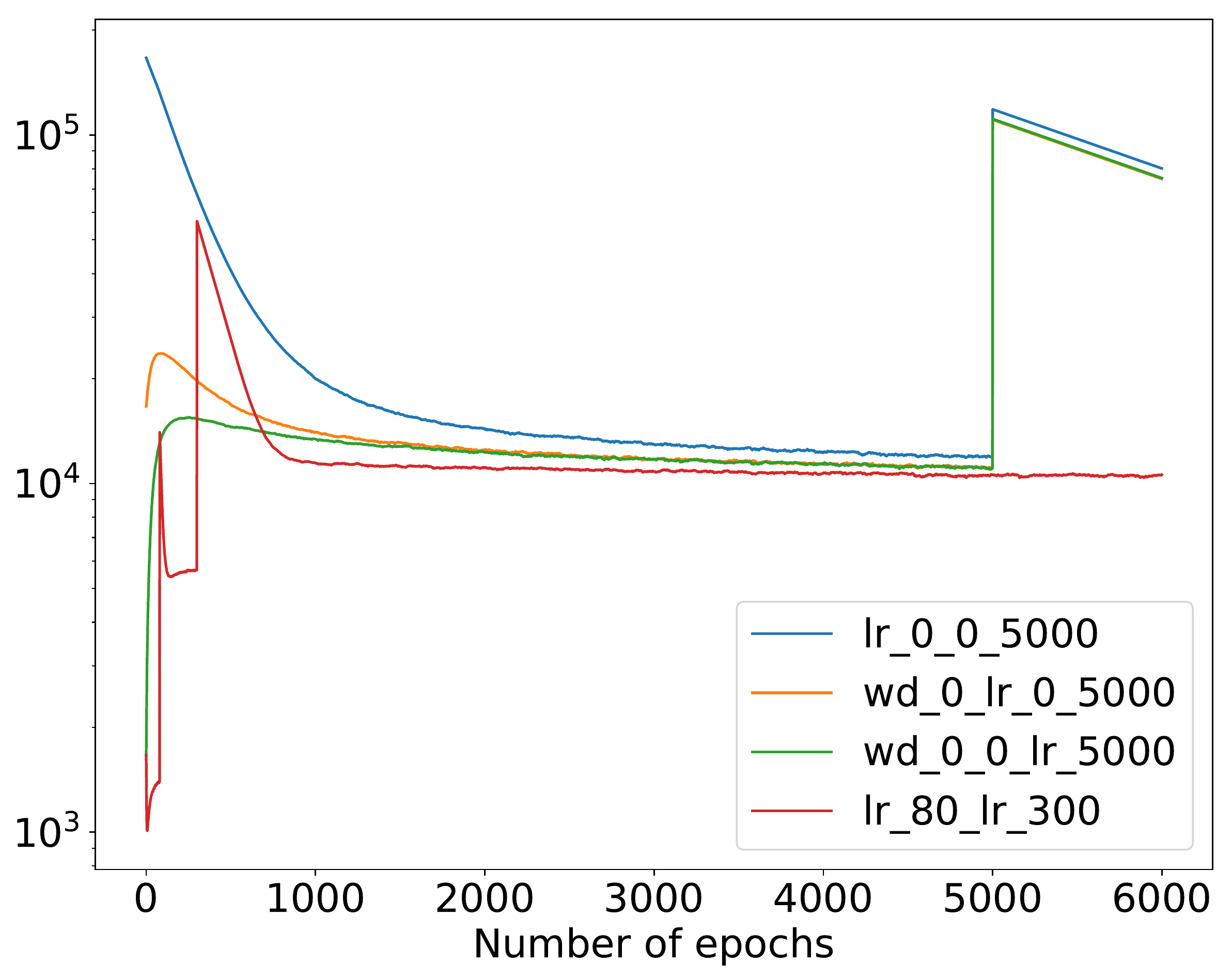}
 		\caption{Effective LR, $\gamma_t^{-1/2}$}
 		\label{fig:small_lr_sota_norm}
 	\end{subfigure}%
 	\caption{\small  Achieving SOTA test accuracy by $0.9$-momentum SGD with small learning rates (the blue line). The initial learning rate is 0.1, initial WD factor is 0.0005. The label \texttt{wd\_x\_y\_lr\_z\_u} means dividing WD factor by 10 at epoch $x$ and $y$, and dividing LR by 10 at epoch $z$ and $u$. For example, the blue line means dividing LR by 10 twice at epoch 0, i.e. using an initial LR of 0.01 and dividing LR by 10 at epoch 5000. The red line is baseline. }
 	\label{fig:small_lr_sgd_sota}
 \end{figure}
 
\begin{figure}[!htbp]
	\centering
	\begin{subfigure}[b]{0.75\textwidth}
		\centering
		\includegraphics[width=\linewidth]{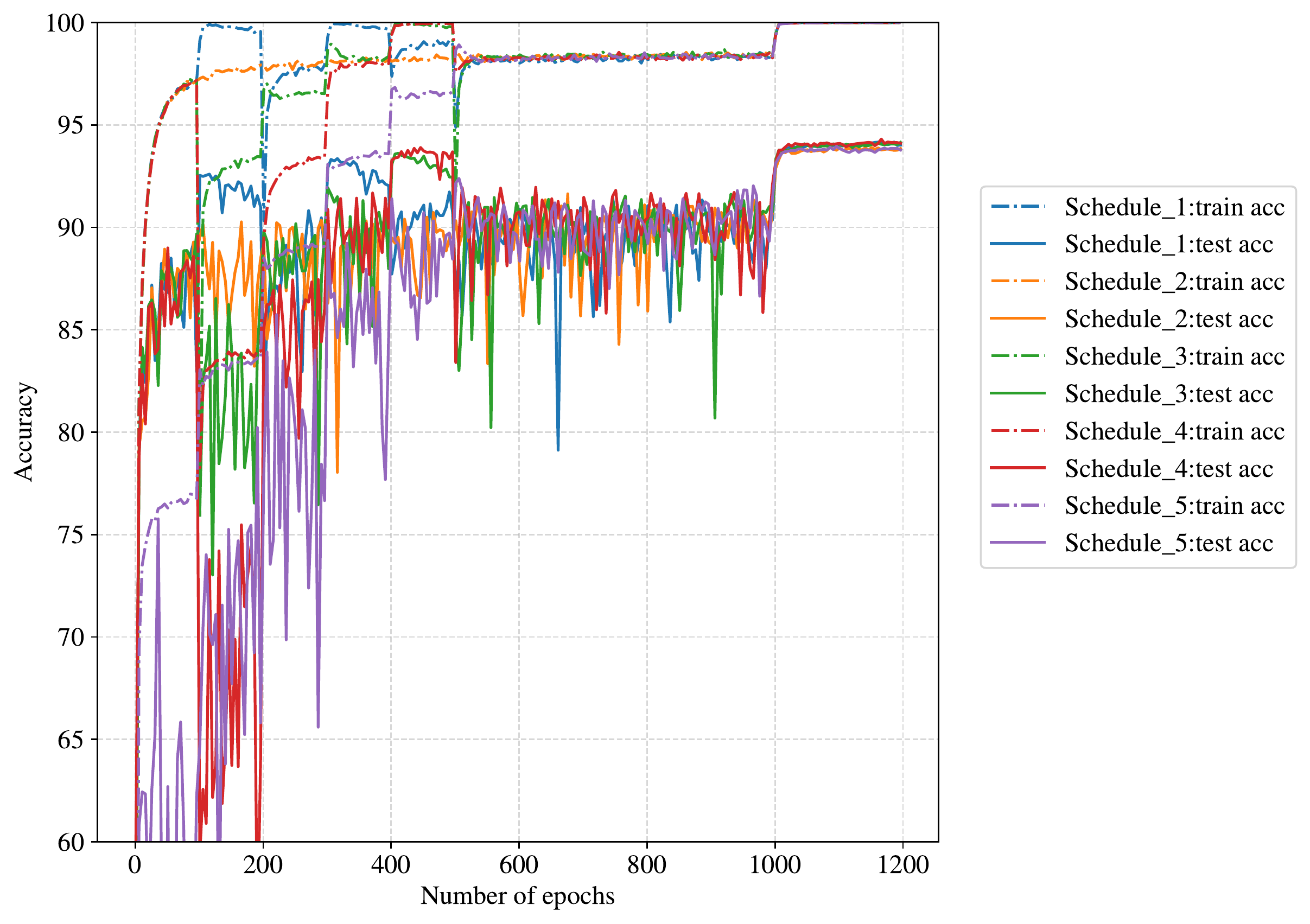}
		\caption{Train/test accuracy}
	\end{subfigure}%
	
	\begin{subfigure}[b]{0.75\textwidth}
		\centering
		\includegraphics[width=\linewidth]{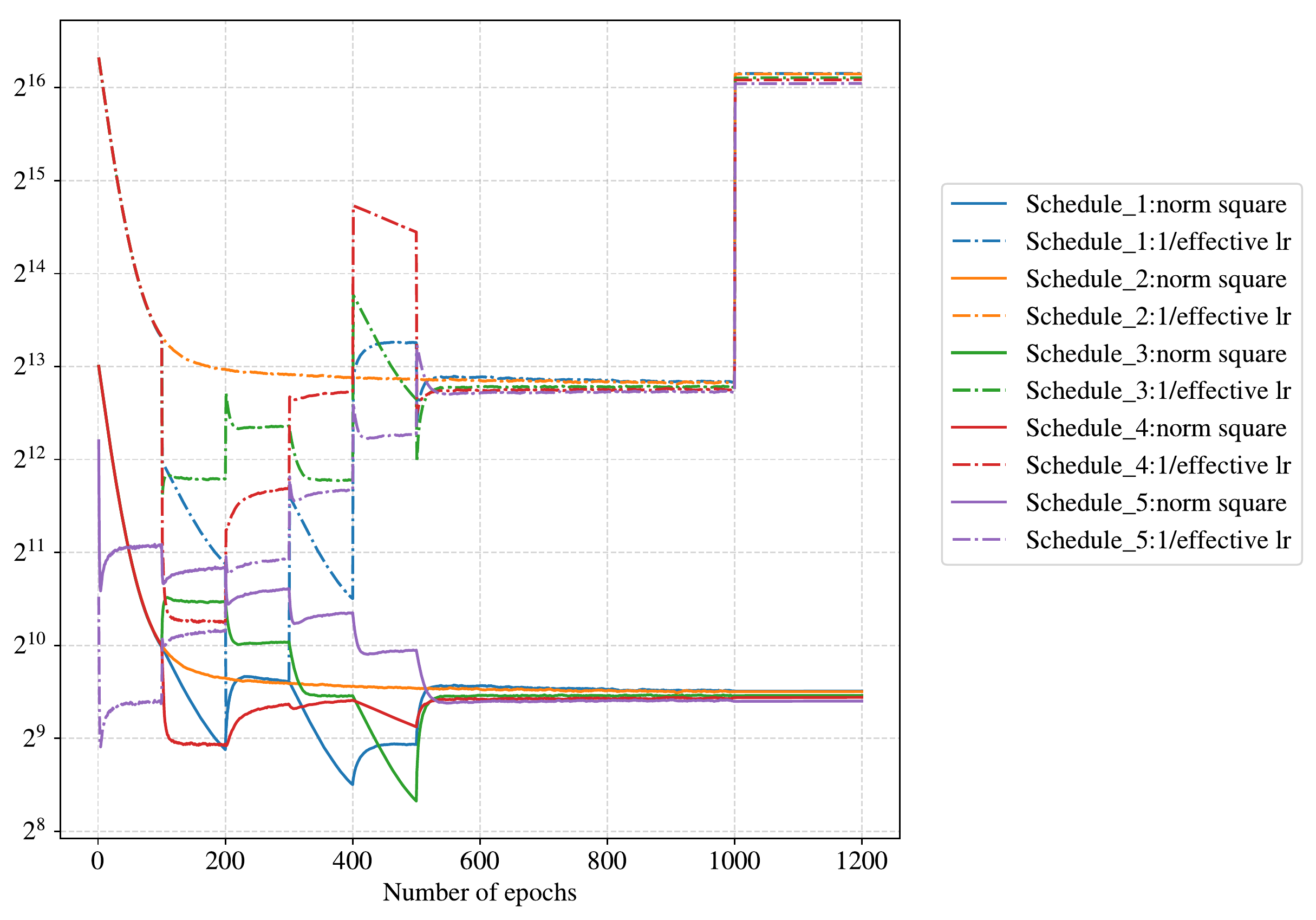}
		\caption{Norm and effective LR}
	\end{subfigure}%
	\caption{VGG16 trained by SGD on CIAFR10 with 5 random LR/WD schedules in \Cref{table:random_history}, same as that in \Cref{fig:random_hist}. These different trajectories exhibit similar test/train accuracy,  norm and effective LR after switching to the same intrinsic LR at epoch 500. Moreover, they achieve the same best test accuracy ($\sim 94\%$) after decaying LR and removing WD at epoch 1000. This again supports the conjecture that the equilibrium is independent of initialization.}
	\label{fig:random_hist_vgg16_cifar10}
\end{figure}

\begin{figure}[!htbp]
	\centering
	\begin{subfigure}[b]{0.8\textwidth}
		\centering
		\includegraphics[width=\linewidth]{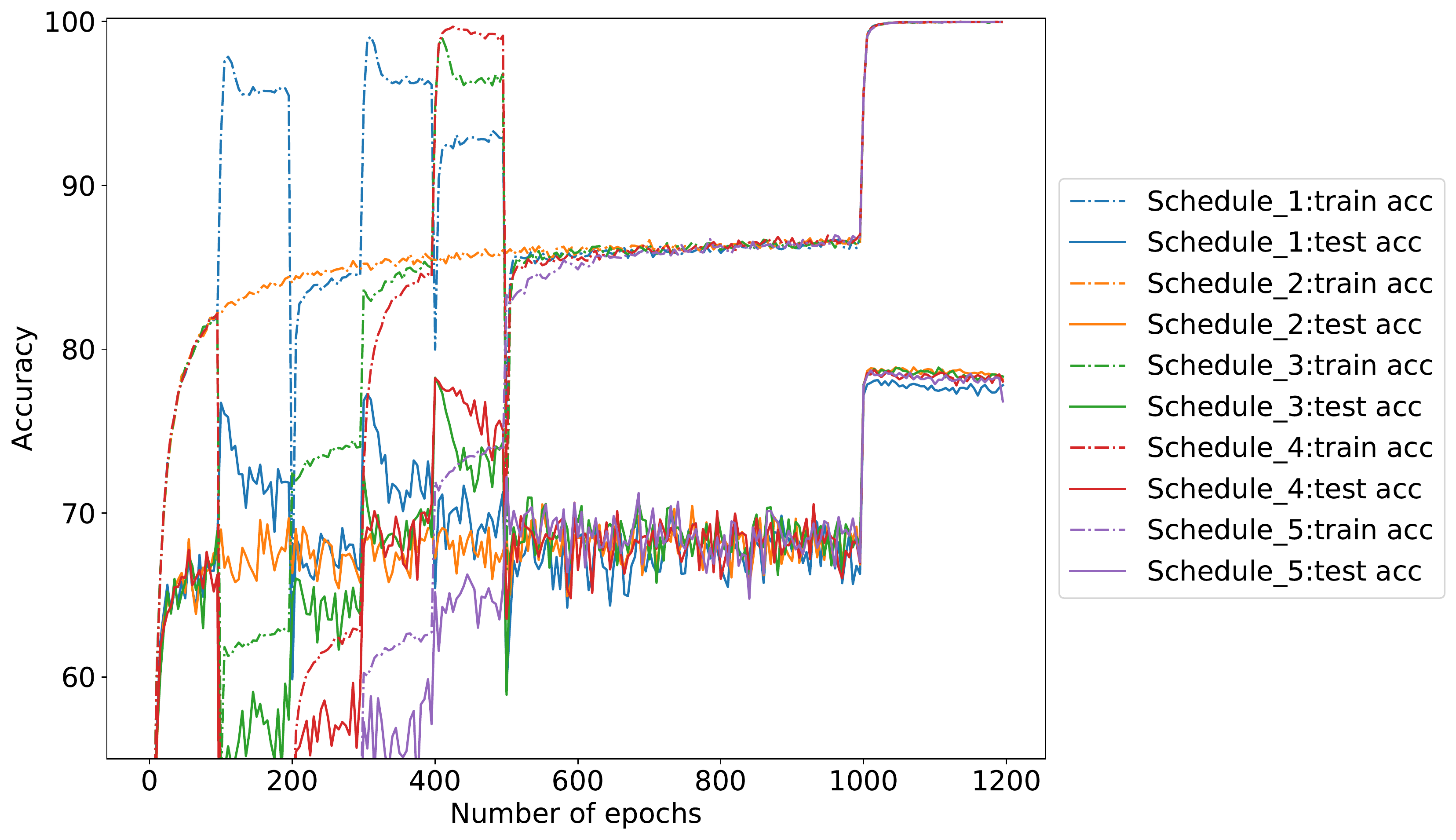}
		\caption{Train/test accuracy}
	\end{subfigure}%
	
	\begin{subfigure}[b]{0.8\textwidth}
		\centering
		\includegraphics[width=\linewidth]{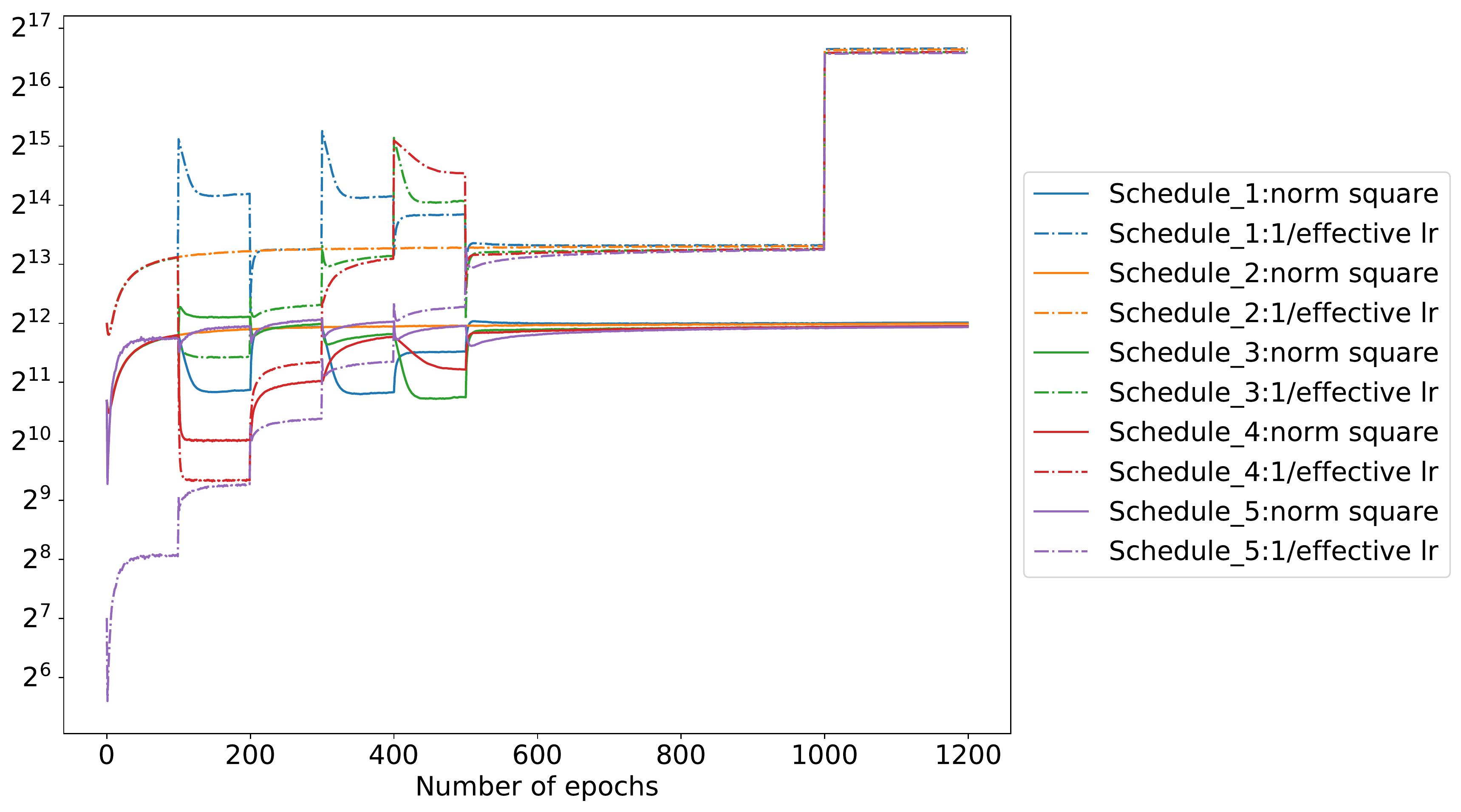}
		\caption{Norm and effective LR}
	\end{subfigure}%
	\caption{PreResNet32 trained by SGD on CIAFR100 with 5 random LR/WD schedules in \Cref{table:random_history}, same as that in \Cref{fig:random_hist}. These different trajectories exhibit similar test/train accuracy,  norm and effective LR after switching to the same intrinsic LR at epoch 500. Moreover, they achieve the same best test accuracy ($\sim 78\%$) after decaying LR and removing WD at epoch 1000, thus supporting the conjecture that the equilibrium is independent of initialization.}
	\label{fig:random_hist_cifar100}
\end{figure}

\end{document}